\begin{document}

\title{
Differential Similarity in Higher Dimensional Spaces:\\
Theory and Applications}

\author{\name L.\ Thorne McCarty \email mccarty@cs.rutgers.edu \\
       \addr Department of Computer Science\\
       Rutgers, The State University of New Jersey\\
       New Brunswick, NJ 08901-8554, USA}



\maketitle


\begin{abstract}%
This paper presents an extension and an elaboration of the \emph{theory of differential similarity}, which was originally proposed in \citep{CCCS_arXiv, CCCS_AMAI}.   The goal is to develop an algorithm for \emph{clustering} and \emph{coding} that combines a geometric model with a probabilistic model in a principled way.  For simplicity, the geometric model in the earlier paper was restricted to the three-dimensional case. The present paper removes this restriction, and considers the full $n$-dimensional case.  Although the mathematical model is the same, the strategies for computing solutions in the $n$-dimensional case are different, and one of the main purposes of this paper is to develop and analyze these strategies.  Another main purpose is to devise techniques for estimating the parameters of the model from sample data, again in $n$ dimensions.  We evaluate the solution strategies and the estimation techniques by applying them to two familiar real-world examples:  the classical MNIST dataset and the CIFAR-10 dataset. 
\end{abstract}

\begin{keywords}
clustering, prototype coding, manifold learning, dimensionality reduction, dissimilarity metric
\end{keywords}

\section{Introduction.}
\label{Intro}
 
This paper presents an extension and an elaboration of the \emph{theory of differential similarity}, which was originally proposed in \citep{CCCS_arXiv, CCCS_AMAI}.   The goal is to develop an algorithm for \emph{clustering} and \emph{coding} that combines a geometric model with a probabilistic model in a principled way. The geometric model is a Riemannian manifold with a Riemannian metric, ${g}_{ij}({\bf x})$, which is interpreted as a measure of \emph{dissimilarity}.  The probabilistic model consists of a stochastic process with an invariant probability measure that matches the density of the sample input data.  The link between the two models is a potential function, $U({\bf x})$, and its gradient, $\nabla U({\bf x})$.  Since the components of the gradient appear in the definition of the dissimilarity metric, the measure of dissimilarity will depend on the probability measure.  Roughly speaking, the dissimilarity will be small in a region in which the probability density is high, and vice versa.  Finally, the dissimilarity metric is used to define a coordinate system on the embedded Riemannian manifold, which leads to an ``optimal'' lower dimensional encoding of the original data.

For simplicity, the geometric model in \citep{CCCS_AMAI} was restricted to the three-dimensional case, and this was the main deficiency of the theory.  The present paper removes this restriction, and considers the full $n$-dimensional case.  Although the mathematical model is the same, the strategies for computing solutions in the $n$-dimensional case are different, and one of the main purposes of this paper is to develop and analyze these strategies.  Another main purpose is to devise techniques for estimating the parameters of the model from sample data, again in $n$ dimensions.  We evaluate the solution strategies and the estimation techniques by applying them to two familiar real-world examples:  the classical MNIST dataset \citep{LeCun_etal_1998} and the CIFAR-10 dataset  \citep{Krizhevsky:2009}. 

In the broadest terms, this work is an exploration of the \emph{manifold hypothesis} in deep learning \citep{nipsRifaiDVBM11, Bengio_etal:2013, Fefferman_etal:2016}.  It is a common observation that real-world data in high-dimensional spaces tends to be concentrated on low-dimensional nonlinear subspaces, and this phenomenon seems to contribute to the success of deep neural networks in image recognition, speech recognition, and other similar tasks.  

There are now quite a few algorithms for manifold learning: \citep{Scholkopf_etal:1998, TippingBishop1999, Roweis2000, TenenbaumEtAl2000, belkin2003laplacian, Brand2002, DonohoGrimes2003, HintonRoweis2002, Zhang&Zha:2004, Coifman&Lafon:2006, Weinberger&Saul:2006, Chen_etal.MFA.2010, Chen&Zhang&Fleischer:2010, Yu&Zhang&Gong:NIPS2009}.  These algorithms fall into several categories.  Some use global methods: \citep{TenenbaumEtAl2000, Weinberger&Saul:2006}, while others are primarily local.  Some use probabilistic models: \citep{TippingBishop1999, HintonRoweis2002, Chen_etal.MFA.2010}, while others are primarily geometric.  Among the local geometric algorithms, some are based on the Laplacian \citep{belkin2003laplacian} or the Hessian \citep{DonohoGrimes2003}, and some are based explicitly on a diffusion process \citep{Coifman&Lafon:2006}.  For an analysis of this latter category, see \citep{LeeWasserman2010}.  Often, a discrete stochastic process is defined initially on a finite graph (e.g., as a random walk) and the limiting case is shown to be a diffusion on a manifold. See, e.g., \citep{Belkin&Nyogi:COLT2005, Hein&Audibert&vonLuxburg:2007, Ting&Huang&Jordan:ICML2010}.  In such algorithms, nonlinear dimensionality reduction is usually achieved by a spectral decomposition of the Laplacian on the graph.  

In a recent preprint on nonlinear dimensionality reduction, \citet{Ting&Jordan:2018} develop a general theory for the class of \emph{local spectral methods}: ``These methods construct a matrix using only information in local neighborhoods and take a spectral decomposition to find a
nonlinear embedding.''  The class includes: \citep{Roweis2000, belkin2003laplacian, DonohoGrimes2003, Zhang&Zha:2004, Coifman&Lafon:2006}.  Ting and Jordan's  general framework specifies a differential operator on a compact manifold, with a variety of boundary conditions, and they then analyze how each method converges from a set of conditions on a local neighborhood graph to an eigenproblem for the differential operator.  Distinct methods correspond to distinct operators and distinct boundary conditions.  The theory also leads to a ranking of the various algorithms, and the authors conclude that Local Tangent Space Alignment (LTSA) by \citet{Zhang&Zha:2004} is the best.  

However, if we are investigating the manifold hypothesis in the context of  \emph{deep learning}, \citet{Bengio_etal:2013} argue that we need a method with very different properties.  One problem is the neighborhood graph, which has quadratic complexity.  More significantly, any manifold learning algorithm based solely on local neighborhoods is not likely to generalize very well, in a deep network, beyond the initial training data. A better algorithm for deep learning would construct a \emph{parametric} coordinate mapping that takes into account remote data, as well as local neighborhood data.  Among the existing algorithms, \citep{Bengio_etal:2013} singles out Local Coordinate Coding (LCC) by \citet{Yu&Zhang&Gong:NIPS2009}, which has some similarities to \emph{sparse coding} \citep{Olshausen&Field:1996}.  But most of the discussion in \citep{Bengio_etal:2013} focuses on network architectures that can learn embedded manifolds directly from the structure of the data density itself, known as Regularized Auto-Encoders.  Two types are considered:  Denoising Auto-Encoders (DAEs) \citep{Vincent:2011} and a specialized form of Contractive Auto-Encoders (CAEs) \citep{Alain&Bengio:2012}, both of which can be shown to compute the gradient of the log of the input probability density.
 
The theory of differential similarity \citep{CCCS_AMAI} matches the desiderata advocated by \citet{Bengio_etal:2013} more closely than do the algorithms analyzed by \citet{Ting&Jordan:2018}.  First, it is not based on neighborhood graphs:  It is defined from the start on Euclidean ${\bf R}^{n}$.  Second, the lower dimensional encodings in the theory are constrained globally as well as locally:  In the geometric model, the relationship between the local tangent bundle and the global integral manifold is strictly determined by a classical theorem in differential geometry.  In the probabilistic model, the relationship between the local diffusion process and the global probability density is strictly determined by a classical theorem on stochastic processes.  The diffusion equation in the theory of differential similarity has a drift term as well as a Laplacian term, which means that the diffusion has an invariant probability measure, or a stationary probability density, unlike the diffusion in \citep{LeeWasserman2010}.  And the gradient of the log of the stationary probability density, which is computed by a DAE \citep{Vincent:2011} or a CAE \citep{Alain&Bengio:2012} in the deep learning framework, is precisely the vector field, $\nabla U({\bf x})$, in the theory of differential similarity.  We will return to this point in our discussion of ``Future Work'' in Section \ref{FutureWork}.

The balance of the paper is organized as follows:  Section \ref{ProbM} is a review of ``The Probabilistic Model'' and Section \ref{GeomM} is a review of ``The Geometric Model'' from \citep{CCCS_AMAI}. If the reader is willing to accept on authority (with citations) a few basic results on stochastic processes and differential geometry, this material should be accessible to anyone with a knowledge of  linear algebra and advanced calculus.  Section \ref{ProbM} is short, and it includes a simple proof of the basic theorem that we will need, without the extended discussion of stochastic processes in \citep{CCCS_AMAI}. Section \ref{GeomM} is longer, because it is necessary to extend the geometric model to $n$ dimensions.  We use a form of \emph{prototype coding} for the coordinate system, measuring the \emph{distance} from the \emph{origin} (i.e., the ``prototype'') in $n-1$ specified \emph{directions}.  Thus we define a radial coordinate, $\rho$, and the directional coordinates $\theta^{1}, \theta^{2}, \ldots, \theta^{n - 1}$, collectively denoted by $\Theta$.  We refer to this as the $\rho$, $\Theta$, coordinate system.  

The paper then turns from theory to applications:  ``How to Estimate $\nabla U({\bf x})$ from Sample Data'' in Section \ref{Estimator} and ``Computing the Geodesic Coordinate Curves'' in Section \ref{GeoCoords}.  To work with sample data, we borrow a technique from the literature on the \emph{mean shift algorithm} \citep{Fukunaga&Hostetler:1975, Cheng:1995, Comaniciu&Meer:2002}. Once we have an estimate of $\nabla U({\bf x})$, everything else in the model can be calculated from its components.  In particular, since the coordinate curves are defined by geodesics on the embedded Riemannian manifold, they are determined by the Euler-Lagrange equations for the minimization of the energy functional over the Riemannian metric, ${g}_{ij}({\bf x})$. This is a large system of differential and algebraic equations, in a high-dimensional space, but it can be solved numerically in \emph{Mathematica}.

Finally, Sections \ref{AppMNIST} and \ref{AppCIFAR10} demonstrate how the theory works when applied to real-world examples.  Section \ref{AppMNIST}, on the MNIST dataset \citep{LeCun_etal_1998}, shows that our calculations lead to intuitively reasonable results, and Section \ref{AppCIFAR10}, on the CIFAR-10 dataset \citep{Krizhevsky:2009}, adds another wrinkle:  We show how to use \emph{quotient manifolds} to build invariance into the geometric model, and we show how to use \emph{product manifolds} to combine low-dimensional solutions into a higher dimensional problem space, so that our dimensionality reduction techniques can be applied recursively. 

\section{The Probabilistic Model.}
\label{ProbM}
 
The probabilistic model is known in the literature as \emph{Brownian motion with a drift term}.  More precisely, it is a \emph{diffusion process} generated by the following differential operator:
\begin{align} 
\label{BMwithDrift}
\mathcal{L} \; &= \; \frac{1}{2}\Delta \; + \; \nabla U(\mathbf{x}) \cdot \nabla 
\end{align}
where $\Delta$ is the standard Laplacian expressed in Cartesian coordinates and $U(\mathbf{x})$ is a scalar potential function.
Brownian motion, by itself, is generated by the differential operator $\frac{1}{2}\Delta$.  But Brownian motion ``dissipates,'' that is, it has no invariant probability measure except \emph{zero}.  When we add a drift term, which is given here by $ \nabla U(\mathbf{x}) \cdot \nabla $, the invariant probability measure turns out to be finite and proportional to $e^{\,2 \,U(\mathbf{x})}$.  This means that $ \nabla U(\mathbf{x}) $ is proportional to the \emph{gradient} of the \emph{log} of the stationary probability density. 
  
There are several ways to analyze this diffusion process, and establish this result.  One classical approach is to use Kolmogorov's \emph{backward} and \emph{forward equations}.  See \citep{Kolmogorov1931}. Kolmogorov's backward equation is:
\begin{align}
\label{backwardEqn}
\frac{\partial}{\partial t} \; w(t,\mathbf{x}) &= \frac{1}{2} \sum_{i,j} a^{ij} ({\bf x}) \frac{\partial^{2} }{\partial{x}^{i} \partial{x}^{j}} \; w(t,\mathbf{x})
 \; + \;
 \sum_{i} b^{i}({\bf x}) \frac{\partial }{\partial{x}^{i} } \; w(t,\mathbf{x}) \\
  &= \mathcal{L} \; w(t,\mathbf{x}) ,  \; \mbox{\rm with initial condition} \; w(0,\mathbf{x}) = f(\mathbf{x}), \notag
\end{align}
in which $\mathbf{a}(\mathbf{x})$ is a matrix of diffusion coefficients and  $\mathbf{b}(\mathbf{x})$ is a vector of drift coefficients.  Notice that the operator $\mathcal{L}$ in \eqref{BMwithDrift} is a special case of the operator $\mathcal{L}$ in \eqref{backwardEqn}.  
Kolmogorov's forward equation is:
\begin{align}
\label{forwardEqn}
\frac{\partial}{\partial t} \; p(t,\mathbf{x}) &= \frac{1}{2} \sum_{i,j}  \frac{\partial^{2} }{\partial{x}^{i} \partial{x}^{j}} \; a^{ij} ({\bf x}) \; p(t,\mathbf{x})
 \; - \;
 \sum_{i} \frac{\partial }{\partial{x}^{i} } \; b^{i}({\bf x}) \; p(t,\mathbf{x}) \\
  &= \mathcal{L}^{\mathbf{*}} \; p(t,\mathbf{x}) ,   \notag
\end{align}
in which $\mathcal{L}^{\mathbf{*}}$ is the formal adjoint of $\mathcal{L}$, and $p(t,\mathbf{x})$ is a probability density equal, in the limit, as $t \rightarrow 0$, to the unit probability mass at \textbf{x}.  To find the stationary probability density of our diffusion process, we need to specialize the operator 
$\mathcal{L}^{\mathbf{*}}$ in \eqref{forwardEqn} to the formal adjoint of the operator $\mathcal{L}$ in \eqref{BMwithDrift}, and then set ${\partial p} / {\partial t} = \mathcal{L}^{\mathbf{*}} p = 0$.  But if $\mathbf{a}(\mathbf{x})$ is a constant matrix, then the right-hand side of \eqref{forwardEqn} can be expanded and simplified to
\begin{equation*}
 \frac{1}{2} \sum_{i,j} a^{ij} ({\bf x}) \frac{\partial^{2} }{\partial{x}^{i} \partial{x}^{j}} \; p(\mathbf{x}) 
- \sum_{i}  \left( \frac{\partial }{\partial{x}^{i} } \; b^{i}({\bf x}) \right) p(\mathbf{x})
- \sum_{i}b^{i}( {\bf x}) \left(\frac{\partial }{\partial{x}^{i} } \; p(\mathbf{x}) \right)
\end{equation*}
If we now set $\mathbf{a}(\mathbf{x})$ equal to the identity matrix and $\mathbf{b}(\mathbf{x}) = \nabla U(\mathbf{x}) $, it is a straightforward computation to verify that $\mathcal{L}^{\mathbf{*}}( e^{\,2 \,U(\mathbf{x})}) = 0$.
 
\begin{figure}[tb]
\begin{center}
\includegraphics[width=4in]{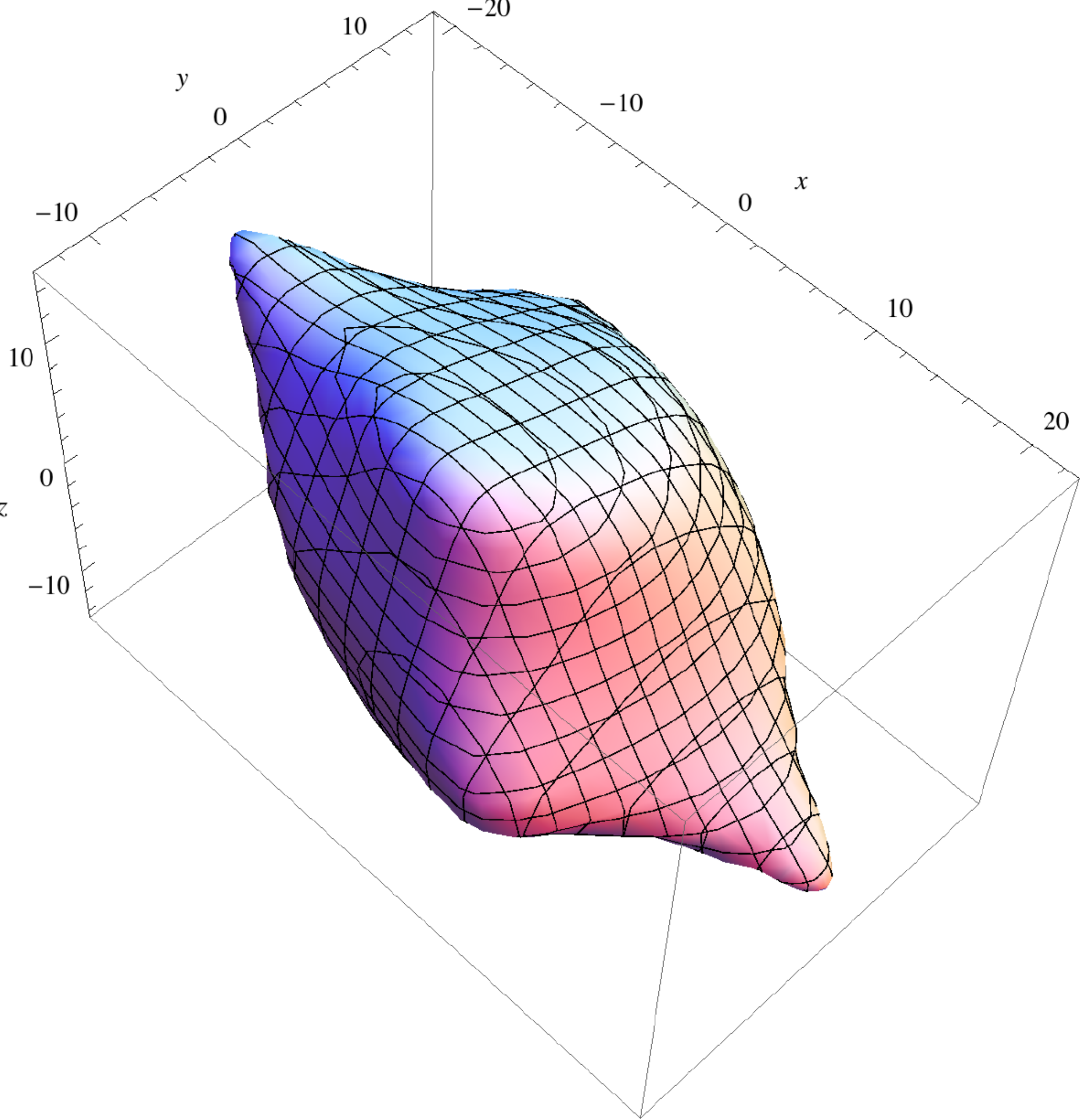}
\caption{Contour plot for the surface of a curvilinear Gaussian potential.}
\label{CurvilinearGaussian}
\end{center}
\end{figure}

Figure \ref{CurvilinearGaussian} is a three-dimensional example borrowed from \citep{CCCS_AMAI}.  The potential function $U({\bf x})$ is defined here as a quadratic polynomial over the variables $u$, $v$, and $w$, where
\begin{equation*}
u \;=\; u(x,y,z), \;\; v \;=\; v(x,y,z), \;\; w \;=\; w(x,y,z), 
\end{equation*}
is defined as a cubic polynomial coordinate transformation from $(x,y,z)$ to $(u,v,w)$.  Specifically, we start with a cubic polynomial: $C(t) = t^3 - t^2 - t$, and define the coordinate transformation as follows:
\begin{align} 
u \;=\; u(x,y,z) =  & \; C(1.4 \; y) + 2 x (y^2 + z^2), \notag \\
v \;=\; v(x,y,z) = & \; C(1.2 \; z) + 2 y (z^2 + x^2), \notag \\
w \;=\; w(x,y,z) = & \; C(1.0 \; x) + 2 z (x^2 + y^2). \notag 
\end{align}
We then define $U({\bf x})$ as: :
\begin{equation}
U(x,y,z) = - \frac{1}{2} ( a \,u(x,y,z)^2 + b \,v(x,y,z)^2 + c \,w(x,y,z)^2) * 10^{-6} \notag
\end{equation}
Thus $U({\bf x})$ is a sixth-degree polynomial in $x$, $y$, and $z$, and the gradient, $\nabla U({\bf x})$, is a fifth-degree polynomial.  We call this the \emph{curvilinear Gaussian potential}.  One important property of this potential function is the fact that $\nabla U(x,y,z) = ( 0, 0, 0 )$ at the origin, which means that $U(0,0,0)= 0$ is an extremal point, a maximal point, in fact.  Figure \ref{CurvilinearGaussian} is a contour plot for the surface at $U(x,y,z) = -10$. Figure \ref{StreamPlotStack}(a) shows a {\tt StreamPlot} of the gradient vector field generated by $\nabla U(x,y,z)$ at $z=-10$, and Figure \ref{StreamPlotStack}(b) shows a stack of such stream plots, at the values $z = 10$, $z = 0$ and $z = -10$.  Notice how the drift vector twists and turns to counteract the dissipative effects of the diffusion term, and maintain an invariant probability measure.

Looking at Figure \ref{StreamPlotStack}, an interesting idea comes to mind:  Could we use this gradient vector field to define a three-dimensional, $\rho$, $\Theta$, coordinate system?  The radial coordinate, $\rho$, would follow the gradient vector, $\nabla U$, and the directional coordinates, $\theta^{1}$, $\theta^{2}$, would be orthogonal to $\rho$.  We will see how to do this in Section \ref{GeomM}.

\begin{figure}[tb]
\begin{center}
\includegraphics[width=5in]{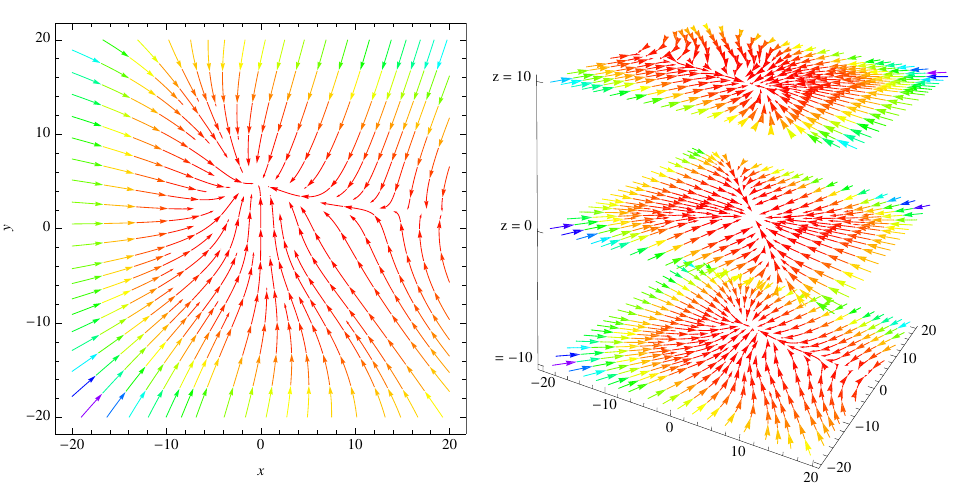}
\caption{Gradient vector field for the curvilinear Gaussian potential: (a) at $z = -10$; (b) at $z = 10$, $z = 0$ and $z = -10$.}
\label{StreamPlotStack}
\end{center}
\end{figure}

The simple formula in \eqref{BMwithDrift} is all the reader needs to know about stochastic processes in order to understand the rest of this paper, including the  examples in Sections \ref{AppMNIST} and \ref{AppCIFAR10}, \emph{infra}.  However, Equation \eqref{BMwithDrift} is part of a much broader and deeper mathematical subject, discussed in Section 2 of \citep{CCCS_AMAI}. Here is a summary, which could be skipped on a first reading:

{\bf Section 2.1} of \citep{CCCS_AMAI} discusses the connection between Equation \eqref{BMwithDrift} and the Feynman-Kac formula \citep{RPFeynman1948, MKac1949}.  

Specializing the operator $\mathcal{L}$ in \eqref{backwardEqn} to the operator $\mathcal{L}$ in \eqref{BMwithDrift}, we have:
\begin{equation}
\label{wCauchy}
\frac{\partial w}{\partial t} = \frac{1}{2}\Delta w \;+\; \nabla U({\bf x}) \boldsymbol{\cdot} \nabla w, \; {\rm with} \; w(0,\cdot) = f.
\end{equation}
Now consider the following partial differential equation:
\begin{equation}
\label{uCauchy}
\frac{\partial u}{\partial t} = \frac{1}{2}\Delta u - V({\bf x})\, u, \; {\rm with} \; u(0,\cdot) = g,
\end{equation}
where $V({\bf x}) \;=\;  \frac{1}{2} \left( \Delta U({\bf x}) + | \nabla U({\bf x}) |^{2} \right)$.
We can show, by a straightforward calculation, using the definition of $V({\bf x})$ in terms of $U({\bf x})$, that $w(t,{\bf x})$ is a solution to \eqref{wCauchy} if and only if $e^{U(\bf x)}w(t,{\bf x})$ is a solution to \eqref{uCauchy} with initial value $u(0,\cdot) = e^{U}f$.  See Lemma 1 in \citep{CCCS_AMAI}. The Feynman-Kac formula associated with \eqref{uCauchy} is:  
\begin{equation}
\label{FKformula}
u(t,{\bf x}) \;=\; \int_{\Omega}^{} g(X_{t}) \;\exp \left[ - \int_{0}^{t} V(X_{s}) \;ds \right] \;\mathcal{W}_{\bf x} (dX),
\end{equation}
where $X_{t} \equiv X(t,\omega)$ denotes a continuous path in ${\bf R}^{n}$, and $\mathcal{W}_{\bf x}$ denotes Wiener measure over all such paths beginning at $X_{0} = {\bf x}$.  Assuming mild regularity conditions, the theorem of \citet{MKac1949}, inspired by \citet{RPFeynman1948}, states that $ u(t,{\bf x}) $ as defined by \eqref{FKformula} is a solution to \eqref{uCauchy}.  Furthermore, because of the relationship between \eqref{wCauchy} and \eqref{uCauchy}, there is a similar integral, involving both $U({\bf x})$ and $V({\bf x})$, that provides a solution to \eqref{wCauchy}.  See Theorem 1 in \citep{CCCS_AMAI}.

These equations all have their origins in physics.  For example, Equation \eqref{uCauchy} is a real-valued version of the Schr\"odinger equation:
\begin{equation}
\label{schrodinger}
\frac{\hbar}{i} \, \frac{\partial \psi(t,\mathbf{x})}{\partial t} = \frac{\hbar^{2}}{2 m}\Delta  \psi(t,\mathbf{x}) - V({\bf x})\,  \psi(t,\mathbf{x}),
\end{equation}
and Equation \eqref{FKformula} with an $i$ in the exponent is Feynman's famous ``path integral'' interpretation of quantum mechanics.
(A more familiar version of Schr\"odinger's equation can be obtained by multiplying both sides of Equation \eqref{schrodinger} by $i^{2} = -1$.)
The Kolmogorov forward equation, Equation \eqref{forwardEqn}, is known to physicists as the Fokker-Planck equation.  Also well known, to physicists, is Chapter 10 of \citep{FeynmanHibbs1965}, in which Feynman and Hibbs analyze a representation of the statistical density matrix in quantum statistical mechanics by means of a real-valued path integral  in the form of Equation \eqref{FKformula}. The existence of these mathematical models in physics leads to a speculative conjecture:  Could there be a physical device, at the molecular level, perhaps, that could compute analog solutions for various quantities associated with Equation \eqref{BMwithDrift}? 
 
{\bf Section 2.2} of \citep{CCCS_AMAI} discusses the interpretation of Equations \eqref{BMwithDrift} and \eqref{backwardEqn} as \emph{stochastic differential equations}, following the theories of both \citet{ito1951} and \citet{stratonovich1966}.

It\^{o}'s theory starts with the definition of a \emph{stochastic integral} in the following form:
\begin{equation*}
X(t) \; = \; X(0) \; + \; \int_{0}^{t} \sigma (s,\omega) \,d\mathcal{B}(s,\omega) 
             \; + \; \int_{0}^{t} b(s,\omega) \,ds, 
\end{equation*}
where the first integral is an \emph{It\^{o} integral }defined with respect to the Brownian motion $ \mathcal{B}(t,\omega)  $.  In differential notation, this would be:
\begin{equation*}
   dX(t) \; = \;  \sigma (t,\omega) \,d\mathcal{B}(t,\omega)  + b(t,\omega) \,dt.
\end{equation*}
Extending the notation to $n$ dimensions, let $ \mathcal{B}_{1}(t,\omega) , \dots ,  \mathcal{B}_{d}(t,\omega) $ be $d$ independent Brownian motion processes, and define the $n$-dimensional \emph{It\^{o} process} as follows:
\begin{equation}
\label{ItoXtDef}
   dX(t) \; = \;
   \begin{pmatrix} 
      \sigma^{1}_{1} & \dots & \sigma^{1}_{d} \\
      \vdots & & \vdots \\
      \sigma^{n}_{1}  & \dots & \sigma^{n}_{d} \\
   \end{pmatrix}
      \begin{pmatrix}
         d\mathcal{B}_{1}(t)  \\
         \vdots \\
         d\mathcal{B}_{d}(t)  \\
      \end{pmatrix}
      \; + \;    
          \begin{pmatrix}              b^{1} \\
             \vdots \\
             b^{n} \\
          \end{pmatrix} 
          dt
\end{equation}
In this equation, ${\mathbf \sigma} \colon {\bf R}^{n} \to {\bf R}^{n \times d}$ is the ``square root'' of {\bf a}, that is, ${\bf a}(\mathbf{x}) = {\mathbf \sigma}(\mathbf{x}) \mathbf{\sigma}(\mathbf{x})^{T}$.  One basic result of It\^{o}'s theory is that Equation \eqref{ItoXtDef} defines the same stochastic process as the operator $\mathcal{L}$ in Equation \eqref{backwardEqn}.  See Theorem 2 in \citep{CCCS_AMAI}.

For our purposes, however, the It\^{o} process has a defect:  It is not invariant under coordinate transformations.  An alternative is to use the stochastic integral proposed by \citet{stratonovich1966}.  (Technically, in the discretization of $t$ that leads to the definition of the integral for $ d\mathcal{B}(t,\omega) $, It\^{o}'s theory evaluates the integrand at the initial point of the interval $ [ t_{j}, t_{j+1} ]$, while Stratonovich's theory evaluates it at the mid point.)  A common notation for this alternative is:
\begin{equation}
\label{StratXtDef}
   dX(t) \; = \;
   \begin{pmatrix} 
      \sigma^{1}_{1} & \dots & \sigma^{1}_{d} \\
      \vdots & & \vdots \\
      \sigma^{n}_{1}  & \dots & \sigma^{n}_{d} \\
   \end{pmatrix}
   \circ
      \begin{pmatrix}
         d\mathcal{B}_{1}(t)  \\
         \vdots \\
         d\mathcal{B}_{d}(t)  \\
      \end{pmatrix}
      \; + \;    
          \begin{pmatrix}              \tilde{b}^{1} \\
             \vdots \\
             \tilde{b}^{n} \\
          \end{pmatrix} 
          dt
\end{equation}
It turns out that the Stratonovich integral satisfies a formula for the ``chain rule'' that is consistent with the Newton-Leibniz calculus, and thus Equation \eqref{StratXtDef} behaves properly under coordinate transformations. Fortunately, we can use both theories, and translate back and forth between the two of them, because \eqref{ItoXtDef} and \eqref{StratXtDef} define the same stochastic process whenever
\begin{equation*}
\tilde{b}^{i} \;=\;  b^{i} \, - \; \frac{1}{2} \sum_{k = 1}^{d} \sum_{j = 1}^{n} 
  \frac{\partial \sigma^{i}_{k} }{ \partial {x}^{j}}  \sigma^{j}_{k}.
\end{equation*}
See Lemma 2 in \citep{CCCS_AMAI}.  This translation can therefore be used to rewrite in a nonlinear $\rho$, $\Theta$, coordinate system any stochastic process that was initially defined by Equation \eqref{BMwithDrift} in Euclidean ${\bf R}^{n}$.

In particular, Section 6 in \citep{CCCS_AMAI} shows how to convert the example in Figure \ref{CurvilinearGaussian} from an It\^{o} equation in Euclidean ${\bf R}^{3}$ into a Stratonovich equation in the coordinates ($\rho, \Theta$), and then back into an It\^{o} equation with coefficients $ \alpha^{ij} (\rho, \Theta)$ and $ \beta^{i}(\rho, \Theta)$.  One interesting consequence of these conversions is the calculation of the ``drift correction vector field'' illustrated in Figure 18 in Section 6 of \citep{CCCS_AMAI}.

{\bf Section 2.3} of \citep{CCCS_AMAI} discusses \emph{integral curves} and \emph{martingales} on manifolds, and develops another interpretation of Equation \eqref{BMwithDrift} based on Stroock's Theorem 7.3.10 in \citep{stroock1993}.  See Theorem 4 in \citep{CCCS_AMAI}. This interpretation is not actually used in \citep{CCCS_AMAI} to justify additional calculations.  However, it is likely that Stroock's work will be useful if we want to advance our theoretical understanding of how the stochastic process generated by Equation \eqref{BMwithDrift} interacts with the geometric model that we will construct in Section \ref{GeomM}.  Our coordinate system for the geometric model is based on integral curves, as we will see, and thus the papers of \citet{stroockTan1994, stroockTan1996} are highly relevant.

\section{The Geometric Model.}
\label{GeomM}

To implement the idea of prototype coding in our geometric model, we need to define a radial coordinate,  $\rho$,  and the directional coordinates, $\theta^{1}, \theta^{2}, \ldots, \theta^{n - 1}$,  where $n$ is the dimensionality of the initial Euclidean space. But what we really want is a lower dimensional subspace, a $k$-dimensional subspace, say, where $k < n$.  Somehow, we would like to choose $k-1$ out of the $n-1$ directional coordinates, and project our diffusion process onto the resulting $k-1$ dimensional space, which can then be combined with our one-dimensional radial coordinate to give us a $k$-dimensional subspace.   How should these coordinate systems be defined? 

First, we want the radial coordinate, $\rho$, to follow the drift vector, $\nabla U({\bf x})$.  To do this, we define $\rho(t)$ to be the \emph{integral curve} of the vector field $\nabla U({\bf x})$, starting at some initial point ${\bf x}_{0}$.  More specifically, we define $\rho(t)$ to be the solution to the following differential equation:
\begin{align}
\label{IntegralCurveDiffEq}
\rho'(t)  \;&=\; \frac{\nabla U( {\rho}(t) )}{ | \nabla U( {\rho}(t) ) | }  \\ 
\rho(0)  \;&=\; {\bf x}_{0} \notag
\end{align}
or, equivalently, the solution to the following integral equation:
\begin{equation}
\label{IntegralCurveIntEq}
\rho(t)  \;=\;  {\bf x}_{0} \;+\; \int_{0}^{t} \frac{\nabla U( {\rho}(s) )}{ | \nabla U( {\rho}(s) ) | }  \, ds , \;\;  0 \leq t
\end{equation}
Since the vector field in \eqref{IntegralCurveDiffEq} or \eqref{IntegralCurveIntEq} is normalized, the integral curve that solves these equations will be parameterized by Euclidean arc length.  However, the parameterization that we choose is just a matter of convenience, and what we really want is a generalization of the concept of an integral curve, known as an \emph{integral manifold}. A one-dimensional integral manifold is, roughly speaking, just the image of an integral curve without the parameterization, and it always exists, for any vector field.  
 
For the directional coordinates, $\theta^{1}, \theta^{2}, \ldots, \theta^{n - 1}$, the obvious generalization would be an integral manifold of dimension $n - 1$, orthogonal to the integral manifold for $\rho$.   But, for $k \geq 2$, a $k$-dimensional integral manifold exists if and only if certain conditions are satisfied, known as the \emph{Frobenius integrability conditions}.  Fortunately, as we will see, if we are looking for an integral manifold orthogonal to a vector field that is proportional to the gradient of a potential function, such as $\nabla U({\bf x})$, then the Theorem of Frobenius gives us the results that we want.  Our analysis here is based on the standard literature in differential geometry. See, e.g., \citep[Chapter 6]{spivakcomprehensive}; \citep[Chapter 3]{bishop1968tensor};  \citep[Chapter 8]{auslander1977introduction}; \citep[Chapter 1]{bishopcrittenden2001}; \citep[Chapter VI]{lang1995differential}. 
 
Let's consider an $n - 1$ dimensional \emph{tangent subbundle}, $E$, in ${\bf R}^{n}$ at some point ${\bf x}$ along the integral curve $\rho(t)$.  We will initially use the Cartesian coordinates from the ambient space ${\bf R}^{n}$ to define a set of basis vectors for $E$, which suggests that one axis should be used to ``center'' the coordinate system and the other $n - 1$ axes should be used to specify alternative directions in the vector space.  To simplify both the exposition and our later calculations, we will always ``center'' our coordinate system on $x^{1}$ and simply permute the coordinate axes whenever we wish to make a different choice.  It will be convenient to establish a special notation for the components of $\nabla U({\bf x})$ that reflects this convention.  Thus we define:
\begin{equation*}
\nabla U({\bf x}) \,=\, (\,  P_{0}({\bf x}) ,  P_{1}({\bf x}) , \ldots , P_{n - 1}({\bf x}) \,),
\end{equation*}
and observe that the term $P_{0}({\bf x}) = \partial U({\bf x}) / \partial x^{1}$ will play a special role because of our centering convention.  We now define the basis vectors for $E$ as follows:
 \begin{align}
\nabla U({\bf x}) \; &= \; ( \hspace{1em} P_{0}({\bf x}) ,& P_{1}({\bf x}) ,& &P_{2}({\bf x}) ,& &\ldots ,& &P_{n - 2}({\bf x}) ,& &P_{n - 1}({\bf x}) \;) \notag \\
 {\bf V}_{1}({\bf x}) \; &= \; (\; - P_{1}({\bf x}) ,& P_{0}({\bf x}) ,& &0 ,& &\ldots ,& &0 ,& &0 \;) \notag \\
 {\bf V}_{2}({\bf x}) \; &= \; (\; - P_{2}({\bf x}) ,& 0,& &P_{0}({\bf x})  ,& &\ldots ,& &0 ,& &0 \;) \notag \\
 & \ldots & & & & & & & & & \notag \\
 {\bf V}_{n - 2}({\bf x}) \; &= \; (\; - P_{n - 2}({\bf x}) ,& 0 ,& &0 ,& &\ldots ,& &P_{0}({\bf x}) ,& &0 \;) \notag \\
 {\bf V}_{n - 1}({\bf x}) \; &= \; (\; - P_{n - 1}({\bf x}) ,& 0 ,& &0 ,& &\ldots ,& &0 ,& &P_{0}({\bf x}) \;) \notag \\ \notag
 \end{align}
It is straightforward to verify that $\nabla U({\bf x})$ is orthogonal to each $  {\bf V}_{i}({\bf x}) $, but we need to analyze the tangent subbundle more carefully to verify the Frobenius integrability conditions.
 
It is standard in differential geometry to think of a vector field as a differential operator, essentially the \emph{directional derivative} with respect to a given vector ${\bf V}$.  We will write this in shorthand notation as ${\bf V} \partial$.  Let's now consider the vector fields defined by $ {\bf V}_{i}  =  {\bf V}_{i}({\bf x}) / P_{0}({\bf x}) $ and $ {\bf V}_{j}  =  {\bf V}_{j}({\bf x}) / P_{0}({\bf x}) $, with $i \neq j$, and let's compute the \emph{Lie bracket} of $ {\bf V}_{i} \partial$ and ${\bf V}_{j} \partial $.  By a straightforward (but tedious) calculation, we have:
\begin{align}
\label{ViVjLieBracket}
\left[  {\bf V}_{i} \partial , {\bf V}_{j} \partial  \right] \; &= \;  {\bf V}_{i} \partial \circ {\bf V}_{j} \partial - {\bf V}_{j} \partial \circ {\bf V}_{i} \partial  \\[1ex]
  \;&=\;  \frac{1}{P_{0}^{2}}
\left(\begin{array}{c} P_{0}  \left[ \frac{  \partial P_{i} }{ \partial x^{j+1} }  -  \frac{\partial P_{j}}{\partial x^{i+1}}  \right] \\[2ex]
 + \; P_{i}\left[ \frac{\partial P_{j}}{\partial x^{1}}- \frac{\partial P_{0}}{\partial x^{j+1}}  \right]  \\[2ex]
 + \; P_{j}\left[ \frac{\partial P_{0}}{\partial x^{i+1}}- \frac{\partial P_{i}}{\partial x^{1} }  \right]  
 \end{array}\right) \frac{\partial}{\partial x^{1}} \notag 
\end{align}
Now substitute the definitions $P_{0} = \partial U({\bf x}) / \partial x^{1}$, $P_{i} = \partial U({\bf x}) / \partial x^{i+1}$ and $P_{j} = \partial U({\bf x}) / \partial x^{j+1}$, and note that the terms in the square brackets vanish identically by virtue of the equality of mixed partial derivatives.  Thus $ \left[  {\bf V}_{i} \partial , {\bf V}_{j} \partial  \right]  = 0$, which means that the vector fields $ {\bf V}_{i} \partial$ and ${\bf V}_{j} \partial $ \emph{commute}.
 
To formulate the Theorem of Frobenius, we need several definitions.   We say that the vector field ${\bf V} \partial$ \emph{belongs to} the tangent subbundle $E$ if ${\bf V}({\bf x})$ is an element of $E$ at each point ${\bf x}$ of the domain.  Then, if $ \left[  {\bf X} \partial , {\bf Y} \partial  \right] $ belongs to $E$ whenever ${\bf X} \partial$ belongs to $E$ and ${\bf Y} \partial$ belongs to $E$, for arbitrary ${\bf X}$ and ${\bf Y}$, we say that $E$ is \emph{involutive}.  If the tangent subbundle $E$ can be extended to a full integral manifold, we say that $E$ is \emph{integrable}.   These two concepts are related by the following:
\begin{theorem}[\bf Frobenius]
\label{thmFrobenius}
A tangent subbundle, $E$, is integrable if and only if it is involutive.
\end{theorem}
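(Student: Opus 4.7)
The plan is to prove the two implications separately. For the easy direction, suppose $E$ is integrable, and let ${\bf x}$ be any point with integral manifold $N$ passing through it, so that $T_{\bf y} N = E({\bf y})$ for every ${\bf y} \in N$. If ${\bf X} \partial$ and ${\bf Y} \partial$ belong to $E$, then they restrict to honest vector fields $\tilde{\bf X}$ and $\tilde{\bf Y}$ on $N$, and by naturality of the Lie bracket under the inclusion $N \hookrightarrow {\bf R}^{n}$ we have $[{\bf X} \partial, {\bf Y} \partial]|_{N} = [\tilde{\bf X}, \tilde{\bf Y}]$. Since a bracket of vector fields on $N$ is again tangent to $N$, the bracket lies in $E$ at every point of $N$, and hence everywhere, so $E$ is involutive.

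For the harder direction, suppose $E$ has rank $k$ and is involutive, and work locally. Pick any smooth frame ${\bf X}_{1}, \ldots, {\bf X}_{k}$ for $E$. The first step is a normalization: after permuting the ambient Cartesian coordinates so that the leading $k \times k$ block of the component matrix is invertible, multiplying the frame by the inverse of this block produces a new frame of the form
\begin{equation*}
{\bf X}_{i} \;=\; \frac{\partial}{\partial x^{i}} \;+\; \sum_{j > k} a_{i}^{j}({\bf x}) \, \frac{\partial}{\partial x^{j}}.
\end{equation*}
A direct computation shows that the bracket $[{\bf X}_{i} \partial, {\bf X}_{j} \partial]$ has no component along $\partial/\partial x^{l}$ for $l \leq k$. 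By involutivity this bracket belongs to $E$, but the only element of $E$ in this normalized form with vanishing leading components is zero. Hence the normalized frame pairwise commutes.

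The final step is to build the integral manifold from the commuting flows. Let $\phi_{i}^{t}$ denote the flow of ${\bf X}_{i}$. Because the vector fields commute, the flows commute, so the map
\begin{equation*}
\Phi(t^{1}, \ldots, t^{k}) \;=\; \phi_{1}^{t^{1}} \circ \phi_{2}^{t^{2}} \circ \cdots \circ \phi_{k}^{t^{k}}({\bf x}_{0})
\end{equation*}
is well defined on a neighborhood of the origin in ${\bf R}^{k}$, its Jacobian at $0$ is $({\bf X}_{1}({\bf x}_{0}), \ldots, {\bf X}_{k}({\bf x}_{0}))$ of rank $k$, and commutativity of the flows gives $\partial \Phi / \partial t^{i} = {\bf X}_{i} \circ \Phi$ everywhere on the image, so $\Phi$ parametrizes an integral manifold through ${\bf x}_{0}$. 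The main obstacle I anticipate is the normalization step: one has to check that the Gaussian elimination on the frame matrix can be carried out smoothly in ${\bf x}$ (which requires the permutation of coordinates to be chosen on a common neighborhood) and that the deduction $[{\bf X}_{i} \partial, {\bf X}_{j} \partial] = 0$ from involutivity really does collapse cleanly once the leading block of the frame is the identity. Everything afterwards reduces to the classical straightening theorem for commuting vector fields and a rank count, both of which are essentially mechanical.
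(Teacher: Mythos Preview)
Your outline is correct and follows the standard textbook argument for Frobenius: the easy direction via naturality of the bracket under inclusion, and the hard direction via normalizing a local frame so that the leading $k\times k$ block is the identity, deducing that the normalized fields commute, and then parametrizing the leaf by composed flows. The anticipated obstacles you flag are genuine but routine: the coordinate permutation can be fixed on a neighborhood by continuity of the determinant, and the collapse of the bracket once the leading block is the identity is exactly the point of the normalization.

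The paper, however, does not prove this theorem at all. Its ``proof'' consists solely of citations to Bishop--Crittenden (Sections~1.4 and~1.6, Theorems~5--7) and Lang (Chapter~VI, \S\S1--4, Theorems~1.1 and~1.2). So there is no approach to compare against in the paper itself; your sketch is essentially the argument one finds in those references, and in that sense you have supplied what the paper deferred.
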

\begin{proof}
See \citep{bishopcrittenden2001}, Sections 1.4 and 1.6, and Theorems 5, 6 and 7; \citep{lang1995differential}, Chapter VI, $\S$1 -- $\S$4, and Theorems 1.1 and 1.2.
\end{proof}
\noindent
We can now show that the tangent subbundle defined above by the basis vectors  ${\bf V}_{1}({\bf x})$,  ${\bf V}_{2}({\bf x})$, $\ldots$,  ${\bf V}_{n - 1}({\bf x})$, is involutive, hence integrable. 

If ${\bf V} \partial$ and ${\bf W} \partial$ are vector fields and $f$ and $g$ are differentiable real-valued functions, we have the following identity for the expansion of Lie brackets:
\begin{align}
\label{fVgWLieBracket}
\left[  f \, {\bf V} \partial , g \, {\bf W} \partial  \right] \; &= \;   f \, {\bf V} \partial \circ g \, {\bf W} \partial - g \, {\bf W} \partial \circ f \, {\bf V} \partial  \\[1ex]
 \;&=\;   f \, ({\bf V} \partial \, g)  \, {\bf W} \partial + f g \, {\bf V} \partial \circ {\bf W} \partial - g \, ({\bf W} \partial \, f)  \, {\bf V} \partial - g f \, {\bf W} \partial \circ {\bf V} \partial \notag \\[1ex]
  \;&=\;  f g \left[ {\bf V} \partial , {\bf W} \partial  \right] +  f \, ({\bf V} \partial \, g)  \, {\bf W} \partial - g \, ({\bf W} \partial \, f)  \, {\bf V} \partial \notag
\end{align}
Now let ${\bf X} \partial$ and ${\bf Y} \partial$ be two arbitrary vector fields that belong to $E$, and write each of them in terms of their basis vectors:
\begin{align}
\label{XYbasis}
{\bf X} \partial \;&=\; \sum_{i=1}^{n-1} f^{i}({\bf x}) {\bf V}_{i}({\bf x}) \partial \;=\; \sum_{i=1}^{n-1} f^{i}({\bf x}) P_{0}({\bf x}) {\bf V}_{i} \partial \notag \\
{\bf Y} \partial \;&=\; \sum_{j=1}^{n-1} g^{j}({\bf x}) {\bf V}_{j}({\bf x}) \partial \;=\; \sum_{j=1}^{n-1} g^{j}({\bf x}) P_{0}({\bf x}) {\bf V}_{j} \partial \notag
\end{align}
To compute $ \left[  {\bf X} \partial , {\bf Y} \partial  \right] $, we apply \eqref{fVgWLieBracket} and use the fact that all terms in the form $ \left[  {\bf V}_{i} \partial , {\bf V}_{j} \partial  \right] $ vanish because of \eqref{ViVjLieBracket}, to show that the remaining terms form a linear combination of the basis vectors of E.  Thus 
$ \left[  {\bf X} \partial , {\bf Y} \partial  \right] $ belongs to $E$, which means that $E$ is involutive.

\begin{figure}[tbp]
\begin{center}
\includegraphics[width=5in]{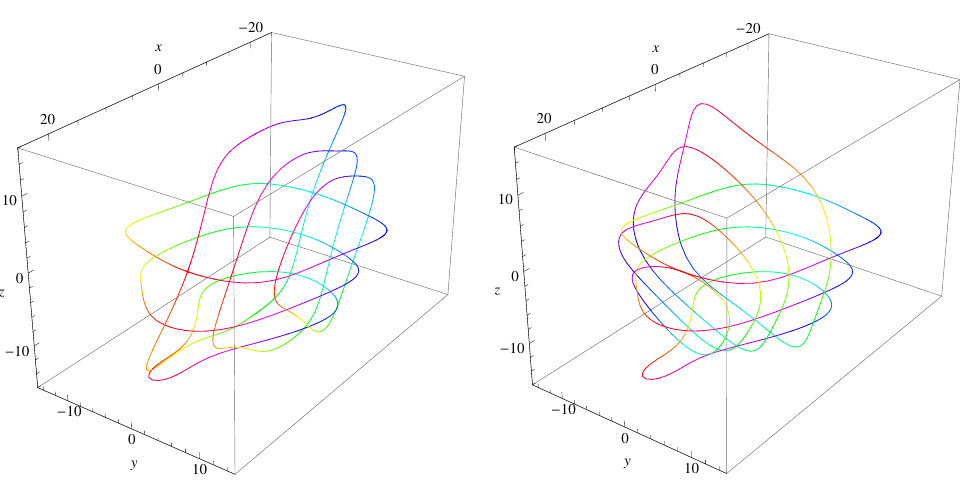}
\caption{An integral manifold with two global coordinate systems for the curvilinear Gaussian potential.}
\label{3DIntManifold}
\end{center}
\end{figure}

Figure \ref{3DIntManifold} shows two views of the integral manifold for the three-dimensional example that was depicted in Figures  \ref{CurvilinearGaussian} and \ref{StreamPlotStack} in Section \ref{ProbM}.  The view on the left is based on a coordinate system centered on the $x$ axis, and it shows several integral curves in the $xy$ plane (with $z$ constant) and the $xz$ plane (with $y$ constant).  The view on the right is based on a coordinate system centered on the $y$ axis, which was computed by a permutation of the axes resulting in the definition $P_{0}({\bf x}) = \partial U({\bf x}) / \partial x^{2}$.  It shows the same integral curves in the $xy$ plane (with $z$ constant) along with several new integral curves in the $yz$ plane (with $x$ constant).  It should be clear that both sets of integral curves are tracing out the same two-dimensional integral manifold.  Can this integral manifold be defined without reference to a specific coordinate system?
 
In \citep{CCCS_AMAI}, we reversed the procedure that we have been following  here.  Theorem 5 in \citep{CCCS_AMAI} asserts the existence of a two-dimensional integral manifold orthogonal to any vector field that is proportional to the gradient of a scalar potential function, i.e., any vector field in the form $N({\bf x}) \nabla U({\bf x})$.  The theorem, as stated, makes use of the vector cross product and the ``curl,'' which is a three-dimensional concept, but it is actually a special case of a general result in ${\bf R}^{n}$ which follows from the dual version of the Theorem of Frobenius, expressed in terms of differential forms.  Thus, if we wanted to, we could develop the theory of our integral manifold in a more ``intrinsic'' way, without reference to a special coordinate system.  We would still have to introduce a coordinate system, of course, when we wanted to do computations, as we did in \citep{CCCS_AMAI}, but this would not be our starting point.

We could use either set of integral curves in Figure \ref{3DIntManifold} to define a curvilinear coordinate system on the Frobenius integral manifold, but it would be a \emph{global} coordinate system, since it follows the \emph{global} Cartesian coordinates from the ambient space ${\bf R}^{n}$. This would not be particularly useful if we are looking for an ``optimal'' $k-1$ dimensional subspace.  But another approach is to use these global vector fields to construct a \emph{local} coordinate system.  Any linear combination of $ {\bf V}_{1}({\bf x}),  {\bf V}_{2}({\bf x}), \ldots ,  {\bf V}_{n - 1}({\bf x}) $, could be taken as one of the basis vectors for the tangent subbundle, and we can vary this linear combination as we move around the integral manifold.  To implement this idea, it is useful to define a \emph{Riemannian metric} on the integral manifold, and the most natural way to do this is to define a metric tensor on all of ${\bf R}^{n}$, using the inner products of  $\nabla U({\bf x})$, $ {\bf V}_{1}({\bf x})$,  ${\bf V}_{2}({\bf x})$, $\ldots$ ,  and ${\bf V}_{n - 1}({\bf x}) $, in that order.  We thus define:
\begin{align}
&
 \begin{pmatrix}
    \\
   {g}_{i,j}({\bf x}) \\
    \\
\end{pmatrix}
 \; = \;  \notag \\
 \notag \\
 &
\left(\begin{array}{cccccc}  | \nabla U |^{2}  & 0 & 0 & \ldots & 0 & 0 \\[1ex]
0 & P_{0}^{2} + P_{1}^{2} & P_{2}P_{1} & \ldots & P_{n-2}P_{1} & P_{n-1}P_{1} \\[1ex]
0 & P_{1}P_{2} & P_{0}^{2} + P_{2}^{2} & \ldots & P_{n-2}P_{2} & P_{n-1}P_{2} \\[1ex]
 \ldots & & & & & \ldots \\[1ex]
0 & P_{1}P_{n-2} & P_{2}P_{n-2}  & \ldots & P_{0}^{2} + P_{n-2}^{2}& P_{n-1}P_{n-2} \\[1ex] 
0 & P_{1}P_{n-1} & P_{2}P_{n-1} & \ldots & P_{n-2}P_{n-1} & P_{0}^{2} + P_{n-1}^{2} \\[1ex]
\end{array}\right)
\notag
\end{align}
If we want a uniform coordinate notation in place of $\rho$ and $\Theta$, we let $i$ and $j$ range over $0$, $1$, $2$, $\ldots$, $n-1$, and we stipulate that $u^{0} = \rho$, and $u^{i} = \theta^{i}$, for $i = 1, \ldots, n-1$.  With these conventions, we adopt the formula above as the definition of our \emph{Riemannian dissimilarity metric}. 

 
What is the purpose of this metric, and why does it make sense as a measure of dissimilarity?  To answer these questions, we need to analyze the coordinates $\rho$ and $\Theta$, separately, within the $( \rho, \Theta )$ coordinate system.
 
First, for the $\rho$ coordinate, here is Theorem 11 from \citep{CCCS_AMAI}, rewritten as:
\begin{theorem}
\label{ConstantRho}
In the ($\rho, \Theta$) coordinate system, the value of the $\rho$ coordinate on the Frobenius integral manifold is a constant.  
\end{theorem}
\begin{proof}
There is a proof of this proposition in \citep[Theorem 6.5]{spivakcomprehensive}, as part of the proof of the Frobenius Theorem.  To understand the main idea, see the proof in the three-dimensional case for Theorem 11 in \citep{CCCS_AMAI}.
\end{proof}
There are two natural measures of distance along the $\rho$ coordinate curve, the Euclidean arc length and the Riemannian arc length, the second of which is determined by the dissimilarity metric, ${g}_{i,j}({\bf x})$.  For the value of the $\rho$ coordinate in Theorem \ref{ConstantRho}, we will choose the Riemannian arc length.   We have the following result:
\begin{theorem}
\label{ConstantRiemannian}
The Riemannian distance along the $\rho$ coordinate curve $\rho(t)$ is equal to the difference in the potential function $U(\mathbf{x}(t))$ along that curve.
\end{theorem}
\begin{proof}
See the calculations in the three-dimensional case following Theorem 11 in \citep{CCCS_AMAI}, which can obviously be generalized to $n$ dimensions.
\end{proof}
We now put these two theorems together and summarize them in three points, which are paraphrased from the discussion at the end of Section 4 in \citep{CCCS_AMAI}:
\begin{itemize}

\item Theorem \ref{ConstantRho} tells us that the value of the $\rho$ coordinate on the Frobenius integral manifold is a constant.  

\item Theorem \ref{ConstantRho} and Theorem \ref{ConstantRiemannian} together imply that the Riemannian distance along the $\rho$ coordinate curve from the Frobenius integral manifold to the origin is a constant.  We can use this property to project data points onto the manifold, just by adding and subtracting Riemannian distances.

\item Since the Riemannian distance to the origin along the coordinate curve $\rho(t)$ is $U(\mathbf{x}(t))$, and since the probability density in our model is proportional to $e^{2 \, U( \mathbf{x}(t) )}$, it follows that the Frobenius integral manifold is a surface of constant probability density.

\end{itemize}
We will be using these mathematical facts frequently in the remainder of this paper. 
 
Second, for the $\Theta$ coordinates:  The main application of our dissimilarity metric is to compute \emph{geodesics} on the surface of the Frobenius integral manifold orthogonal to $\nabla U({\bf x})$.  Since any linear combination of $ {\bf V}_{1}({\bf x}),  {\bf V}_{2}({\bf x}), \ldots ,  {\bf V}_{n - 1}({\bf x}) $, yields a vector in the tangent subbundle, $E$, we can construct vector fields in $E$ in the form 
\begin{equation*}
 \sum_{i=1}^{n-1}  v^{i}(t) {\bf V}_{i}({\bf x}) 
\end{equation*}
for arbitrary functions 
\begin{equation*}
\mathbf{v}(t) \;=\; \left(\begin{array}{c} v^{1}(t) \\ v^{2}(t) \\  \cdots \\ v^{n-1}(t) \end{array}\right) 
\end{equation*}
For a geodesic, we are looking for a curve $\gamma(t)$ with values in ${\bf R}^{n}$ which minimizes the ``energy'' functional: 
\begin{equation*}
\label{EnergyFunctional}
\frac{1}{2}  \int_{0}^{T} \mathbf{v}(t)^{\top}
\left(\begin{array}{cccc} {g}_{1,1}(\gamma(t)) &  {g}_{1,2}(\gamma(t)) & \ldots & {g}_{1,n-1}(\gamma(t)) \\ 
{g}_{2,1}(\gamma(t)) & {g}_{2,2}(\gamma(t)) & \ldots & {g}_{2,n-1}(\gamma(t))\\
\ldots & & & \ldots\\
{g}_{n-1,1}(\gamma(t)) & {g}_{n-1,2}(\gamma(t)) & \ldots & {g}_{n-1,n-1}(\gamma(t))\\
\end{array}\right)
 \mathbf{v}(t)  \; dt 
\end{equation*}
subject to the constraint:
\begin{equation}
\label{GammaConstraint}
\gamma \, '(t)  \;=\;   \sum_{i=1}^{n-1}  v^{i}(t) {\bf V}_{i}(\gamma(t)) \notag
\end{equation}
This variational problem leads to a system of Euler-Lagrange equations for the curves $\gamma(t)$ and $ \mathbf{v}(t) $, plus $n$ Lagrange multipliers, and by the existence and uniqueness theorems for ordinary differential equations, the resulting system will have a solution if we specify the initial conditions $\gamma(0)$ and $ \mathbf{v}(0) $.
  
\begin{figure}[tbp]
\begin{center}
\includegraphics[width=4.5in]{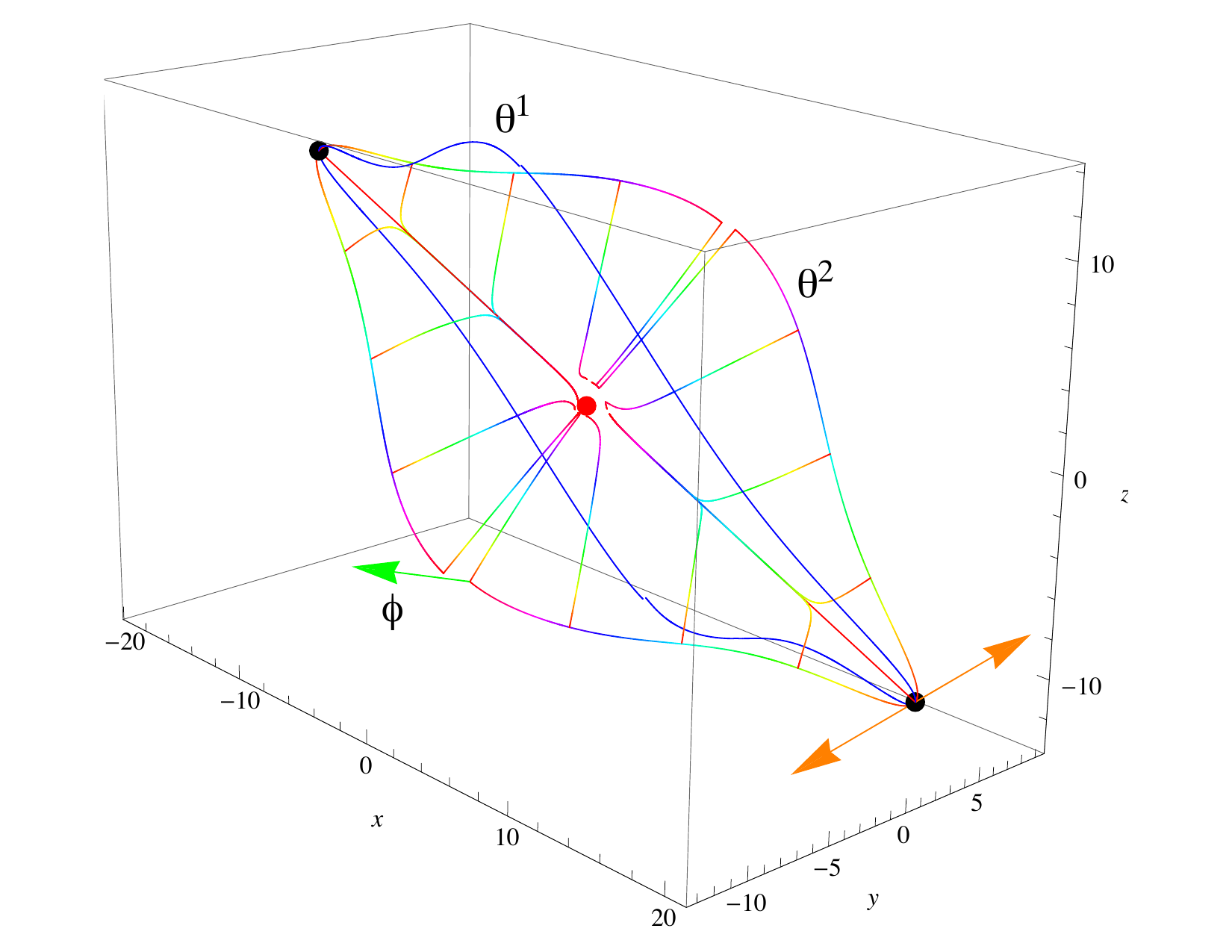}
\caption{The $\rho$ and $\Theta$ coordinate curves for the curvilinear Gaussian potential.}
\label{CGCoordsWithEig}
\end{center}
\end{figure}
 
Figure \ref{CGCoordsWithEig} shows a local coordinate system for the three-dimensional example that was depicted in Figures \ref{CurvilinearGaussian} and \ref{StreamPlotStack} in Section \ref{ProbM}, and for which we computed the global coordinate systems in Figure \ref{3DIntManifold}. We will use this illustration to explain the necessary calculations, but we are interested in generalizing from three dimensions to $n$ dimensions, and we will see that the higher dimensional case sometimes requires a slightly different treatment.  In either case, there are three steps:
\begin{itemize}
\item Step One: 
\vspace{1ex}

\noindent
To find a \emph{principal axis} for the $\rho$ coordinate, we minimize the Riemannian distance, ${g}_{i,j}({\bf x})$, along the drift vector.
\vspace{1ex}

\item  Step Two: 
\vspace{1ex}

\noindent
To choose the \emph{principal directions} for the $\theta^{1}, \theta^{2}, \ldots,$ $ \theta^{k - 1}$ coordinates, we diagonalize the Riemannian matrix, $\left( \,{g}_{i,j}({\bf x})\, \right)$, at a fixed point,   ${\bf x}_{0}$, along the principal axis.
\vspace{1ex}

\item  Step Three: 
\vspace{1ex}

\noindent
To compute the $\Theta$ \emph{coordinate curves}, we follow the geodesics of the Riemannian metric, ${g}_{i,j}({\bf x}_{0})$, in each of the $k - 1$ principal directions.
\end{itemize}
It turns out that Step Two requires the greatest modifications in the higher dimensional case, so we will defer that discussion and consider Step One and Step Three first.  

In Step One, we compute the \emph{principal axes}, which are the curves in Figure \ref{CGCoordsWithEig} that extend from the black dots to the red dot.  One way to do this is to find a point at a fixed Euclidean distance from the origin for which the integral curve, $\rho(t)$, as defined in Equation \eqref{IntegralCurveDiffEq} or \eqref{IntegralCurveIntEq}, has minimal Riemannian length to the origin.  However, if $| \nabla U({\bf x}) |$ is monotonic, we obtain approximately the same results (i.e., within the accuracy of our numerical approximations) by simply minimizing $| \nabla U({\bf x}) | ^{2}$ on a sphere at a constant distance from the origin.  For example, to find the black dot in the lower right corner of Figure \ref{CGCoordsWithEig}, we first minimize $| \nabla U(x,y,z) | ^{2}$ on the sphere $x^{2} + y^{2} + z^{2} = 500$ to locate a point and a curve, $\rho(t)$, with Riemannian length $6.30873$.  We then search within the neighborhood of this solution to find a point and a curve with a slightly smaller Riemannian length: $6.30863$.  In Figure \ref{CGCoordsWithEig}, we have chosen the latter solution, but in an $n$-dimensional space, the second solution is usually too complex, computationally, and we will therefore stick with the simple minimization of $| \nabla U({\bf x}) | ^{2}$.  

Let's now jump ahead to Step Three.  Setting $\gamma(0) = {\bf x}_{0}$, which is the black dot on the principal axis in Figure \ref{CGCoordsWithEig}, and setting $ \mathbf{v}(0) $ equal to one of the principal directions from Step Two, we solve the Euler-Lagrange equations.  The results are the \emph{coordinate curves} $\theta^{1}$ and $\theta^{2}$ in Figure \ref{CGCoordsWithEig}. These curves have interesting geometric properties.  Since they are geodesics on the surface of the Frobenius integral manifold, they are always a constant Riemannian distance from the origin.  Note that we have drawn several  $\rho$ coordinate curves in Figure \ref{CGCoordsWithEig} from the $\theta^{2}$ curve to the origin. The Riemannian length of these curves is approximately $6.30863$, even though their Euclidean length varies considerably.  (See Theorem \ref{ConstantRho} and Theorem \ref{ConstantRiemannian}, \emph{supra}.)  On the other hand, the Riemannian distance along the $\theta^{1}$ and $\theta^{2}$ curves is the same as the Euclidean distance, and we can therefore use the Riemannian/Euclidean arc length to parameterize these curves, just as we have done with the $\rho$ coordinate curves.

In practice, when we try to solve the Euler-Lagrange equations in Step Three of our procedure, we often encounter a singularity, as the value of $P_{0}({\bf x})$ approaches zero. The solution is to shift the ``center'' of our coordinate system to a different coordinate axis, and then continue the computation with a new value for $P_{0}({\bf x})$.  To validate this solution, though, we need to show that our definition of the Riemannian metric, ${g}_{i,j}({\bf x})$, is independent of our choice of a ``centered'' coordinate axis.  For the three-dimensional case, the proof appears in Section 4 of \citep{CCCS_AMAI}, and the proof for $n$ dimensions is a straightforward generalization.  Let $ \mathbf{u}( \mathbf{x} )$ denote a $\rho$, $\Theta$, coordinate system centered on the $x^{1}$ axis, and let $ \mathbf{\bar{u}}( \mathbf{x} )$ denote a $\rho$, $\Theta$, coordinate system centered on the axis $x^{p}$, for some $ p \neq 1 $. (For example, look at the second image in Figure \ref{3DIntManifold}, in which $ p = 2 $ and the centered coordinate axis is $x^{2}$.)  The Jacobian matrix of the coordinate transformation from $ \mathbf{\bar{u}}( \mathbf{x} )$ to $ \mathbf{u}( \mathbf{x} )$ can be computed as follows:
\begin{equation*}
\left(\begin{array}{c} \;\\ \partial u^{i} / \partial \bar{u}^{k} \\ \; \end{array}\right)  \; = \; \left(\begin{array}{c} \;\\ \partial x^{j} / \partial u^{i} \\ \; \end{array}\right)^{-1} \left(\begin{array}{c} \;\\ \partial x^{j} / \partial \bar{u}^{k} \\ \; \end{array}\right)
\end{equation*}
Now let ${g}_{i,j}({\bf x})$ and ${\bar{g}}_{k,l}({\bf x})$ denote the dissimilarity metric based on the $u^{i}$ and $\bar{u}^{k}$ coordinates, respectively.  We can verify by a straightforward computation (compare the three-dimensional example in \citep{CCCS_AMAI}) that
\begin{align}
{\bar{g}}_{k,l}({\bf x}) \;=\;  \sum_{i,j=1}^{n-1} \frac{\partial u^{i}}{\partial \bar{u}^{k}} \; {g}_{i,j}({\bf x}) \; \frac{\partial u^{j}}{\partial \bar{u}^{l}}  \notag
\end{align} 
But this is just an instantiation of the transformation law for a type (0,2) tensor.  Thus, on an integral manifold of dimension $n-1$, for a fixed $\rho$, the dissimilarity metric, ${g}_{i,j}({\bf x})$, is independent of the global coordinate system used to define it. 

Once we have computed the geodesic coordinate curves $\theta^{1}, \theta^{2}, \ldots,$ $ \theta^{k - 1}$, all of which emanate from ${\bf x}_{0}$, we also need to construct a system of \emph{transverse} coordinate curves, which can emanate from any point, ${\bf x}$, on the integral manifold.  For this purpose, we define the following \emph{flows}:
\begin{equation}
\label{TransverseCurves}
\vec{\theta}_{t}({\bf x}) =  \hat{\theta}_{{\bf x}}(t) = {\bf x} + \int_{0}^{t} \;  \sum_{i=1}^{n-1}  v^{i}_{\theta}(s) {\bf V}_{i}( \hat{\theta}_{{\bf x}}(s) ) \; ds 
\end{equation}
Here, $\hat{\theta}_{{\bf x}}(t)$ is an integral curve starting at ${\bf x}$, as in \eqref{IntegralCurveIntEq}, and it has an equivalent definition by a differential equation, as in \eqref{IntegralCurveDiffEq}.  We have one such flow equation for each geodesic, $\theta^{1}, \theta^{2}, \ldots,$ $ \theta^{k - 1}$, computed in Step Three, in which the coefficient vectors, $ \mathbf{v}_{\theta^{1}}(s)$,  $\mathbf{v}_{\theta^{2}}(s)$, $\ldots$, $\mathbf{v}_{\theta^{k-1}}(s) $, are the functions, $ \mathbf{v}(t) $, that were computed at the same time.  When $ {\bf x} = {\bf x}_{0} $, of course, the integral curve in Equation \eqref{TransverseCurves} coincides with the original geodesic coordinate curve.  In the $n$-dimensional case, we usually apply Equation \eqref{TransverseCurves} in a fixed order to a sequence of coordinates to specify a point on the integral manifold.  Thus, we start with the flow $\vec{\theta}^{\,1}_{t}({\bf x}_{0})$ and follow it for a distance $t_{1}$ to the point ${\bf x}_{1}$;  we then follow the flow $\vec{\theta}^{\,2}_{t}({\bf x}_{1})$ for a distance $t_{2}$ to the point ${\bf x}_{2}$; and so on.  In the three-dimensional case, as shown in Figure \ref{CGCoordsWithEig}, it is convenient to define an additional geodesic coordinate curve, $\phi$, which is orthogonal to both $\theta^{1}$ and $\theta^{2}$.  We then have three flows, $\vec{\theta}^{\,1}_{t}({\bf x})$, $\vec{\theta}^{\,2}_{t}({\bf x})$, and $\vec{\phi}_{t}({\bf x})$, and we can compose them either as $ \vec{\phi}_{s}  \circ \vec{\theta}^{\,1}_{t}({\bf x}_{0}) $ or as $ \vec{\phi}_{s}  \circ \vec{\theta}^{\,2}_{t}({\bf x}_{0}) $.  (We will see later that this choice leads to an interesting comparative analysis.)  Whatever choices we make, though, we want to make sure that our coordinate system covers the entire integral manifold.  

Let's now return to Step Two, where we encounter a surprising mathematical fact.  When we diagonalize the Riemannian matrix, $\left( \,{g}_{i,j}({\bf x})\, \right)$, in $n$ dimensions, we find only two distinct eigenvalues. The largest eigenvalue has multiplicity $2$: $\lambda_{0} = \lambda_{1} =  | \nabla U |^{2}$, with corresponding eigenvectors: 
\begin{align}
\xi_{0} =\left(\begin{array}{c} 1 \\ 0 \\ 0 \\ \cdots \\ 0 \\ 0 \end{array}\right) \;\; \mbox{\rm and} \;\;
\xi_{1} =\left(\begin{array}{c} 0 \\ P_{1} \\ P_{2} \\ \cdots \\ P_{n-2} \\ P_{n-1} \end{array}\right) \;
\notag 
\end{align} 
The smallest eigenvalue has multiplicity $n-2$: $\lambda_{2} = \lambda_{3} = \ldots = \lambda_{n-2} = \lambda_{n-1} =  P_{0}^{2} $, with corresponding eigenvectors $\xi_{2}, \;\xi_{3}, \;\ldots , \;\xi_{n-2}, \;\xi_{n-1}$, as follows:
\begin{align}
\left(\begin{array}{c} 0 \\ -P_{2} \\ P_{1} \\ 0 \\ \cdots \\ 0 \\ 0 \\ \end{array}\right) , \;
\left(\begin{array}{c} 0 \\ -P_{3} \\ 0 \\ P_{1} \\ \cdots \\ 0 \\ 0 \\ \end{array}\right) , \;
\ldots \; , \;
\left(\begin{array}{c} 0 \\ -P_{n-2} \\ 0 \\ 0 \\ \cdots \\  P_{1} \\ 0 \\ \end{array}\right) , \;
\left(\begin{array}{c} 0 \\ -P_{n-1} \\ 0 \\ 0 \\ \cdots \\  0 \\ P_{1} \\ \end{array}\right) 
\notag 
\end{align} 
This means that the three-dimensional case is special, since it only has three eigenvectors:
\begin{align}
\xi_{0} =\left(\begin{array}{c} 1 \\ 0 \\ 0 \end{array}\right)  , \;\;
\xi_{1} =\left(\begin{array}{c} 0 \\ P_{1} \\ P_{2} \end{array}\right)  , \;\;
\xi_{2} =\left(\begin{array}{c} 0 \\ -P_{2} \\ P_{1}\\ \end{array}\right) 
\notag 
\end{align} 
See Section 4 of \citep{CCCS_AMAI}. We should think of these $\{ \xi_{i} \}$ as \emph{infinitesimal} eigenvectors.  They tell us the maximal and minimal \emph{initial} directions for the integrand of the energy functional:
\begin{align}
\xi_{1}^{\top} \begin{pmatrix} \vspace{0.25ex} {g}_{i,j}  \vspace{0.5ex} \\ \end{pmatrix} \xi_{1} \;&=\;   \xi_{1}^{\top} ( \lambda_{1} \, \xi_{1} ) \;=\;   \lambda_{1} \, | \xi_{1} |^{2} \;=\;   | \nabla U |^{2} \, | \xi_{1} |^{2} 
\label{maxEig} \\
\xi_{2} ^{\top} \begin{pmatrix} \vspace{0.25ex} {g}_{i,j}  \vspace{0.5ex} \\ \end{pmatrix} \xi_{2}  \;&=\;   \xi_{2} ^{\top} ( \lambda_{2} \, \xi_{2}  ) \;=\;   \lambda_{2} \, | \xi_{2}  |^{2} \;=\;  P_{0}^{2} \, | \xi_{2}  |^{2}
\label{minEig}
\end{align} 
However, we are primarily interested in maximizing or minimizing geodesic curves over \emph{finite} distances, and there is no guarantee that maximizing or minimizing the initial directions of the geodesics in the Euler-Lagrange equations will achieve this result.  

Our solution to this problem in \citep{CCCS_AMAI} is to rotate the infinitesimal eigenvectors around ${\bf x}_{0} $, and then compute the Riemannian/Euclidean distances along each rotated geodesic, up to some specified point, for example, up to the Euclidean angle $ \pi / 2 $ from the origin, so that we can determine the \emph{global} minimum or maximum.  In Figure \ref{CGCoordsWithEig}, the orange arrows depict the minimal infinitesimal eigenvector, $\xi_{2}$, in the positive $y$-direction and the negative $y$-direction, respectively.  But the minimal geodesics over a finite distance, labelled as $\theta^{2}$, were obtained by a counter-clockwise rotation through the angle $\alpha = 0.952169$ in the positive direction, and the angle $\alpha = 1.12681$ in the negative direction.  For the maximal infinitesimal eigenvector, $\xi_{1}$, the maximal geodesic over a finite distance, labelled as $\theta^{1}$, was obtained by a counter-clockwise rotation through the angle $\alpha = 0.114166$.  See Section 5.2 of \citep{CCCS_AMAI} for the detailed calculations. 

\begin{figure}[tb]
\begin{center}
\includegraphics[width=4.0in]{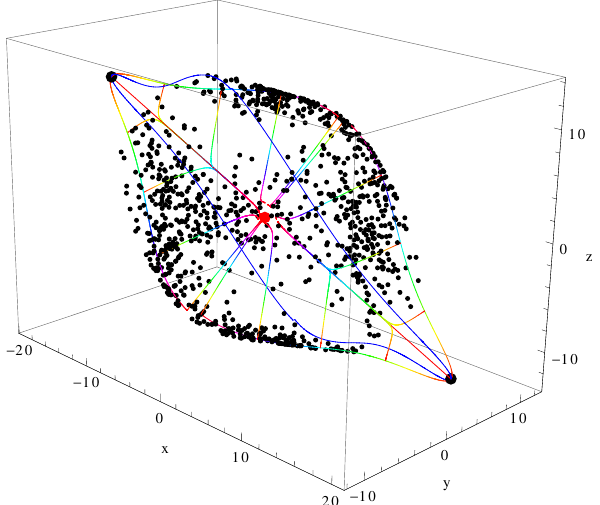}
\caption{Projecting data points from the curvilinear Gaussian potential along the $\rho$ coordinate curves to the Frobenius integral manifold.}
\label{CGCoordsWithData}
\end{center}
\end{figure}

Now look at Figure \ref{CGCoordsWithData}.  This figure includes the same view of the $\theta^{1}$ and $\theta^{2}$ coordinate curves as in Figure \ref{CGCoordsWithEig}. It also includes $1000$ data points generated according to the curvilinear Gaussian probability distribution from Figure \ref{CurvilinearGaussian}, and projected along the $\rho$ coordinate curve to the Frobenius integral manifold.  Qualitatively, the density of the data is low along the principal axis (the black dots), higher along the $\theta^{1}$ axis (the blue curve), and highest along the $\theta^{2}$ axis (the multi-colored curve). We can quantify this observation by computing a value for the ``reconstruction error'' adapted to our curvilinear coordinate system.  For this purpose, we use the geodesic coordinate curve, $\phi$, which is aligned with the green arrow in Figure \ref{CGCoordsWithEig}, and its flow, $\vec{\phi}_{s}({\bf x})$.  We want to compare two curvilinear coordinate systems for each data point: $(\rho,\theta^{1},\phi)$, which is defined by following the flow $ \vec{\phi}_{s}  \circ \vec{\theta}^{\,1}_{t}({\bf x}_{0}) $ on the integral manifold, and $(\rho,\theta^{2},\phi)$, which is defined by following the flow $ \vec{\phi}_{s}  \circ \vec{\theta}^{\,2}_{t}({\bf x}_{0}) $.  What happens when we drop the $\phi$ coordinate, in each case?  We can measure the ``reconstruction error'' by computing the Euclidean distance along the flow, $ \vec{\phi}_{s} $, and scaling this value down, proportionately, by the position of the data point along the $\rho$ coordinate curve.  We can then compute the \emph{root-mean-squared} (RMS) reconstruction error for the 1000 data points in each coordinate system.  It turns out that the RMS error for the truncation from $(\rho,\theta^{1},\phi)$ to $(\rho,\theta^{1})$ is 5.9431, and the RMS error for the truncation from $(\rho,\theta^{2},\phi)$ to $(\rho,\theta^{2})$ is 4.82787.  Thus, if our goal is to \emph{minimize} the ``reconstruction error,'' then the ``optimal'' lower dimensional encoding would be $(\rho,\theta^{2})$.  See Section 5.2 of \citep{CCCS_AMAI} for the detailed calculations. 
 
However, we show in Section 5.1 of \citep{CCCS_AMAI} that this phenomenon, in three dimensions, is actually an artifact of the projection of data points along the $\rho$ coordinate curve to the Frobenius integral manifold, and it largely disappears in higher dimensional spaces.   An alternative is to compute an analogue of the ``variance'' along the $\theta^{1}$ and $\theta^{2}$ coordinate curves in our curvilinear coordinate system.   For the truncation from $(\rho,\theta^{1},\phi)$ to $(\rho,\theta^{1})$, we simply measure the Euclidean arc length to the data point at $(\rho,\theta^{1})$ along the blue coordinate curve from a point at a Euclidean angle of $ \pi / 2 $ from the origin.  For the 1000 data points, the RMS dispersion of this quantity is 9.47889.  Similarly, for the truncation from $(\rho,\theta^{2},\phi)$ to $(\rho,\theta^{2})$, we measure the Euclidean arc length to the data point at $(\rho,\theta^{2})$ along the multi-colored coordinate curve from a point at a Euclidean angle of $ \pi / 2 $ from the origin. For the 1000 data points, the RMS dispersion of this quantity is 7.70126. Thus, if our goal is to \emph{maximize} the ``variance,'' then the ``optimal'' lower dimensional encoding would be $(\rho,\theta^{1})$. See Section 5.2 of \citep{CCCS_AMAI} again for the detailed calculations. 

Overall, from a careful analysis of the two three-dimensional examples in Sections 5.1 and 5.2 of \citep{CCCS_AMAI}, we arrive at several principles for the construction of a $( \rho, \Theta )$ coordinate system:
\begin{quotation}
\noindent
$\ldots$ (1) The geodesics computed from the infinitesimal eigenvectors of $\left( {g}_{ij}({\bf x}) \right)$ are not optimal, but we can rotate the coordinate system to find maximal and minimal geodesics up to a point on the surface at an angle of $\pi/2$ from the origin. (2) To compare geodesics, we can compare the ``variance'' of the projected data points along the geodesic curves, and choose the ones that maximize this quantity. (3) A comparison of the ``reconstruction error'' is not reliable, but might give reasonable results in higher dimensional cases. 
 
\vspace{1ex}
\noindent
See \citep{CCCS_AMAI}, at the end of Section 5.1.
\end{quotation}
Indeed, we will see that the higher dimensional case is different in two ways.
 
First, rotating the eigenvectors to find the optimal initial directions is not likely to work in $n$ dimensions.  In Figure \ref{CGCoordsWithEig}, we rotated the maximal and minimal infinitesimal eigenvectors in a two-dimensional plane, and we were able to compute and compare the geodesics under each rotation.  But to find the optimal value, we used a very naive algorithm, specifically, we applied \emph{Mathematica}'s \texttt{FindMaximum} or \texttt{FindMinimum} to a function from the angle $\alpha$ to the length of the geodesic.  Since each additional dimension in the ambient Euclidean space adds an additional dimension to the rotation matrix, taking us from $ \mathrm{SO}(2) $ to $ \mathrm{SO}(n - 1) $, the naive algorithm would be too complex.  In the present paper, we will simply skip the optimization step and use the infinitesimal eigenvector directly.  In fact, there are reasons to think that the error from this omission is small in a higher dimensional space.  For example, as we add dimensions and move from $ \mathrm{SO}(2) $, to $ \mathrm{SO}(3) $, to $ \mathrm{SO}(4) $, the angle between the maximal infinitesimal eigenvector and the maximal geodesic over a finite distance decreases. We will return to this issue in Section \ref{FutureWork} on Future Work.

Second, our measures of ``variance''  and ``reconstruction error'' are aligned in the higher dimensional case.  We will see that the maximal infinitesimal eigenvector (either with or without a rotation) generates a coordinate curve that maximizes the ``variance'' of the projected data points, and we will choose this as our first $\Theta$ coordinate.  We can then search for an optimal ordering of the remaining $\Theta$ coordinates. Note that any linear combination of the eigenvectors, $\xi_{2}, \;\xi_{3}, \; \ldots , \;\xi_{n-2}, \;\xi_{n-1}$, is also an eigenvector associated with the eigenvalue $\lambda_{2} = P_{0}^{2} $. We can therefore define new eigenvectors in the form:
\begin{equation}
\zeta = c_{2}\, \xi_{2} + c_{3}\, \xi_{3} +  \ldots +  c_{n-2}\, \xi_{n-2} +  c_{n-1}\, \xi_{n-1},
\label{linearCombEig}
\end{equation}
where the coefficients, $c_{i}$, are either $+1$ or $-1$.  (There are $2^{n-2}$ sequences of such coefficients, of course, but in practice we can generate a large subset randomly.) If we evaluate the integrand of the energy functional on these new eigenvectors,  we have:
\begin{equation}
\zeta^{\top} \begin{pmatrix} \vspace{0.25ex} {g}_{i,j}  \vspace{0.5ex} \\ \end{pmatrix} \zeta \;=\;   \zeta^{\top} ( \lambda_{2} \, \zeta ) \;=\;   \lambda_{2} \, | \zeta |^{2} \;=\;  P_{0}^{2} \, | \zeta |^{2}
\label{zetaEig}
\end{equation}
Now, order the eigenvectors, $\{ \zeta \}$, by their norms and apply the Gram-Schmidt orthogonalization procedure to a subsequence of length $n - 2$ or more.  We will then have an orthonormal basis for the $n - 2$ dimensional tangent subbundle at ${\bf x}_{0} $, in which each basis vector represents a minimal initial direction for the solution of the Euler-Lagrange equations, by \eqref{zetaEig}, and we will be able to construct a lower dimensional subspace that maximizes the ``variance''  and  minimizes the ``reconstruction error'' at the same time.  We will see how to do this, using real data, in Sections  \ref{AppMNIST} and \ref{AppCIFAR10}, \emph{infra}.
 
\begin{figure}[tbp]
\begin{center}
\includegraphics[width=4.0in]{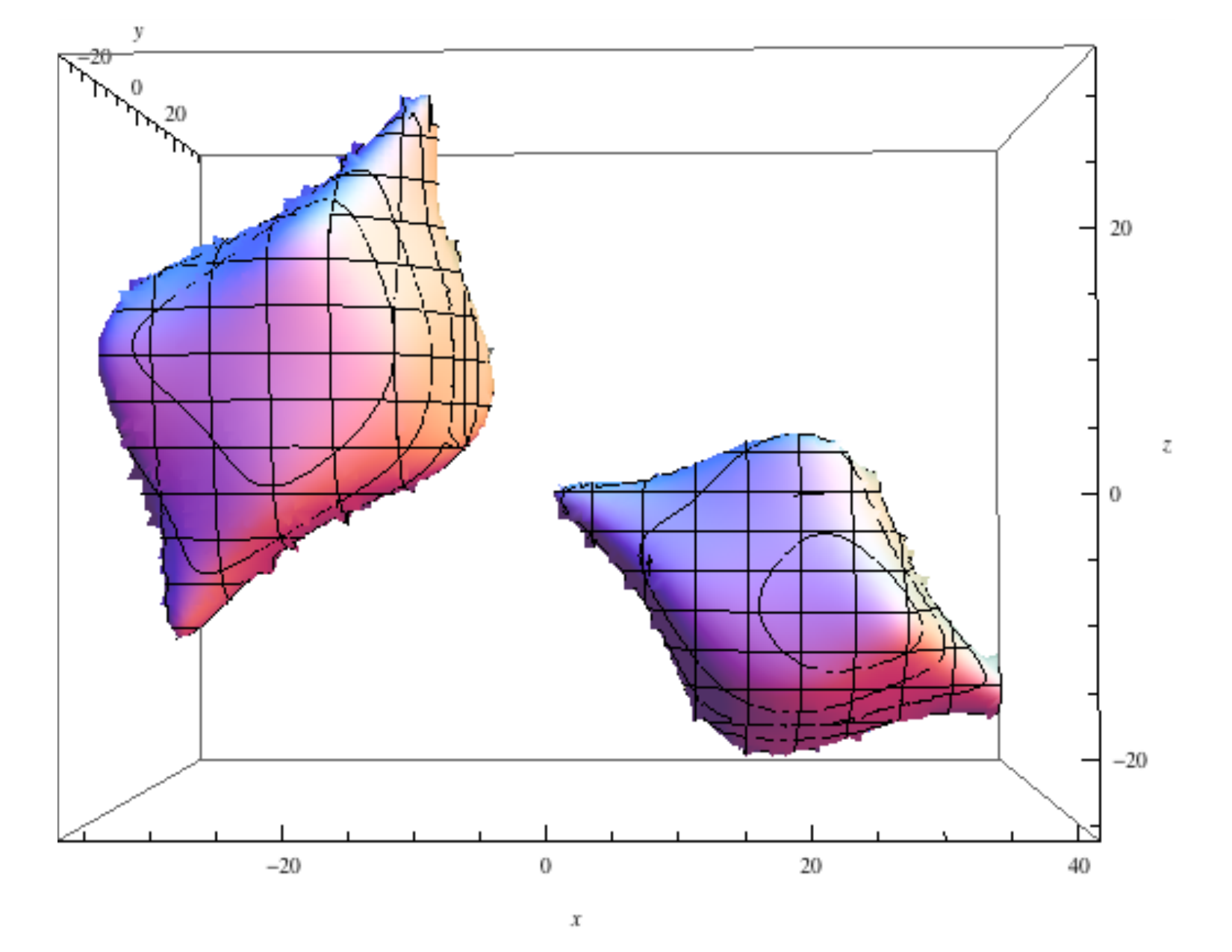}
\caption{A mixture of two curvilinear Gaussians, translated and rotated.}
\label{BiCurvilinearGaussian}
\end{center}
\end{figure}

We will also analyze in Sections \ref{AppMNIST}  and \ref{AppCIFAR10} a representation of \emph{clusters}, using real data.  The basic idea is illustrated for the three-dimensional case in Figure \ref{BiCurvilinearGaussian}, which shows two copies of the curvilinear Gaussian potential from Figure \ref{CurvilinearGaussian}.  One copy has been translated from $(0,0,0)$ to $(20,20,-10)$.  The other copy has been translated from $(0,0,0)$ to $(-20,-20,10)$ and rotated by $\pi / 2$ around a line parallel to the $y$-axis.  But the probability density is a \emph{mixture}.  If $U_{1}({\bf x})$ is the potential function for the first copy and $U_{2}({\bf x})$ is the potential function for the second copy, then the invariant probability density is given by:
\[
e^{2 U({\bf x})} \; \simeq \; p_{1} \, e^{\,2\,U_{1}({\bf x})}  +  p_{2} \, e^{\,2\,U_{2}({\bf x})},
\]
modulo an appropriate normalization factor. The advantage of this representation lies in the fact that our calculations for each copy will be almost independent of each other.  Observe that the effective potential function for the mixture will be: 
\[
U({\bf x}) \; \simeq \; \frac{1}{2}  \log ( \, p_{1} \, e^{\,2\,U_{1}({\bf x})}  +  p_{2} \, e^{\,2\,U_{2}({\bf x})} \, )
\]
Thus the gradient of $U({\bf x})$ in a neighborhood of  $(20,20,-10)$ will be almost identical to the gradient of $U_{1}({\bf x})$ computed by itself, and the gradient of $U({\bf x})$ in a neighborhood of  $(-20,-20,10)$ will be almost identical to the gradient of $U_{2}({\bf x})$ computed by itself.  Or, in terms of our dissimilarity metric, the two clusters in Figure \ref{BiCurvilinearGaussian} will be exponentially far apart.

\section{How to Estimate $\nabla U({\bf x})$ from Sample Data.}
\label{Estimator}

To apply the theory of differential similarity to real data, we need to estimate the quantities that appear in the equations for the $\rho$, $\Theta$, coordinate system.  Fortunately, this is not hard to do:  We will borrow a technique from the literature on the \emph{mean shift algorithm} \citep{Fukunaga&Hostetler:1975, Cheng:1995, Comaniciu&Meer:2002}. 

\begin{figure}[htbp]
\begin{center}
\includegraphics[width=5.0in]{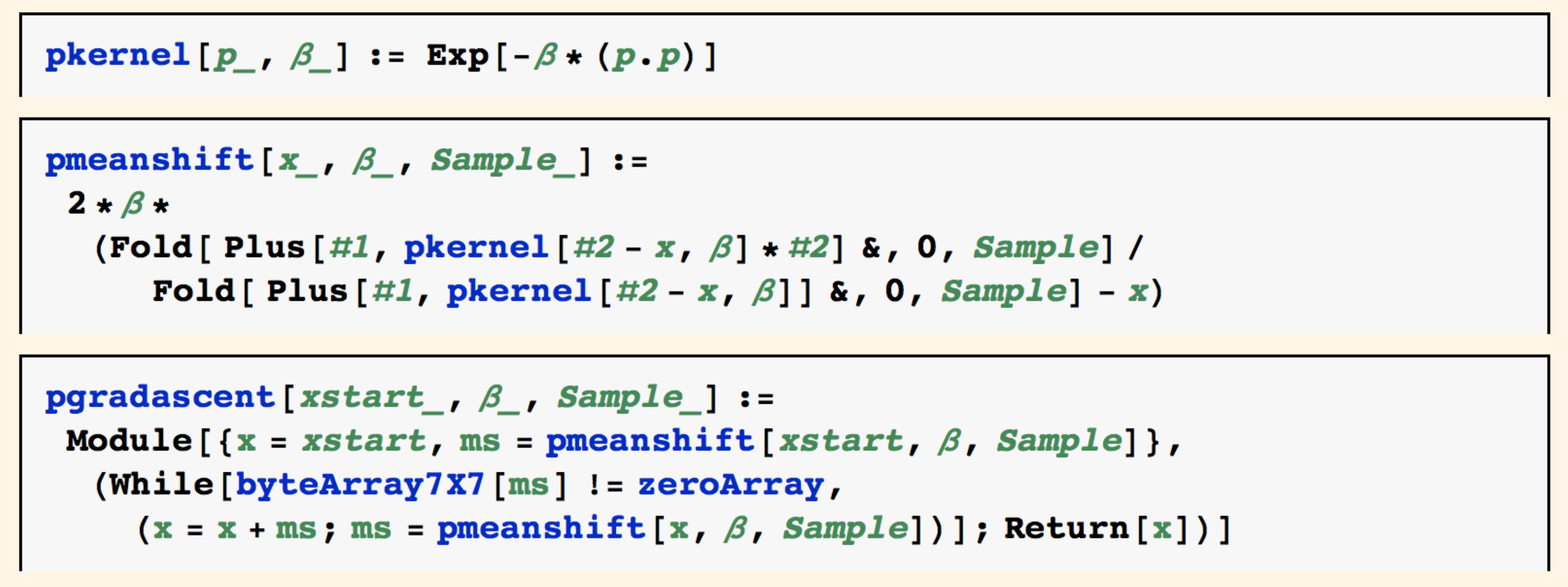}
\caption{The mean shift algorithm.}
\label{Kernel}
\end{center}
\end{figure}

Consider a \emph{kernel density estimator} with a Gaussian kernel:
\begin{equation*}
K(\mathbf{s}_{k}, \mathbf{x}) = \exp( - \beta \; \| \mathbf{s}_{k} - \mathbf{x} \|^{2})
\end{equation*}
in which $\mathbf{s}_{k}$ is a sample data point and $\beta$ is a smoothing parameter.  We can approximate a probability density by taking the average over these kernels:
\begin{equation*}
\hat{\mu}( \mathbf{x} ) = \frac{1}{n} \sum_{k=1}^{n} K(\mathbf{s}_{k}, \mathbf{x})
\end{equation*}
Now recall that $\nabla U({\bf x})$ is the \emph{gradient} of the \emph{log} of the stationary probability density in our theory.  So we can differentiate explicitly:
\begin{equation}
\frac{\partial}{\partial x^{j}} \log \hat{\mu}( \mathbf{x} ) = 2 \beta \Bigg[ \;  \frac{ \sum_{k=1}^{n} K(\mathbf{s}_{k}, \mathbf{x}) \; s_{k}^{j} }{ \sum_{k=1}^{n} K(\mathbf{s}_{k}, \mathbf{x}) }  \; - \; x^{j} \; \Bigg]
\label{EstimateGradU}
\end{equation}
to obtain an estimate for $\nabla U({\bf x})$. 

Figure \ref{Kernel} shows an example of \emph{Mathematica} code that implements Equation \eqref{EstimateGradU} and calls it \texttt{pmeanshift}. The expression ``mean shift'' refers to the fact that the first term inside the square brackets computes the weighted mean of the sample data points, $\{ \mathbf{s}_{k} \}$, around the point $\mathbf{x}$, and the second term subtracts $\mathbf{x}$ from this value to define a shift.  This function is typically used in a gradient ascent algorithm, such as \texttt{pgradascent}.  (In Figure \ref{Kernel}, \texttt{byteArray7X7} and \texttt{zeroArray} are tailored to the examples in Sections \ref{AppMNIST} and \ref{AppCIFAR10}, \emph{infra}.)  In this case, the gradient ascent algorithm will find the \emph{mode} of the probability density, $\hat{\mu}( \mathbf{x} )$, for a given \texttt{Sample}.  It is often applied iteratively:  Choose a sample around $\mathbf{x}$, ascend to the mode of the probability density to find a new $\mathbf{x}$, choose another sample, and repeat. 

We can certainly apply the code in Figure \ref{Kernel} to find the origin of an admissible coordinate system in our theory, and we will see examples of this kind of an application in Sections \ref{AppMNIST} and \ref{AppCIFAR10}.  However, Equation \eqref{EstimateGradU} has a much broader application than this, because every quantity that enters into the definition of the $\rho$ coordinates and the $\Theta$ coordinates depends on $\nabla U({\bf x})$.  We will actually modify \eqref{EstimateGradU} slightly to simplify these calculations.  The quantity $ \sum_{k=1}^{n} K(\mathbf{s}_{k}, \mathbf{x}) $ in the denominator of the first term is a normalization factor, and we can multiply the formula by this factor (and divide out the factor $ 2 \beta $) to write down an equation for the gradient without normalization:  
\begin{equation*}
\mathtt{DU} _{j} ( \mathbf{x} ) =   \sum_{k=1}^{n} K(\mathbf{s}_{k}, \mathbf{x})  \; s_{k}^{j}   \; - \; x^{j} \sum_{k=1}^{n} K(\mathbf{s}_{k}, \mathbf{x})  
\end{equation*}
The second derivatives are now much easier to compute:
\begin{equation*}
\frac{\partial}{\partial x^{i}} \mathtt{DU}_{j} ( \mathbf{x} ) =  2 \beta \sum_{k=1}^{n} K(\mathbf{s}_{k}, \mathbf{x})  (x^{i} - s_{k}^{i} ) (x^{j} - s_{k}^{j})    -  \delta_{i,j} \sum_{k=1}^{n} K(\mathbf{s}_{k}, \mathbf{x}) , 
\end{equation*}
where $ \delta_{i,j} $ is the Kronecker delta function.  These modifications will not alter the specification of our coordinate system, for two reasons: (1) the definition of the $\rho$ coordinate in Equations \eqref{IntegralCurveDiffEq} and \eqref{IntegralCurveIntEq} is already normalized, so the factor $ \sum_{k=1}^{n} K(\mathbf{s}_{k}, \mathbf{x}) $ would cancel out; and (2) the basis vectors $ \{ {\bf V}_{i}({\bf x}) \} $ can be divided by $ P_{0}({\bf x}) $ without affecting the Frobenius integral manifold, so again the factor $ \sum_{k=1}^{n} K(\mathbf{s}_{k}, \mathbf{x}) $ would cancel out.  The code for $ \mathtt{DUjF} $ and $ \mathtt{DiDUjF} $ is shown in Figure \ref{gradU}.  The variables $ {\beta}\mathtt{eta} $ and $ \mathtt{SamplePoints } $ are intended to be defined externally, and the vector variable $ \mathtt{VexpK} $ is expected to be instantiated by the kernel function $ \mathtt{VkernelF} $.

\begin{figure}[htbp]
\begin{center}
\includegraphics[width=5.0in]{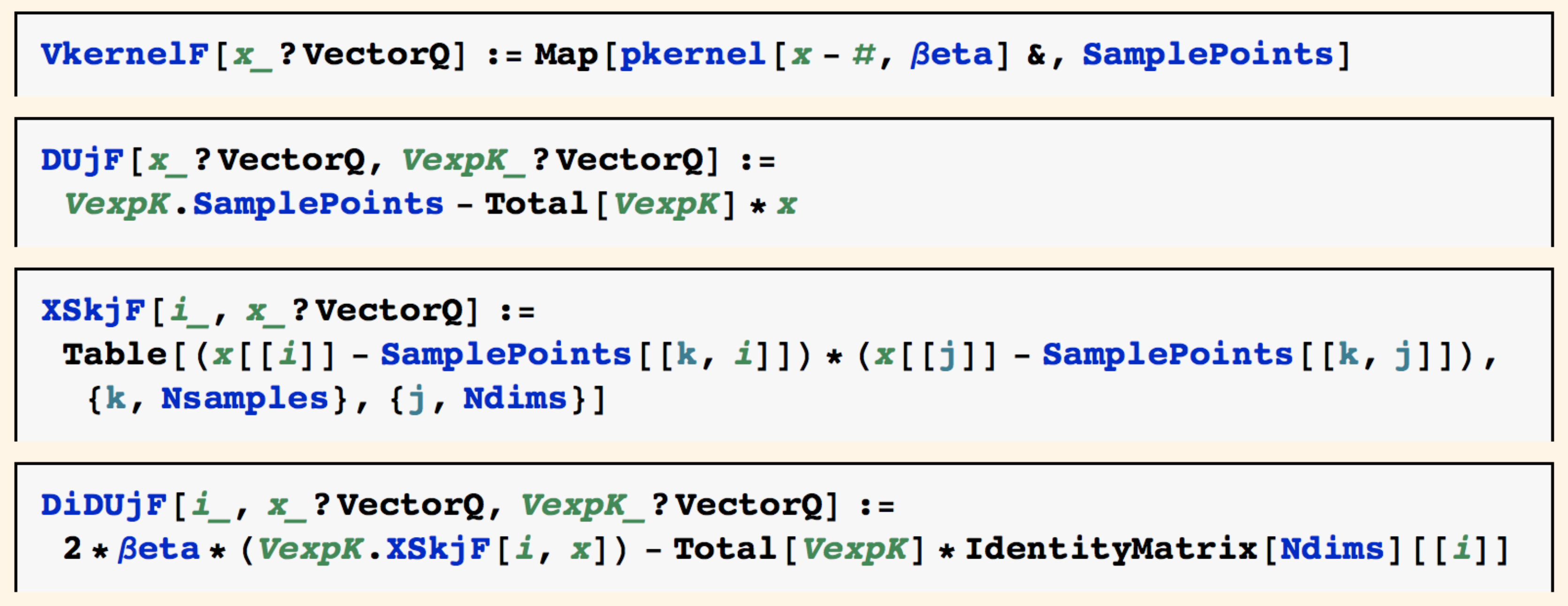}
\caption{The gradient (without normalization) and its derivatives.}
\label{gradU}
\end{center}
\end{figure}

\begin{figure}[htbp]
\begin{center}
\includegraphics[width=5.0in]{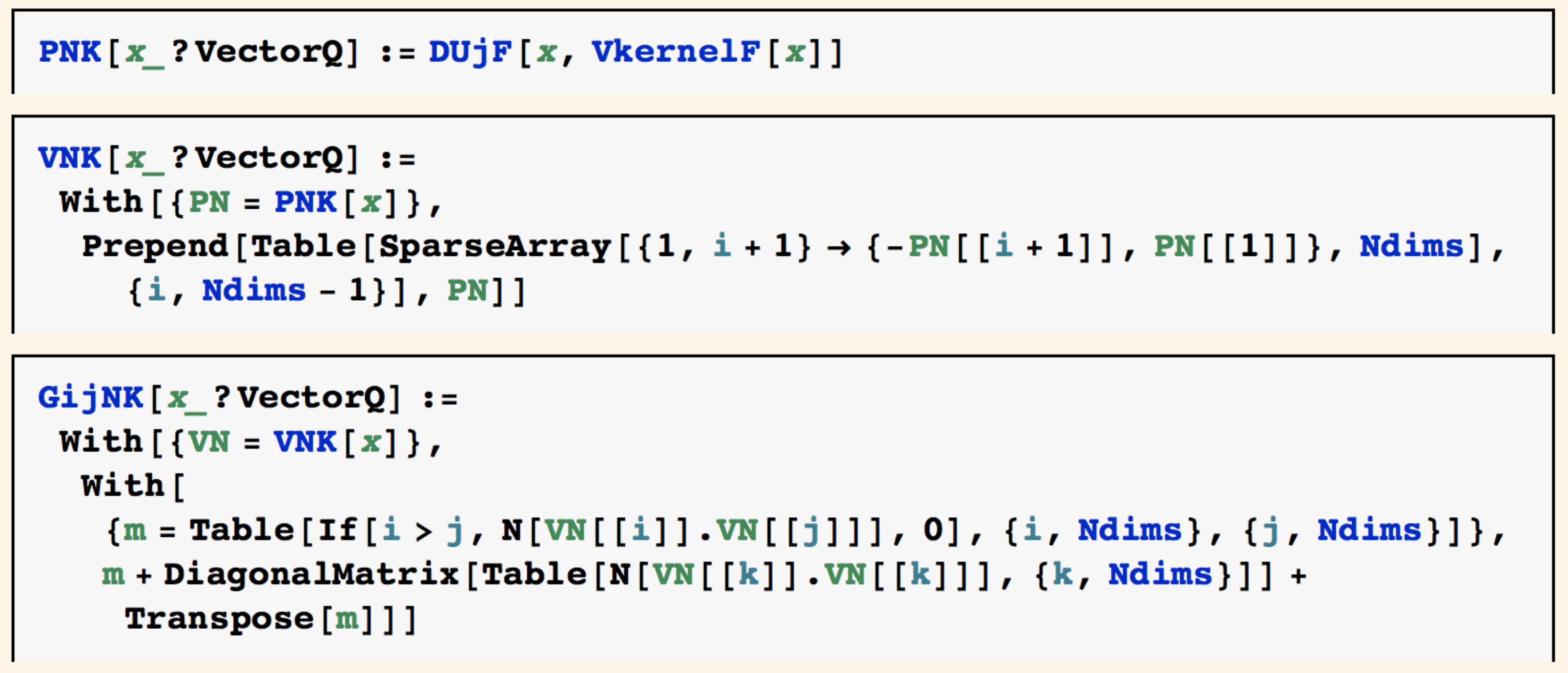}
\caption{The gradient: $(\, P_{0}({\bf x}) ,  P_{1}({\bf x}) ,  \ldots , P_{d - 1}({\bf x})\, )$, basis vectors: $ \{\, {\bf V}_{i}({\bf x})\, \} $, and Riemannian dissimilarity metric: $\left( \,{g}_{i,j}({\bf x})\, \right)$.}
\label{Gij}
\end{center}
\end{figure}
 
\section{Computing the Geodesic Coordinate Curves.}
\label{GeoCoords}
 
The main data structures in our theory are defined in Figure \ref{Gij}.  But the only data structure needed to write down the Euler-Lagrange equations is the gradient: 
$ \nabla U({\bf x}) \,=\, (\,  P_{0}({\bf x}) ,  P_{1}({\bf x}) , \ldots , P_{d - 1}({\bf x}) \,) $.  Here are the Euler-Lagrange equations: 
\begin{align} 
\lambda_{i-1}^{\prime}( t ) \;=\; & \left(  P_{0}[ \mathbf{x}( t )] \sum_{j=1}^{d-1} (v^{j})^{2}( t ) -  \sum_{j=1}^{d-1} v^{j}( t ) \, \lambda_{j}( t )  \right) \,  \frac{\partial P_{0}}{\partial x^{i}}[ \mathbf{x}( t )] \;\; + \notag \\
 & \left(  \sum_{j=1}^{d-1}  P_{j}[ \mathbf{x}( t )] \, v^{j}( t ) \; + \;   \lambda_{0}( t )  \right)  
\,  \sum_{j=1}^{d-1}   \frac{\partial P_{j}}{\partial x^{i}}[ \mathbf{x}( t )] \, v^{j}( t ), \notag \\ 
   &  \mathrm{for} \hspace{0.5em} i = 1, \ldots, d \notag \\
   \notag \\
v^{i}( t ) \;=\; &  \frac{1}{P_{0}[ \mathbf{x}( t )]} \left(  \lambda_{i}( t ) - P_{i}[ \mathbf{x}( t )] \, \frac{  \sum_{j=0}^{d-1} P_{j}[ \mathbf{x}( t )] \,  \lambda_{j}( t ) }{ \sum_{j=0}^{d-1}  P_{j}^{2}[ \mathbf{x}( t )] }  \right), \notag \\
   &  \mathrm{for} \hspace{0.5em} i = 1, \ldots, d-1 \notag \\
   \notag \\
(x^{1})^{\prime}( t ) \;=\; &  -  \sum_{j=1}^{d-1}  P_{j}[ \mathbf{x}( t )] \, v^{j}( t ) \notag \\
(x^{i})^{\prime}( t ) \;=\; & P_{0}[ \mathbf{x}( t )] \, v^{i-1}( t ), \; \mathrm{for} \hspace{0.5em} i = 2, \ldots, d  \notag
\end{align}
This is a system of ordinary differential equations for $ \mathbf{x}( t ) $ and for the Lagrange multipliers, $ \{ \lambda_{i}( t ) \} $, plus a system of algebraic equations for $ \mathbf{v}( t ) $, in which $d$ is the dimensionality of the space.  In addition, we have to include the set of kernel equations, which enter into the definition of the gradient.  So this is a very large system of equations, approximately $ 3 * \mathtt{Ndims} + \mathtt{Nsamples} $.  

\begin{figure}[htbp]
\begin{center}
\includegraphics[width=5.0in]{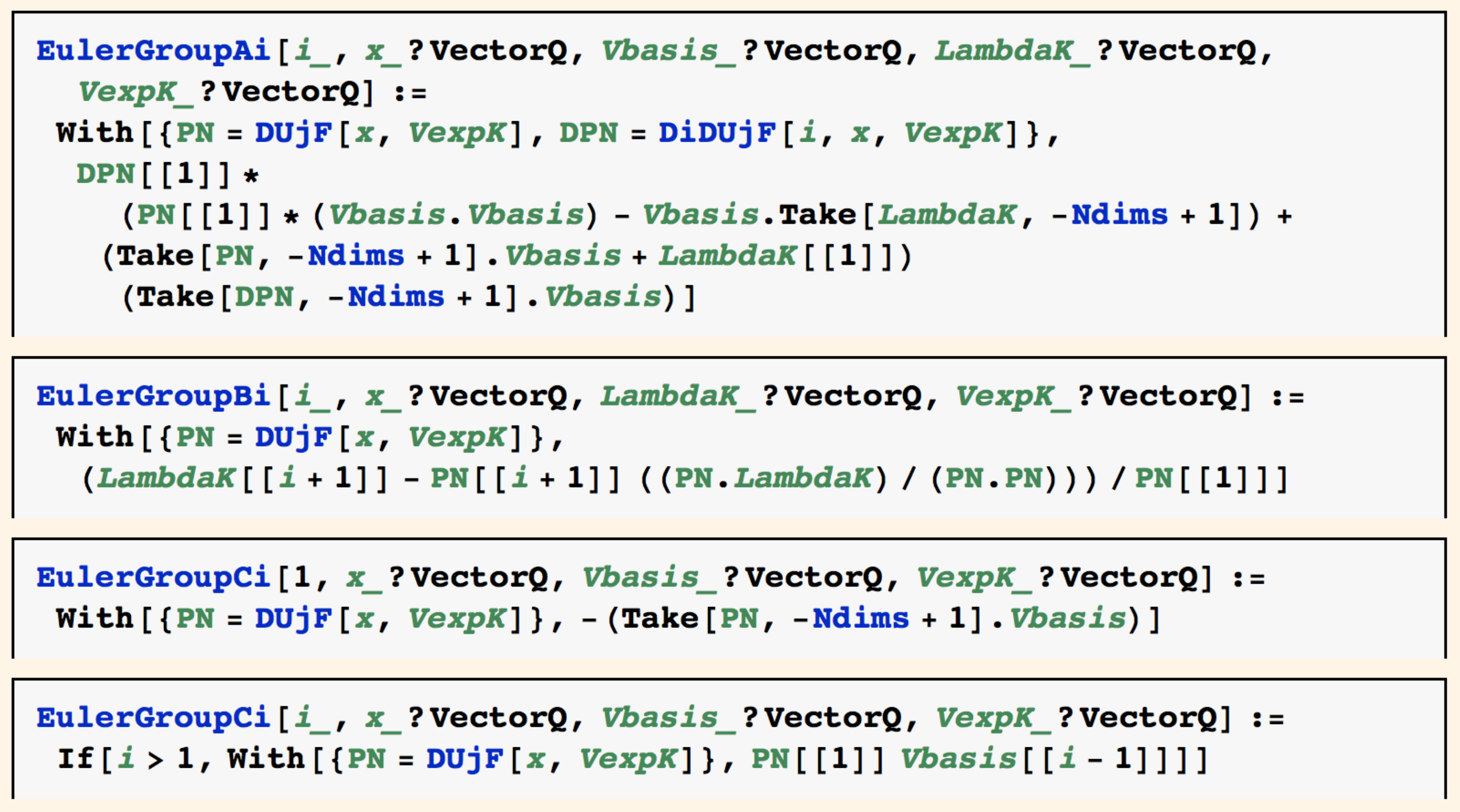}
\caption{The Euler-Lagrange equations in \emph{Mathematica}.}
\label{EulerEqns}
\end{center}
\end{figure}

Figure \ref{EulerEqns} shows the \emph{Mathematica} code for the right-hand sides of the Euler-Lagrange equations, in three groups.  Tracing this code back  from Figure \ref{EulerEqns} to Figure \ref{gradU}, it is easy to verify that everything depends on the set of $ \mathtt{SamplePoints } $ and the constant $ {\beta}\mathtt{eta} $.  Thus we should be able to apply this code to a real dataset.  We will see how this works in Section \ref{AppMNIST} on the MNIST dataset and in Section \ref{AppCIFAR10} on the CIFAR-10 dataset.


\section{Example: 7$\times$7 Patches in the MNIST Dataset.}
\label{AppMNIST}

\begin{figure}[htbp]
\begin{center}
\includegraphics[width=4.5in]{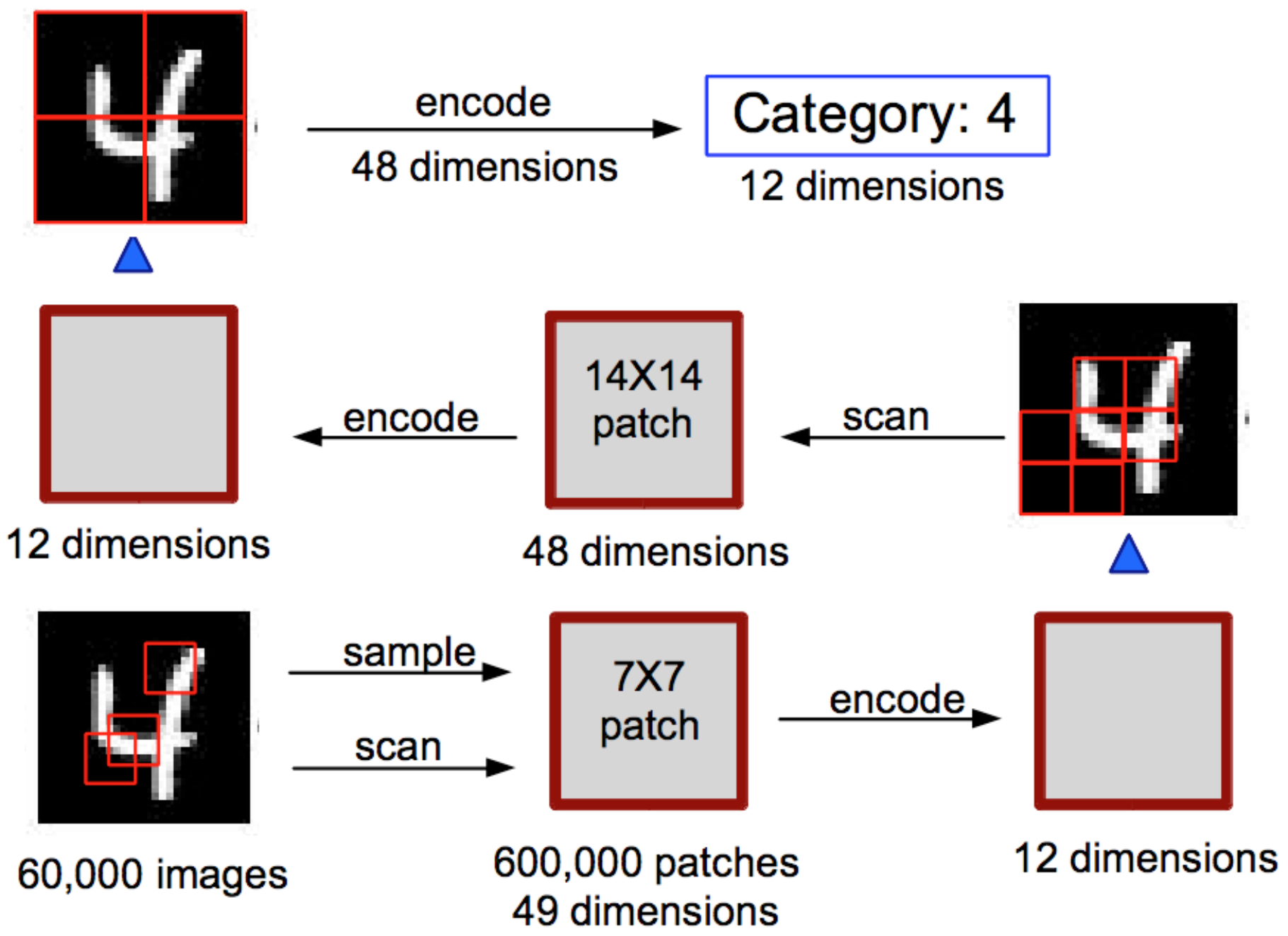}
\caption{An architecture for deep learning on the MNIST Dataset.}
\label{ArchitectureDL}
\end{center}
\end{figure}

Figure \ref{ArchitectureDL} displays an architecture for deep learning on the MNIST dataset \citep{LeCun_etal_1998}, based on several examples in the recent literature \citep{Ranzato:2009, Coates:2012}. The process starts in the lower-left corner and follows the arrows to the upper-right corner. The first step is to scan and randomly sample the 60,000  28$\times$28 images to extract a collection of 7$\times$7 ``patches'' from each one. Choosing a sampling rate of 10 scans per image, which is approximately 2\%, we end up with 600,000 patches, each one represented as a point in a 49-dimensional space. The original image intensity at each pixel is represented by an integer in the range $[0,255]$, but we have scaled these values down to a real number in the range $[0,1]$. Thus the greatest distance between any two points in our 49-dimensional hypercube is $\sqrt{49} = 7.0$. We will be using the theory of differential similarity, in this section of the paper, to reduce the dimensionality of the space to 12 dimensions, as shown in the lower-right corner of Figure \ref{ArchitectureDL}.  Note that the process continues upwards (following the blue arrow) by assembling four adjacent 7$\times$7 patches into a 2$\times$2 matrix and then resampling the image using the larger 14$\times$14 patch, but we will not pursue this analysis in the present paper.  We are thus looking only at the first step in the process, which is an example of the classical problem of \emph{unsupervised feature learning}.

There are several parameters that control the behavior of the algorithms defined in Sections \ref{Estimator} and \ref{GeoCoords}. The smoothing parameter, $\beta$, can be set to different values in different circumstances.  (In the traditional notation for a Gaussian, $\beta = 1\, / \, 2  \sigma^{2}$.) We have experimented with a range of values, but we have found that the two values, $\beta = 1/8$ (or $\sigma = 2.0$) and $\beta = 1$ (or $\sigma = 0.707107$), are sufficient for most purposes.  A typical strategy is to run a computation with $\beta = 1/8$, for a coarse approximation, and then to refine the result by running the computation again with $\beta = 1$. Another parameter is the variable $\mathtt{SamplePoints}$. \emph{Mathematica} provides a function called \texttt{Nearest} which, when applied to a large dataset, returns a \texttt{NearestFunction} which maintains an efficient data structure that can find the $n$ points that are the nearest to any specified point in the dataset.  Again, we have experimented with several values of $n$, but we have found that a \emph{Data Sphere} of 32,000 points around each prototype yields good results.  Within each Data Sphere, we draw two random 800 point samples and use these as inputs to the algorithms in Sections \ref{Estimator} and \ref{GeoCoords}. A typical strategy is to run the computations separately on each 800 point sample, and if the results are qualitatively the same and quantitatively within the range that we would expect from random sampling, then we run a final computation on the union of the two samples.  Note that a 1600 point sample is $5\%$ of a 32,000 point Data Sphere.

In addition to the definition of the Data Sphere, it is also useful to define a \emph{Coordinate Sphere} for each prototype.  Recall that the principal axis is defined by a point at a fixed Euclidean distance from the origin that has a minimal Riemannian distance to the origin, as measured along the $\rho$ coordinate curve.  The fixed Euclidean distance gives us a sphere, of course, and it is reasonable again to think of the size of this sphere in terms of the number of data points it contains.  Combined with a 32,000 point Data Sphere, we have found that an 8,000 point Coordinate Sphere yields good results.

\begin{figure}[tb]
\begin{center}
\includegraphics[width=5.0in]{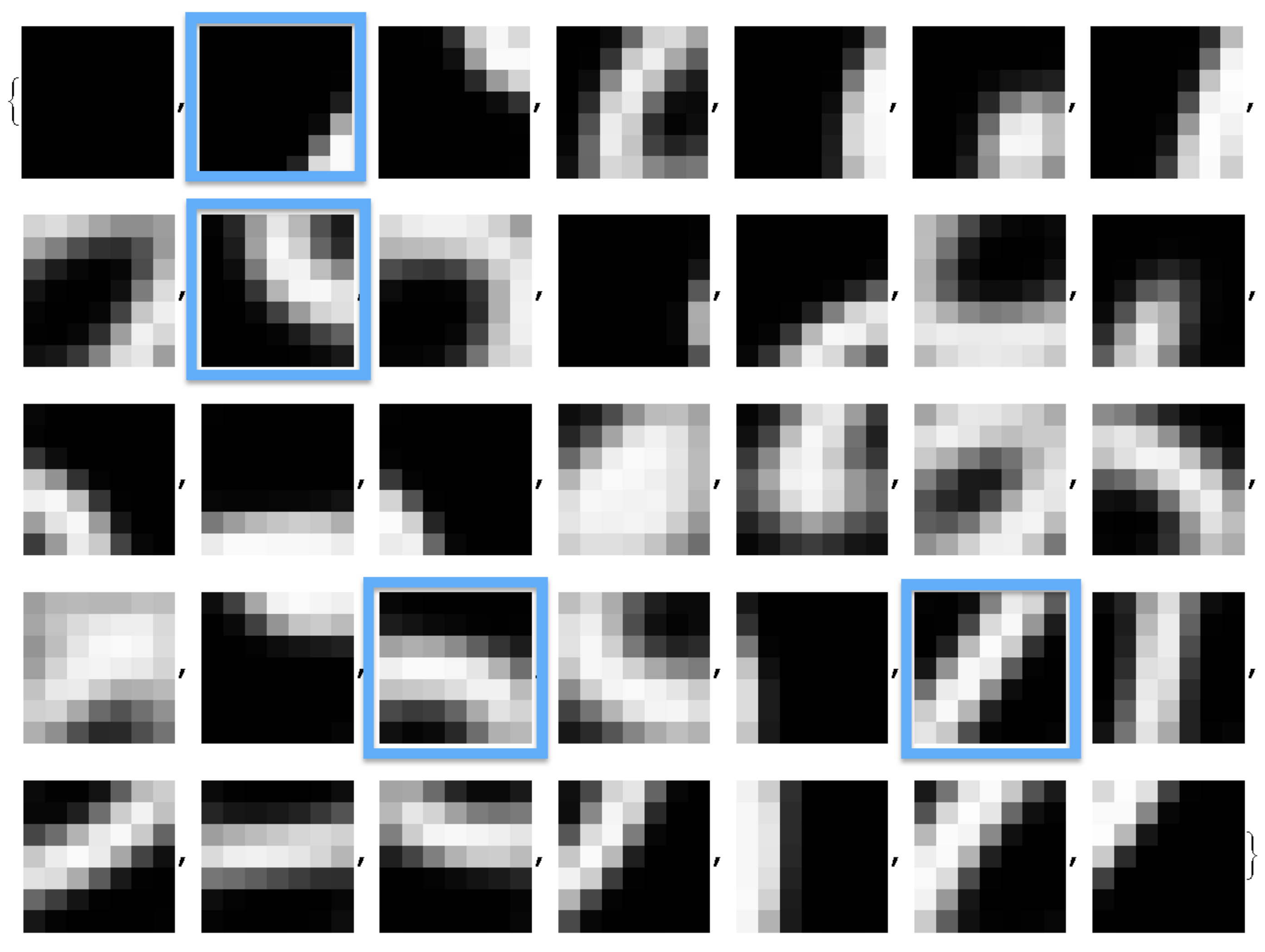}
\caption{35 prototypes for the 600,000 7$\times$7 patches in the MNIST Dataset.}
\label{Prototypes}
\end{center}
\end{figure}

Figure \ref{Prototypes} shows an initial selection of 35 prototypes for the 600,000 7$\times$7 patches. The prototypes outlined in blue were selected (subjectively) for a more detailed investigation.  (Prototype 02, in which most of the pixels are black, was only used to provide a lower dimensional test case for the development of our algorithms) We will therefore focus our attention on the three examples: Prototype 09, Prototype 24, Prototype 27.  The full set of 35 prototypes was constructed by a combination of: (i) \texttt{pgradascent} in Figure \ref{Kernel}, with \texttt{xstart} generated randomly, with $\beta = 1/8$, and with a 2000 point \texttt{Sample} retrieved by the \texttt{NearestFunction} at each iteration; plus (ii) a traditional clustering algorithm (\texttt{FindClusters} in \emph{Mathematica}) applied to the output of \texttt{pgradascent}; and finally (iii) some manual pruning at the end of the process to eliminate redundancies. We then constructed a 32,000 point Data Sphere around each prototype, and used these for all the subsequent data analyses. Note that $35 \times 32,000 = 1,120,000$, so we can potentially partition the entire dataset almost twice over.  In fact, an analysis \emph{ex post} shows that the union of the 32,000 point Data Spheres around these 35 prototypes covers $92.5\%$ of the distinct 7$\times$7 patches in our sample.  So there is room for some improvement here, but not much.
  
Although each prototype is located at the center of its Data Sphere, by definition, it will not necessarily be located at a mode of the probability distribution, since we have altered the sample in \texttt{pgradascent} to include the nearest 32,000 data points.  Thus we now compute the integral curve of $\nabla U$, starting at the original prototype, to arrive at a modified prototype, where $\nabla U({\bf x}) = 0$. The results are shown in the center column of Figure \ref{Proto&prAxis}.  We then draw a Coordinate Sphere around each modified prototype, and apply Step One of the procedure defined in Section \ref{GeomM} 
to compute the points at which the $\rho$ coordinate curves drawn \emph{inwards} to the origin have minimal Riemannian length. These points are shown in the right column of Figure \ref{Proto&prAxis}. The $\rho$ coordinate curves themselves are shown in Figure \ref{rhoCurves3Proto}.  

\begin{figure}[htbp]
\begin{center}
\includegraphics[width=4.5in]{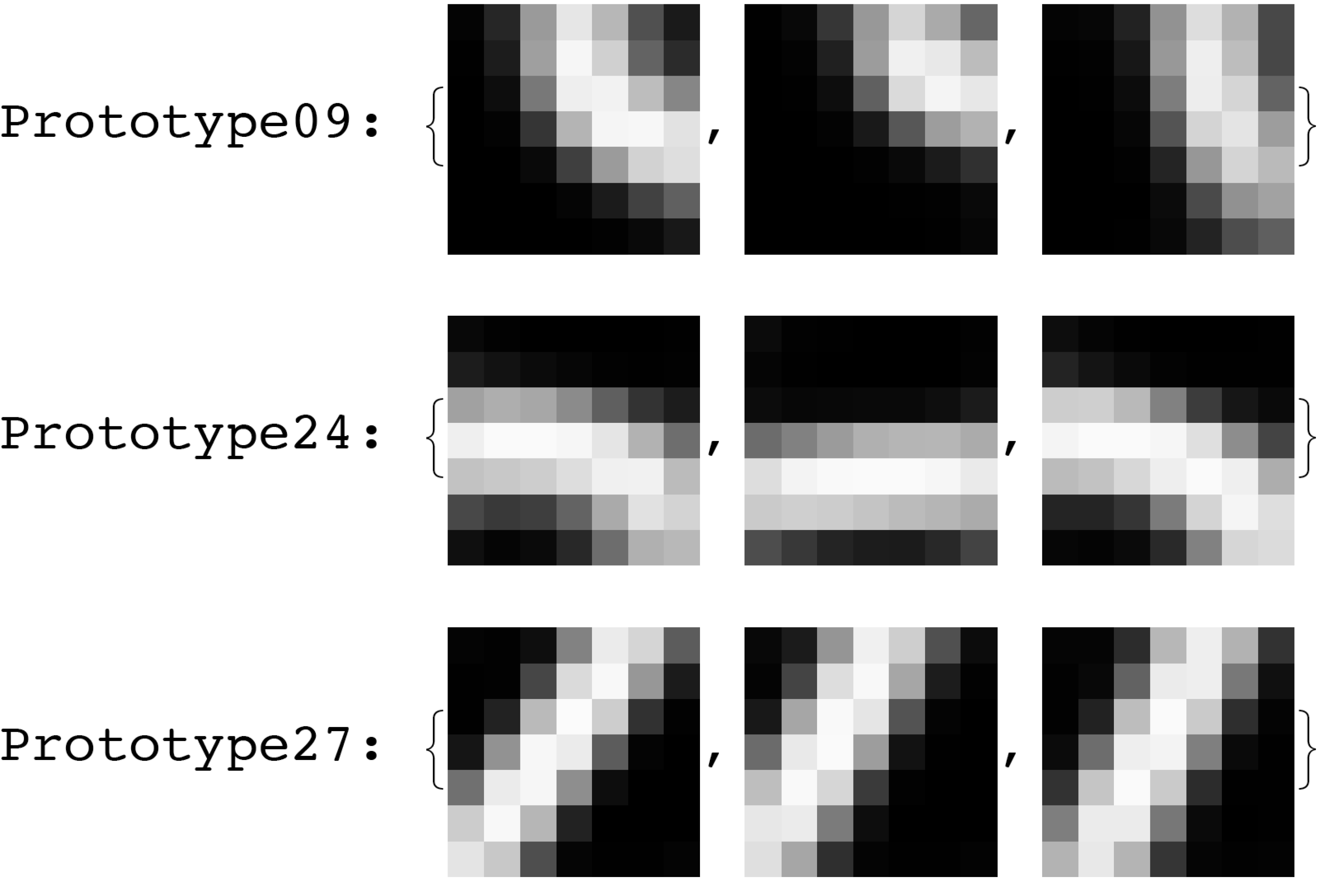}
\caption{Initial computations for three prototypes, (i) Left Column: original prototypes, from Figure \ref{Prototypes}; (ii) Center Column: modified prototypes, computed within the 32,000 point Data Spheres; (iii) Right Column: the data points computed on the principal axes, at their intersection with the 8,000 point Coordinate Spheres.}
\label{Proto&prAxis}
\end{center}
\end{figure}

\begin{figure}[htbp]
\begin{center}
\includegraphics[width=4.5in]{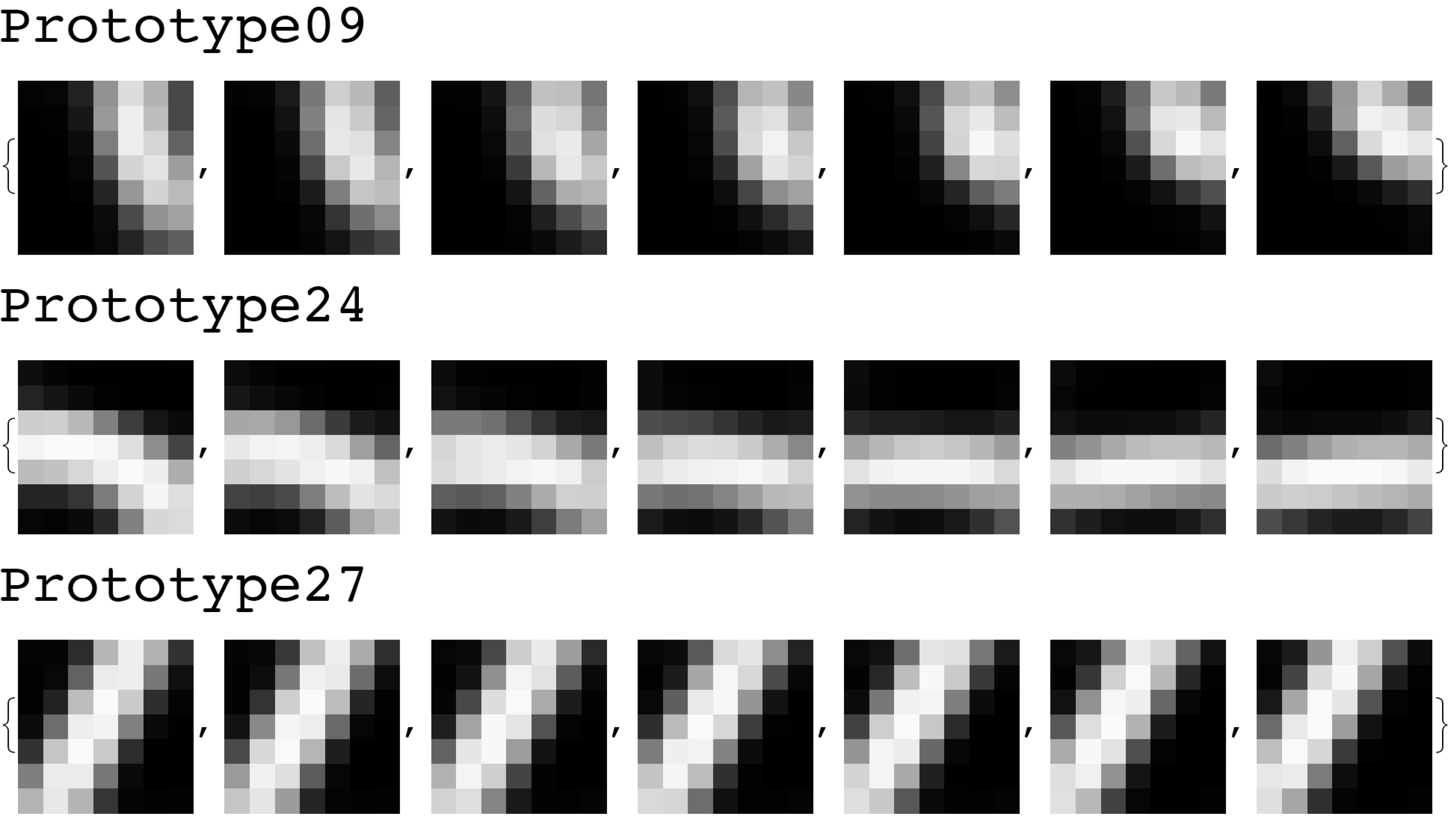}
\caption{The $\rho$ coordinate curves computed along the principal axes, for three prototypes.}
\label{rhoCurves3Proto}
\end{center}
\end{figure}

Table \ref{SphereRadius} shows the size of the Data Sphere and the Coordinate Sphere for the three \emph{prototypical clusters} that we are analyzing.  These values should be compared with the maximal distance in the hypercube ($\sqrt{49} = 7.0$), and with the standard deviation of the kernel density estimator when the smoothing parameter $\beta = 1$ ($\sigma = 0.707107$).

\begin{table}[htbp]
\caption{Radius in the original Euclidean space of the 32,000 point Data Sphere and the 8,000 point Coordinate Sphere, for three prototypical clusters.}
\begin{center}
\begin{tabular}{r|c|c|c|c|}
	& \hspace{3.2em} (a) \hspace{3.2em} & \hspace{3.2em} (b) \hspace{3.2em} \\ 
	& Radius of & Radius of	\\ 
	& Data Sphere & Coordinate Sphere \\ \hline
Prototype 09 &  2.95906  &  1.75322  \\ \hline
Prototype 24 & 3.15226   &  2.30487  \\ \hline
Prototype 27 & 2.97573   &  1.95364   \\ \hline
\end{tabular}
\end{center}
\label{SphereRadius}
\end{table}

The distances in Table \ref{rhoDistances} reveal some of the main properties of our geometric model.  The Euclidean distance along each $\rho$ coordinate curve, Column (c) in Table \ref{rhoDistances}, is greater than the Euclidean distance between its two end points, Column (b) in Table \ref{SphereRadius}. This is an indicator of the fact that the principal axis is a curve, and not a straight line. The Riemannian distance inwards along the $\rho$ coordinate curves, 
Column (d) in Table \ref{rhoDistances}, is much less than the Euclidean distance in Column (c).  This is an indicator of the fact that the principal axis is traversing a region of high probability density. But Column (e) in Table \ref{rhoDistances}, which displays the Riemannian distance computed \emph{outwards} along the $\rho$ coordinate curve for approximately the same Euclidean distance, is much larger because the probability density in this region is much smaller. 

\begin{table}[htbp]
\caption{Distances along the $\rho$ coordinate curves for the principal axes from a point on the Coordinate Sphere, both inwards and outwards. Compare Figures \ref{Proto&prAxis} and \ref{rhoCurves3Proto}.}
\begin{center}
\begin{tabular}{r|c|c|c|c|}
	& \hspace{2.2em} (c) \hspace{2.2em} & \hspace{2.2em} (d) \hspace{2.2em} & \hspace{2.2em} (e) \hspace{2.2em} \\ 
	& Euclidean & Riemannian & Riemannian  \\ 
	& Distance & Distance & Distance \\
	& Inwards & Inwards & Outwards \\ \hline
Prototype 09 &  2.40866  & 0.90783 & 4.01243 \\ \hline
Prototype 24   & 2.90017 &  0.764126 & 6.93029 \\ \hline
Prototype 27   &  2.0754  &  0.38668 & 3.41653 \\ \hline
\end{tabular}
\end{center}
\label{rhoDistances}
\end{table}
  
It is also interesting to analyze Figures  \ref{Proto&prAxis} and \ref{rhoCurves3Proto} qualitatively.  These images were constructed by solving a system of differential equations on pixels in a high-dimensional image space, a very local process, but they seem to possess certain global coherence properties.  For Prototype 09, we see a simple geometric shape modified by a shift transformation. The mapping from the left column to the center column in Figure \ref{Proto&prAxis} is basically a two-dimensional shift, one pixel up and one pixel to the right.  The mapping from the right column to the center column, which is displayed in Figure \ref{rhoCurves3Proto} as a discretely sampled continuous curve, shifts the same geometric shape up by two pixels.  For Prototype 27, we see another simple geometric shape modified by a sequence of global transformations. The mapping from the left column to the center column in Figure \ref{Proto&prAxis} is basically a counter clockwise rotation.  The lower left corner is fixed, approximately, while the upper end of the geometric shape is shifted one pixel to the left.  The mapping from the right column to the center column in Figure \ref{Proto&prAxis}, which is displayed in Figure \ref{rhoCurves3Proto} as a discretely sampled continuous curve, shifts the same geometric shape one pixel to the left.  Prototype 24 is somewhat more complex. Here the discretely sampled continuous mapping in Figure \ref{rhoCurves3Proto} shifts the left half of the geometric shape down by one pixel, while it bends the right half of the geometric shape up to construct a thick horizontal line. 
 
Keeping these coherence properties in mind, let's now look at Step Two and Step Three of the procedure defined in Section \ref{GeomM}.  In Step Two, we first diagonalize the Riemannian matrix, $\left( \,{g}_{i,j}({\bf x})\, \right)$, at the points on the principal axes shown in the right column of Figure \ref{Proto&prAxis}.  Taking Prototype 09 as an example, the two eigenvalues are 35.7443 and 13.572, and the infinitesimal eigenvectors are constructed to have unit norm.  Thus the integrand of the energy functional for the maximal infinitesimal eigenvector, $\xi_{1}$, is 35.7443. (See Equation \eqref{maxEig} in Section \ref{GeomM}.)  For the minimal infinitesimal eigenvectors, we adopt the following strategy:  We generate 10,000 random linear combinations of the eigenvectors, $\xi_{2}, \;\xi_{3}, \; \ldots , \;\xi_{48}$, using Equation \eqref{linearCombEig}, we sort these linear combinations by their norms, and we select every 200th entry in the sorted list.  We then apply the Gram-Schmidt orthogonalization procedure (using \texttt{Orthogonalize} in \emph{Mathematica}) to the selected eigenvectors to construct a set of 47 orthonormal basis vectors. For each of these basis vectors, the integrand of the energy functional is 13.572. (See Equation \eqref{zetaEig} in Section \ref{GeomM}.) Furthermore, since $\xi_{1}$ was orthogonal to all of the eigenvectors, $\xi_{2}, \;\xi_{3}, \; \ldots , \;\xi_{48}$, we now have an orthonormal basis for the 48-dimensional tangent subbundle at the point on the principal axis that will serve as the origin of our $\Theta$ coordinate curves. 
    
\begin{figure}[htbp]
\begin{center}
\includegraphics[width=4.5in]{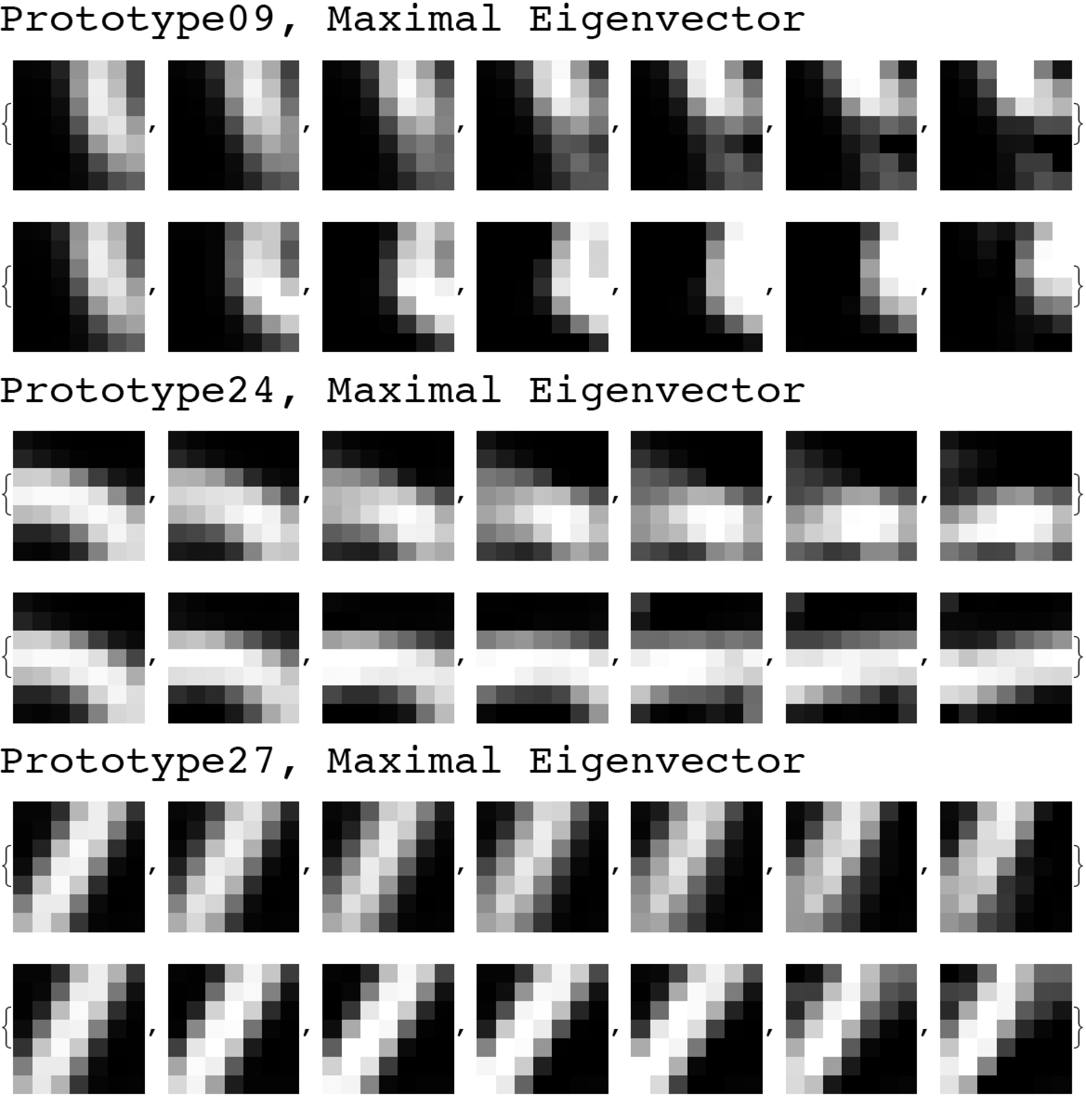}
\caption{$\Theta$ coordinate curves for the maximal infinitesimal eigenvectors. The first row in each case shows the coordinate curve in the direction of the infinitesimal eigenvector, to the Euclidean angle $ \pi / 2 $; the second row shows the coordinate curve in the opposite direction, to the Euclidean angle $ - \pi / 2 $.}
\label{ndCurvesMaxEig}
\end{center}
\end{figure}

Figure \ref{ndCurvesMaxEig} displays the $\Theta$ coordinate curves that were computed in Step Three by solving the Euler-Lagrange equations with the initial directions, $\mathbf{v}(0)$, set to the values of the \emph{maximal} infinitesimal eigenvectors, $\xi_{1}$. (The first row for each prototype follows in the positive direction; the second row follows in the negative direction; but this is an arbitrary convention.)  The curves are continued until they reach a point at a Euclidean angle of $ \pi / 2 $ from the origin.  Sometimes the computation encounters a singularity in the value of $P_{0}({\bf x})$, and it is necessary to shift the ``center'' of the coordinate system to a different coordinate axis in order to continue with the algorithm, but this does not happen more than once for each of these curves.  Table \ref{ThetaDistances} shows the distances along the coordinate curves to the Euclidean angles $ \pi / 2 $ in the positive direction and $ - \pi / 2 $ in the negative direction.  Recall that the Riemannian and Euclidean distances computed along a geodesic curve on the Frobenius integral manifold are the same, since each point on the manifold is a constant Riemannian distance from the origin. Thus there is only a single entry for the distances in Table \ref{ThetaDistances}, for each direction. 

\begin{table}[htbp]
\caption{Riemannian and Euclidean distances along the $\Theta$ coordinate curves for the maximal infinitesimal eigenvectors, from the principal axis to the Euclidean angles $ \pi / 2 $ and $ - \pi / 2 $. Compare Figure \ref{ndCurvesMaxEig}.}
\begin{center}
\begin{tabular}{r|c|c|}
	&  \hspace{2em} Positive  \hspace{2em} &  \hspace{2em} Negative  \hspace{2em}  \\ 
	& Direction & Direction   \\ \hline
Prototype 09 &  2.22261  &  4.16981  \\ \hline
Prototype 24 & 2.71304   &  3.58147   \\ \hline
Prototype 27 & 2.10106   &  2.46165   \\ \hline
\end{tabular}
\end{center}
\label{ThetaDistances}
\end{table}
  
\begin{figure}[htbp]
\begin{center}
\includegraphics[width=4.5in]{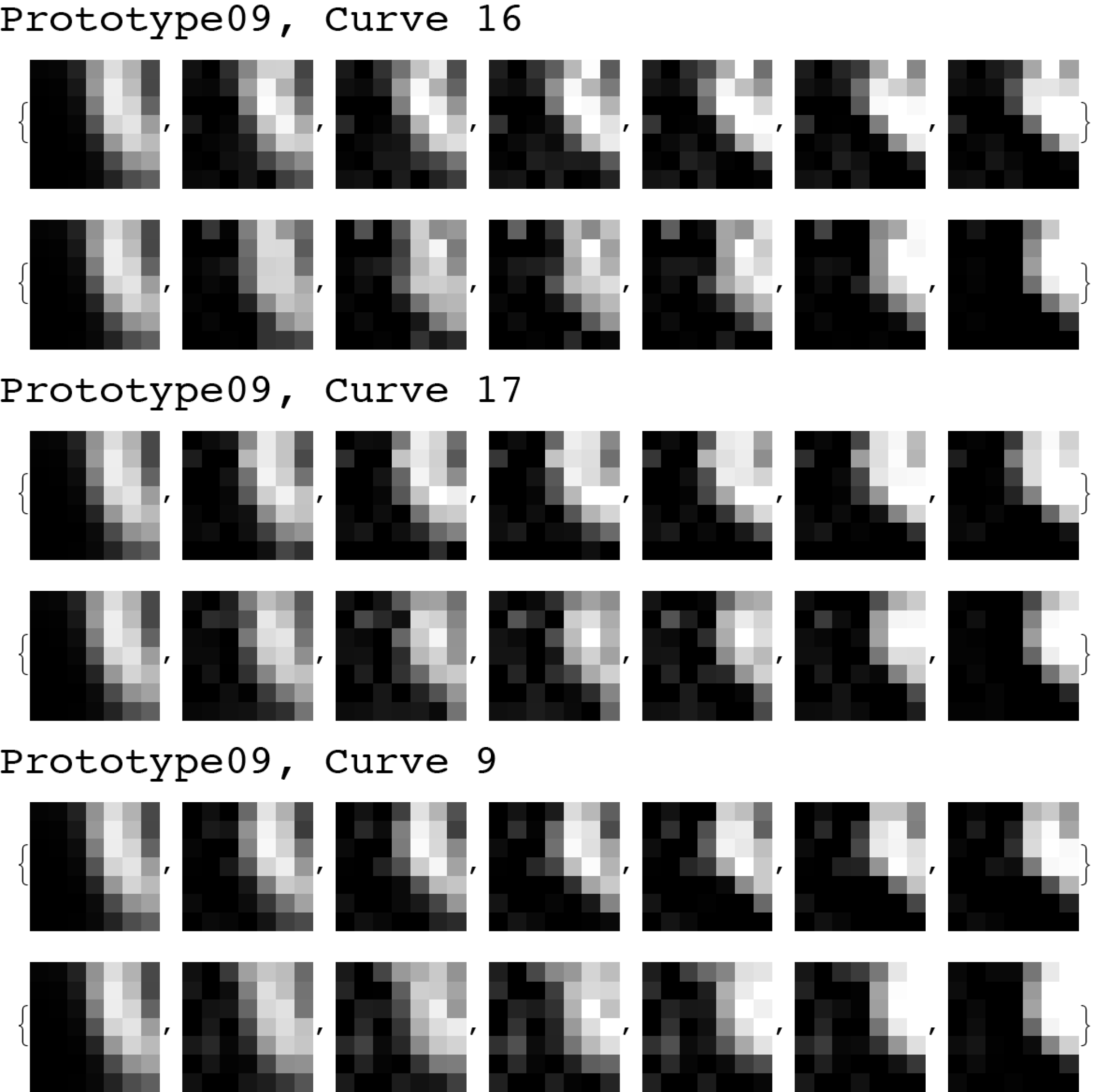}
\caption{The three minimal positive $\Theta$ coordinate curves for Prototype 09. The first row in each case shows the coordinate curve in the direction of the positive infinitesimal eigenvector, to $ \pi / 2 $; the second row shows the coordinate curve in the opposite direction, to $ - \pi / 2 $ .}
\label{ndCurvesTop3}
\end{center}
\end{figure}
  
For the \emph{minimal} infinitesimal eigenvectors, we will work exclusively with Prototype 09, as an example, but we will compute all 47 $\Theta$ coordinate curves. (Note:  All calculations up to this point have used a 1600 point sample, following a comparison of the results for two 800 point samples.  However, when we double the size of the sample from 800 to 1600, we increase the size of the data structures in Figure \ref{EulerEqns} by a factor of four, and we also increase the CPU time to find a solution to the Euler-Lagrange equations, empirically, by a factor of four.  Thus, since we are now computing 47 geodesics, in two directions, we will reduce the computational burden by restricting our analysis here to a single 800 point sample.) Figure \ref{ndCurvesTop3} shows the three $\Theta$ coordinate curves with the smallest Riemannian and Euclidean distances in the positive direction, which also happen to be the curves that have the largest Riemannian and Euclidean distances in the negative direction.  Table \ref{ndDistancesTop3} shows these distances, up to the Euclidean angles $ \pi / 2 $ and $ - \pi / 2 $, respectively.  The total distance is shown in the third column.  Note that the total distance along the $\Theta$ coordinate curve for the maximal infinitesimal eigenvector in Prototype 09, according to Table  \ref{ThetaDistances}, is considerably larger:  6.39242.

\begin{table}[htbp]
\caption{Prototype 09: Riemannian and Euclidean distances along the $\Theta$ coordinate curves to $ \pi / 2 $ and $ - \pi / 2 $ for three minimal infinitesimal eigenvectors. Compare Figure \ref{ndCurvesTop3}.}
\begin{center}
\begin{tabular}{r|c|c|c|}
	& Positive   & Negative    & Total  \\ 
	&  \hspace{1.0em} Direction \hspace{1.0em} &  \hspace{1.0em} Direction \hspace{1.0em} & \hspace{1.0em} Distance \hspace{1.0em} \\ \hline
Curve 16 &  2.61453  &  3.16092  & 5.77545 \\ \hline
Curve 17  & 2.68326  &  3.17738 & 5.86064 \\ \hline
Curve 09  & 2.68949   &  3.11284 & 5.80233 \\ \hline
\end{tabular}
\end{center}
\label{ndDistancesTop3}
\end{table}

\begin{figure}[tb]
\begin{center}
\includegraphics[width=4.5in]{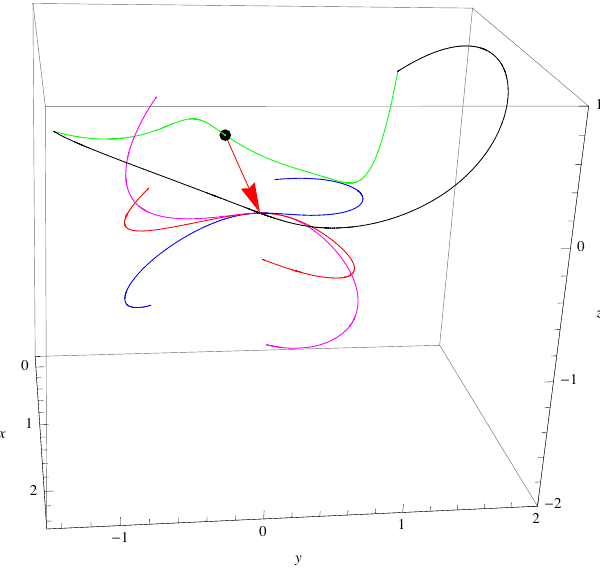}
\caption{A three-dimensional visualization of the coordinate curves for Prototype 09. The curves in Figure \ref{ndCurvesTop3} are coded by color: \emph{Red} is Curve 16, \emph{Magenta} is Curve 17, \emph{Blue} is Curve 9. The \emph{Black} curve is the $\Theta$ coordinate curve for the maximal infinitesimal eigenvector, as shown in Figure \ref{ndCurvesMaxEig}, and the \emph{Green} curves are the  $\rho$ coordinate curves at the Euclidean angles $\pi/2$ and $- \pi/2$ from the origin.}
\label{3DVisualization}
\end{center}
\end{figure}
    
Figure \ref{3DVisualization} shows another way to understand the relationship between the curves in Figures \ref{rhoCurves3Proto}, \ref{ndCurvesMaxEig} and \ref{ndCurvesTop3}:  a three-dimensional visualization.  How to map 49 dimensions down to three? The black dot at the origin is the modified Prototype 09 in the center column of Figure \ref{Proto&prAxis}.  The 49-dimensional image space is rotated around the origin to align the $x$ axis with the principal axis, so that the coordinates at the tip of the red arrow are $(1.75322,0,0)$, where $1.75322$ is the radius of the Coordinate Sphere, as shown in Table \ref{SphereRadius}. For the $y$ coordinate, we rotate the image space again around the tip of the red arrow to align the $y$ axis with the infinitesimal eigenvectors that are used to generate the $\Theta$ coordinate curves, but we do this separately for each curve.  Thus, although the initial directions of the four curves depicted in Figure \ref{3DVisualization} are mutually orthogonal in the 49-dimensional image space, in the visualization they are all drawn in the same plane.  Finally, we need to collapse the remaining 47 dimensions down to the $z$ axis, somehow.  Our solution is simply to sum all the remaining coordinates after applying the $x$ and $y$ rotations to the first two, which means that the initial value of $z$ at the tip of the red arrow will be zero, and the length of the curves in the image space will correspond roughly to the length of the curves in the visualization. In the image space, the coordinate curves are extended to the Euclidean angles $\pi/2$ and $- \pi/2$, and this is true in the visualization as well. Note that the positive direction towards $\pi/2$ corresponds to a clockwise rotation in the visualization, as can be seen by comparing the length of the two segments of the black curve in Figure \ref{3DVisualization}. 

The green curves in Figure \ref{3DVisualization} also illustrate some of the main properties of our geometric model.  These are the $\rho$ coordinate curves drawn inwards from the points on the $\Theta$ coordinate curve for the maximal infinitesimal eigenvector at the Euclidean angles $\pi/2$ and $- \pi/2$. The Euclidean distance along the curve from $\pi/2$ (on the left) is 1.50939 and the Riemannian distance is 0.947386.  The Euclidean distance along the curve from $ - \pi/2$ (on the right) is 2.1958 and the Riemannian distance is 0.922917.  These values should be compared to the values in Table \ref{rhoDistances} for the corresponding distances along the principal axis for Prototype 09.  The Euclidean distances in Figure \ref{3DVisualization} are less, but the Riemannian distances are approximately the same (within the tolerance of our numerical calculations). This is a reflection of the fact that the geodesics on the Frobenius integral manifold are a constant Riemannian distance from the origin.  See Theorem \ref{ConstantRho} and Theorem \ref{ConstantRiemannian} in Section \ref{GeomM}, \emph{supra}.

So far, we have only been looking at the \emph{geometry} of the prototypical clusters in the MNIST dataset.  But we now want to study the distribution of data points on the curves and surfaces of this geometry.  The first step is to compute the $\rho$, $\Theta$, coordinates for an arbitrary data point in the original 49-dimensional Euclidean space.  It is straightforward to compute the $\rho$ coordinate, and project the data point onto the Frobenius integral manifold.  For example, for Prototype 09, as shown in Column (d) in Table \ref{rhoDistances}, the Riemannian distance along the principal axis to the origin is $0.90783$, so for a point closer than this we project outwards along the $\rho$ coordinate to make up the difference, and for a point further away we project inwards.  Then, to locate the projected point on the surface of the manifold, we need to follow the coordinate \emph{flows}, given by Equation \eqref{TransverseCurves}, in a fixed order.  We can formulate this task as an optimization problem: Find a sequence of parameters,  $t_{1}, t_{2}, t_{3}, \ldots$, for the flows $\vec{\theta}^{\,1}_{t_{1}}({\bf x}_{0}), \vec{\theta}^{\,2}_{t_{2}}({\bf x}_{1}), \vec{\theta}^{\,3}_{t_{3}}({\bf x}_{2}), \ldots$, that takes us from the principal axis at ${\bf x}_{0}$ to a location as close as possible to the projected point on the manifold. (For an illustration of how this works in the three-dimensional case, see Figure 15 in Section 5.2 of \citep{CCCS_AMAI}.) However, if we tried to use the optimization code in \emph{Mathematica}, say, to solve this problem directly, we would never get an answer. The complexity is too high. The parameters, $t_{1}, t_{2}, t_{3}, \ldots$, are mutually dependent, and each step in the calculation requires a numerical solution of the integral equations in \eqref{TransverseCurves}.  Our approach, instead, is to first compute an \emph{ellipsoidal approximation} to the manifold, based on the simple Gaussian example in \citep{CCCS_AMAI}.  The analytical expressions for the flows $\vec{\theta}_{s}({\bf x})$ and $\vec{\phi}_{t}({\bf x})$ in Section 5.1 of \citep{CCCS_AMAI} can easily be extended to 49 dimensions, and it turns out that our optimization problem can be solved, on the 48-dimensional ellipsoidal surface, by \texttt{FindMinimum} in \emph{Mathematica}. Once we have a solution for the ellipsoidal approximation, we can use it to guide the search for a solution to the integral equations in \eqref{TransverseCurves} on the Frobenius integral manifold itself.\footnote{\label{fnEllipsoid}For a detailed explanation and justification of this ellipsoidal approximation, and a discussion of how an analytical solution on the ellipsoidal surface can be converted into a sequence of coordinate flows on the manifold, see Section 7, \textit{infra}, on the CIFAR-10 dataset.}  Even so, this last step is slow:  In a 49-dimensional example, it takes approximately 20 times more CPU time than the initial calculation on the surface of the ellipsoid. 

Let's now generate some additional MNIST data points. Recall that we used a single 800 point sample to solve the Euler-Lagrange equations. For our analysis of the distribution of the data, we will use a second independent 800 point sample, while excluding those points that lie in the opposite hemisphere from the principal axis. This leaves us with 696 sample points, which are shown in Figure \ref{EigMaxProjections}.  The black curve in this figure is the two-dimensional projection of the black $\Theta$ coordinate curve for the maximal infinitesimal eigenvector that is shown in Figure \ref{3DVisualization}.  The large blue dots delineate the ellipsoidal approximation for this coordinate.  The green dots represent the $(\rho, \theta^{1})$ coordinates of the 696 sample points.  These points are projected, first, onto the ellipsoidal approximation, and then onto the $\theta^{1}$ coordinate curve on the manifold, as illustrated by the small blue dots. Notice that the small blue dots are paired.  For an example to illustrate the pairing, the large black dot at $( 2.0066, 0.47328 )$ is paired with the large blue dot at $( 1.64069, 0.532819 )$.  

\begin{figure}[tbp]
\begin{center}
\includegraphics[width=4.5in]{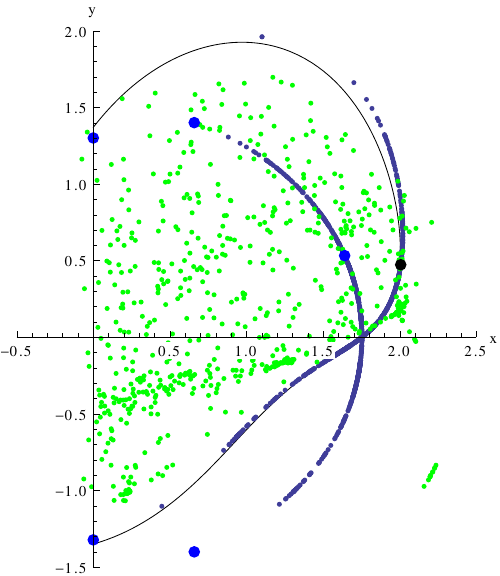}
\caption{Projecting Prototype 09 data points (in green) onto the $\Theta$ coordinate curve for the maximal infinitesimal eigenvector and its ellipsoidal approximation.}
\label{EigMaxProjections}
\end{center}
\end{figure}
 
We can now compute the ``variance'' of this data, as we did for the three-dimensional curvilinear Gaussian.  The RMS dispersion is $ 0.572678 $ along the ellipsoidal approximation and $ 0.570973 $ along the $\theta^{1}$ coordinate curve.  For a comparison, we can do the same projections and calculations for the three minimal infinitesimal eigenvectors and their curvilinear coordinate curves listed in Table \ref{ndDistancesTop3}.  The RMS dispersion along the ellipsoidal approximation is $ 0.144643 $ for Curve 16, $ 0.186668 $ for Curve 17, and $ 0.14785 $ for Curve 09.  Note that the ordering of the ``variance''  of this data for these three coordinate curves is the same as the ordering of their total Riemannian/Euclidean lengths in Table \ref{ndDistancesTop3}.  Furthermore, all three curves are in the top ten of the minimal infinitesimal eigenvectors, when ranked by their Riemannian/Euclidean distances from the principal axis to $ \pi / 2 $ and $ - \pi / 2 $.  These rankings therefore support the conclusion that we should be using the maximal infinitesimal eigenvector for the first coordinate curve, $\theta^{1}$, in our $( \rho , \Theta ) $ coordinate system. 

The next question is:  How to reorder the remaining 47 coordinates?  Let's denote these coordinate curves by $\{ \phi^{j} \mid j = 1, 47 \}$.  For specificity, let's say that we want to choose ten of the $\phi^{j}$ coordinates for a lower-dimensional approximation of the data, giving us overall a 12-dimensional subspace with coordinates $( \rho , \theta^{1}, \phi^{1}, \ldots, \phi^{10}) $.  In the three-dimensional case, we saw that the rankings of $( \rho , \theta^{1}, \phi ) $ obtained by maximizing the ``variance'' were inconsistent with the rankings obtained by minimizing the ``reconstruction error'' but both measures became consistent when we extended the model to a 12-dimensional space using artificial data.  See Section 5.1 of \citep{CCCS_AMAI}.  We can now try a similar calculation using real data.  Table \ref{tenOptimalThetaCurves} lists the ten $\Theta$ coordinate curves for which the total Riemannian/Euclidean distance from the principal axis to $ \pi / 2 $ and $ - \pi / 2 $ is minimal, the exact opposite of Table \ref{ndDistancesTop3}.  Compute the RMS dispersion of the 696 data points along the ellipsoidal approximations for these ten coordinate curves, sum them, and the total is $1.19073$.  Compute the same values for the remaining $\{ \phi^{j} \mid j = 11, 47 \}$, sum them, and the total is $6.55237$.  Thus the result for Table \ref{tenOptimalThetaCurves} is a low ``variance'' and a high ``reconstruction error.''  The ratio is $0.181724$.  (For comparison, the ratio $10/37 = 0.27027$.)  We take this to be a \emph{low baseline}. 

\begin{table}[htbp]
\caption{Prototype 09: The ten $\Theta$ coordinate curves with the least total Riemannian and Euclidean distances from the principal axis to $ \pi / 2 $ and $ - \pi / 2 $. }
\begin{center}
\begin{tabular}{r|c|c|c|}
	& Positive   & Negative    & Total  \\ 
	&  \hspace{1.0em} Direction \hspace{1.0em} &  \hspace{1.0em} Direction \hspace{1.0em} & \hspace{1.0em} Distance \hspace{1.0em} \\ \hline
Curve 42 &  2.80639  &  2.77794 & 5.58433 \\ \hline
Curve 23  & 2.77045  &  2.81585 & 5.58631 \\ \hline
Curve 30  & 2.79071   &  2.79673 & 5.58744 \\ \hline
Curve 37  & 2.77144   &  2.81653 & 5.58799 \\ \hline
Curve 29  & 2.8927     &  2.69908 & 5.59179 \\ \hline
Curve 44  & 2.83735   &  2.75788 & 5.59523 \\ \hline
Curve 11  & 2.70893   &  2.88847 & 5.59741 \\ \hline
Curve 36  & 2.80777   &  2.79319 & 5.60096 \\ \hline
Curve 26  & 2.71708   &  2.88505 & 5.60213 \\ \hline
Curve 01  & 2.69536   &  2.90715 & 5.60251 \\ \hline
\end{tabular}
\end{center}
\label{tenOptimalThetaCurves}
\end{table}
  
For a \emph{high baseline}, let's try reordering the coordinate curves separately for each data point.  Computing the RMS dispersion for this strategy does not make sense, but we can choose each step to maximize the distance from the starting point and minimize the remaining distance to the goal.  We can then compute the mean distance along the first ten coordinates, for all 696 data points, and compare this to the mean distance along the rest of the path, which is a measure of the mean ``reconstruction error.''  The results are $2.71968$ and $3.15808$, respectively, and the ratio is $0.86118$.  An alternative calculation would add the RMS dispersion of the $\theta^{1}$ coordinate curve ( $=  0.572678 $ ) to the values of the first ten $\phi^{j}$ coordinates in each baseline, on the theory that this is also a maximal step towards the goal.  The ratios would then be $ 0.269125 $ for the low baseline and $ 1.04252 $ for the high baseline. 

We see here an interesting empirical phenomenon:  The total Riemannian/Euclidean distance along the 47 $\Theta$ coordinate curves, for each sample point, tends to be concentrated on a small number of coordinates.  If the ratio for the baseline is 1.0, the distance along the $\theta^{1}$ coordinate curve plus the first ten $\phi^{j}$ coordinate curves is the same as the distance along the remaining 37 coordinate curves.  Furthermore, the top ten $\phi^{j}$ coordinates tend to fall into \emph{clusters}.  We will return to this point in Section \ref{AppCIFAR10}, but for now we will simply illustrate the phenomenon using a conventional clustering algorithm, \texttt{FindClusters} in \emph{Mathematica}.  It turns out that we can partition the top ten $\phi^{j}$ coordinates associated with the 669 sample points into 18 clusters, most of which are dominated by a single element, which is therefore the \emph{mode} of the cluster distribution.

\begin{figure}[tb]
\begin{center}
\includegraphics[width=4.5in]{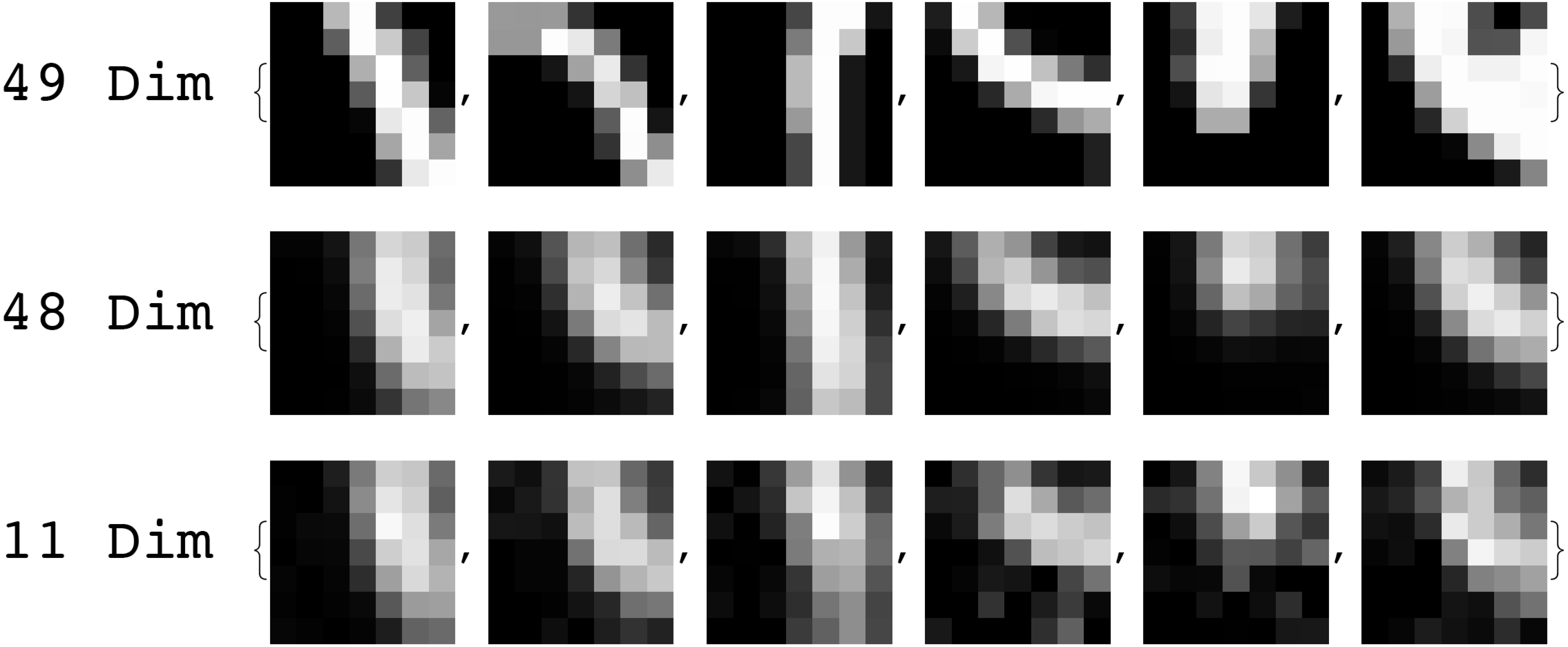}
\caption{Top Row: Sample points from the modes of six distinct clusters of top ten coordinates.  Middle Row: Projections of the sample points onto the Frobenius integral manifold.  Bottom Row: The 11-dimensional approximation that results from following the optimal coordinate curves on the surface of an ellipsoid.}
\label{ClusterModes}
\end{center}
\end{figure}

\begin{figure}[htbp]
\begin{center}
\includegraphics[width=4.5in]{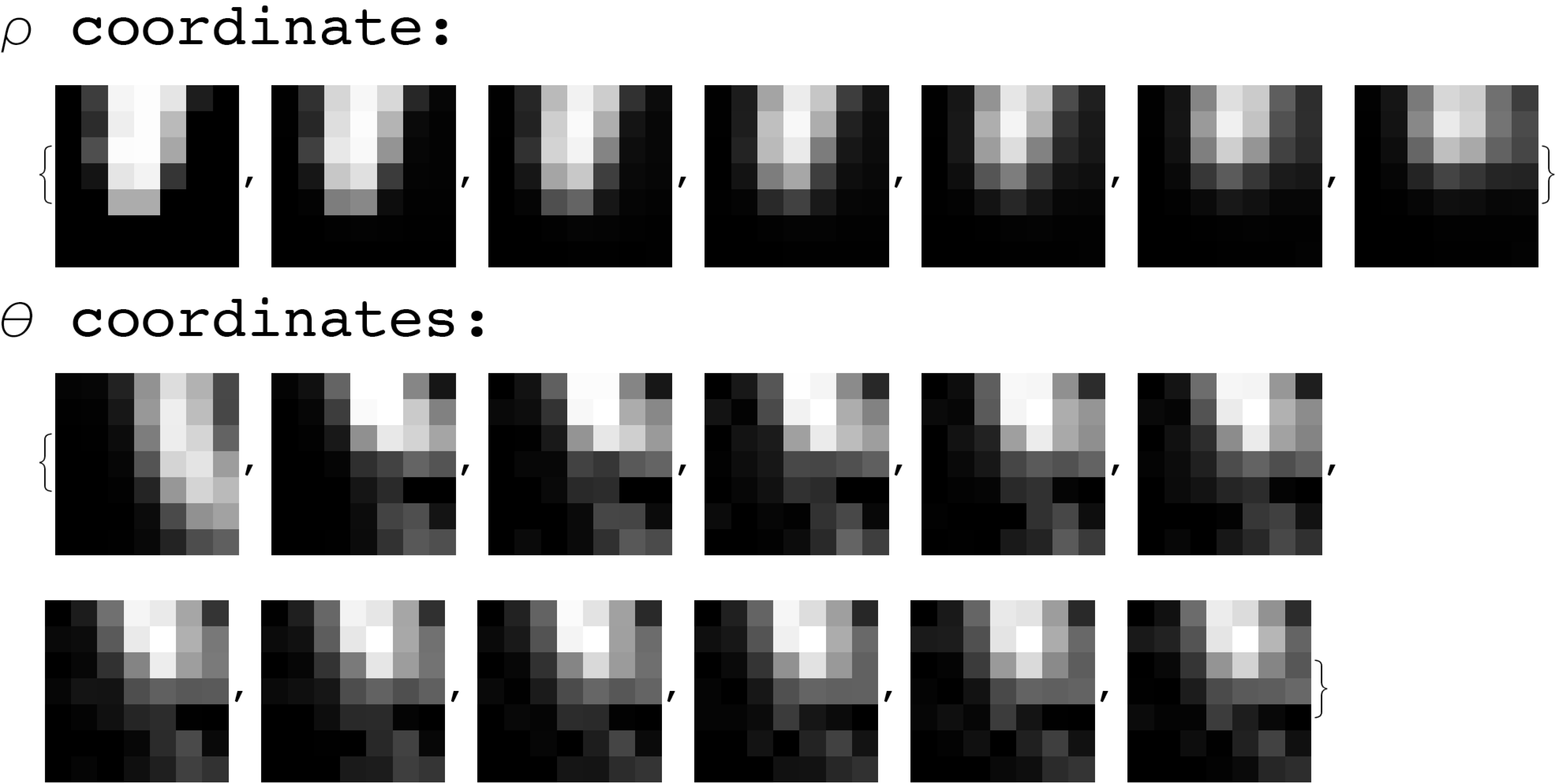}
\caption{The $\rho$ coordinate curve and the $\Theta$ coordinate curves for the mappings in the fifth column of Figure \ref{ClusterModes}.}
\label{RhoThetaCoords}
\end{center}
\end{figure}
 
Figure \ref{ClusterModes} is an illustration.  The top row displays six sample points whose top ten coordinates are included in the modes of six distinct clusters.  The second row shows the projection of each sample point onto the Frobenius integral manifold.  The first five columns represent the clusters with the largest modes, i.e., the clusters with the most number of repeated elements, in descending order.  Notice that the projections of these five sample points onto the Frobenius integral manifold tend to form coherent and distinct geometric shapes.  The last column represents the cluster to which the ten coordinates in Table \ref{tenOptimalThetaCurves} would be assigned. This is a distinct cluster in the output of \texttt{FindClusters}, but it overlaps with the cluster in the second column, and the projections of the second and the last sample points onto the Frobenius integral manifold show a clear resemblance.  The third row in Figure \ref{ClusterModes} displays the results of following the $\Theta$ coordinate curves from the principal axis for Prototype 09 towards the projected sample points, first, following the coordinate flow for the maximal infinitesimal eigenvector (see Table \ref{ThetaDistances} and Figure \ref{ndCurvesMaxEig}), and then following the top ten coordinate flows for the minimal infinitesimal eigenvectors.  (Actually, we are displaying in the third row of Figure \ref{ClusterModes} the corresponding points on the ellipsoidal approximation.)  We are thus looking at an 11-dimensional approximation of the 48-dimensional points from the Frobenius integral manifold.  

For additional insight into how this coordinate system works, Figure \ref{RhoThetaCoords} shows an expansion of the fifth column of Figure \ref{ClusterModes}. The top row traces the path of the $\rho$ coordinate curve from the original 49-dimensional sample point to a point on the 48-dimensional Frobenius integral manifold.  The bottom rows follow the $\Theta$ coordinate curves from the principal axis to a point on an 11-dimensional submanifold.  In this example, the initial step in the direction of the maximal infinitesimal eigenvector is the largest, but this is not always the case.  Note that the last step in Figure \ref{RhoThetaCoords} is slightly different from the last step in Figure \ref{ClusterModes}.  This is because the coordinate curves in Figure \ref{RhoThetaCoords} were computed on the Frobenius integral manifold itself, whereas in Figure \ref{ClusterModes} they were computed on the ellipsoidal approximation.  In either case, though, if we continued to follow the $\Theta$ coordinate curves for an additional 37 steps, we would arrive at a point that is almost identical to the target point in the second row of Figure \ref{ClusterModes} or the first row of Figure \ref{RhoThetaCoords}. In this sense, we are justified in saying that we have found an optimal encoding for the sample points in Figure \ref{ClusterModes} on a low-dimensional nonlinear subspace of the original high-dimensional Euclidean space.

\begin{figure}[tb]
\begin{center}
\includegraphics[width=4.5in]{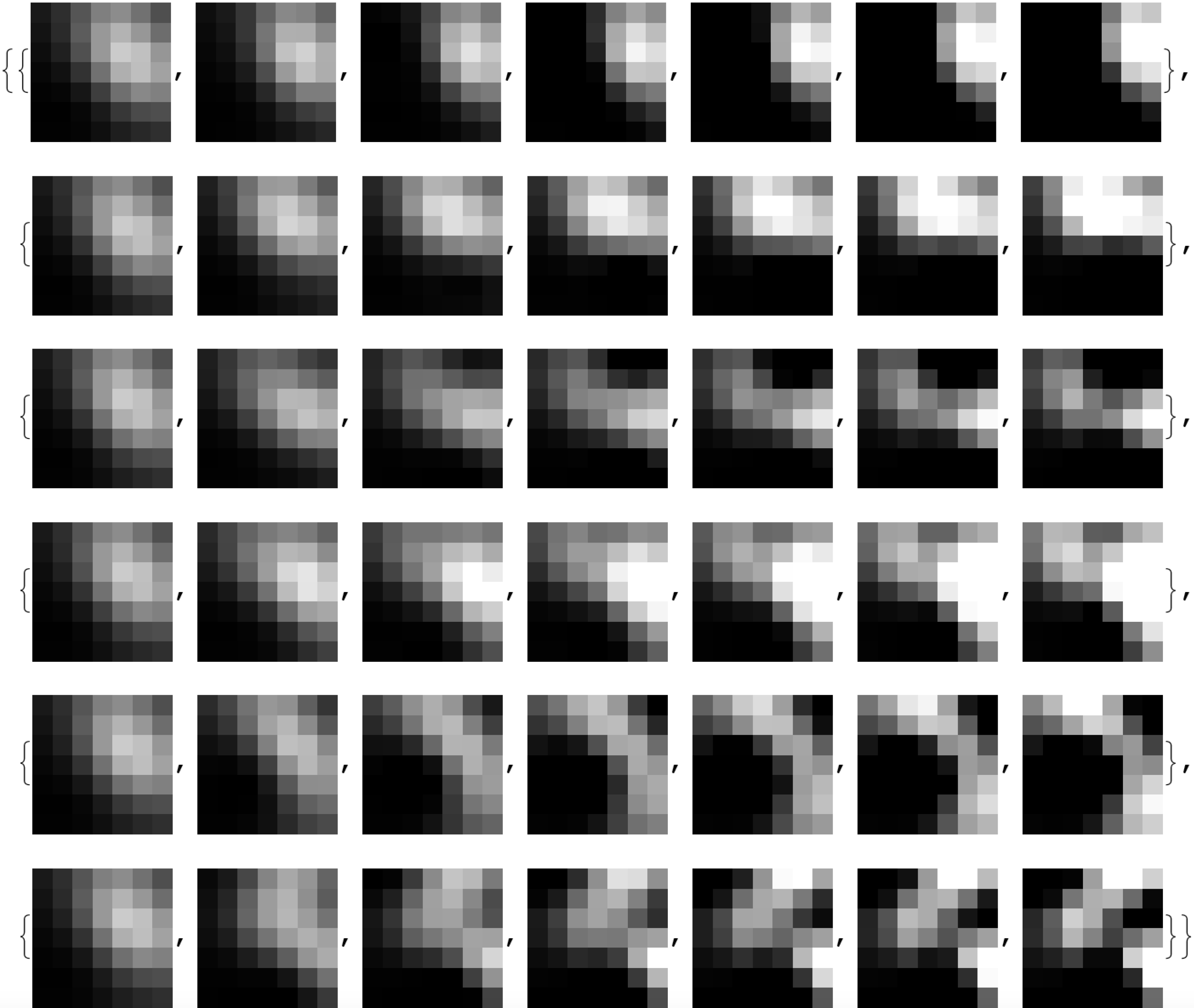}
\caption{Principal eigenvectors for the covariance matrix computed on the 32,000 point Data Sphere for Prototype 09.}
\label{PCATop6}
\end{center}
\end{figure}
  
It may be instructive to compare Figures \ref{rhoCurves3Proto}--\ref{ndCurvesTop3}  and \ref{ClusterModes}--\ref{RhoThetaCoords} to the results that would be obtained from Principal Components Analysis \citep{Pearson1901}.  Figure \ref{PCATop6} shows the first six principal eigenvectors computed on the covariance matrix for the 32,000 point Data Sphere around Prototype 09. The first two eigenvectors show some similarities to the curves for Prototype 09 in Figures \ref{rhoCurves3Proto} and \ref{ndCurvesMaxEig}.  But the subsequent eigenvectors are very different, and they do not seem to be related to any coherent global geometric shapes in the MNIST dataset.  


\section{Example: 7$\times$7 RGB Patches in the CIFAR-10 Dataset.}
\label{AppCIFAR10}

The second example that we will study is the CIFAR-10 dataset \citep{Krizhevsky:2009}, a classification task which is generally considered to be much harder than MNIST.  The dataset was constructed from small, low-resolution, full-color photographs, in ten categories: airplane, automobile, bird, cat, deer, dog, frog, horse, ship, truck.  Each image is represented as a 32$\times$32 array of pixels in three color channels: \textsf{RED}, \textsf{GREEN}, \textsf{BLUE}.  There are 50,000 images in the training set and 10,000 images in the test set.  We performed global contrast normalization on the images in the training set, and then scaled the normalized image intensity down to a real number in the range [0, 1], but we did no other preprocessing, such as whitening \citep{Krizhevsky:2009}. To facilitate a straightforward comparison to our results on the MNIST dataset, we defined a two-pixel border around each image in the training set and sampled the set of 7$\times$7 patches within the 28$\times$28 interior, drawing 12 samples per image. We thus started out with a sample that is comparable to the sample in Figure \ref{ArchitectureDL}: 600,000 7$\times$7 patches.  Our plan is to proceed along the same path as in Figure \ref{ArchitectureDL}, but only as far as the lower-right corner, as we did with MNIST, to produce a 12-dimensional encoding of the patches.  The main difference is that we are now working with three color channels instead of one monochrome channel. 

We will also introduce two new constructs in this Section that were not part of our analysis of the MNIST dataset in Section \ref{AppMNIST}:  \textit{quotient manifolds} and \textit{product manifolds}.

Quotient manifolds are used to build invariance into the geometric model, consistent with known properties of the probabilistic model. For example, in our discussion of Figures \ref{Proto&prAxis} and \ref{rhoCurves3Proto} in Section \ref{AppMNIST}, we noted that the mappings of Prototype 09 could be described by a shift transformation applied to a simple geometric shape, one or two pixels up or down, or left, or right.  This makes sense, probabilistically.  If a simple geometric shape is located at the mode of a probability distribution, then the horizontal and vertical translations of this shape will have approximately the same probability, and will also be located at a mode. But we know this, \textit{a priori}, from the properties of images.  Thus we do not need to compute the transformation from the data, as we did in Figures \ref{Proto&prAxis} and \ref{rhoCurves3Proto}.  We can build it into the geometric model from the start. 

The classical technique to impose translation invariance on an image classification task is to use a Convolutional Neural Network 
\citep{journals/neco/LeCunBDHHHJ89, LeCun_etal_1998, Zeiler14visualizingand}, 
since the convolution operation is \textit{equivariant} with respect to translation. However, there are other possible techniques, based on more general transformation groups 
\citep{ANSELMI2016112, pmlr-v48-cohenc16}.
Let $G$ be a group, and let $\{ T_{g} \;|\; g \in G \}$ be a set of transformations on the space of images, parameterized by $g$.  Suppose we hypothesize, for a particular choice of $G$, a particular choice of $T_{g}$, and a particular representation of the space of images, that the probability of $T_{g}(\textbf{x})$ is identical for all $g \in G$.  We can then form a \textit{quotient space} over the space of images, and work with a \textit{representative} of this quotient in our probabilistic calculations.  For the space of 7$\times$7 \textsf{RGB} patches in the CIFAR-10 dataset, the most useful transformations are:  vertical translations, horizontal translations, and horizontal flips.

\begin{figure}[tb]
\begin{center}
\includegraphics[width=5.0in]{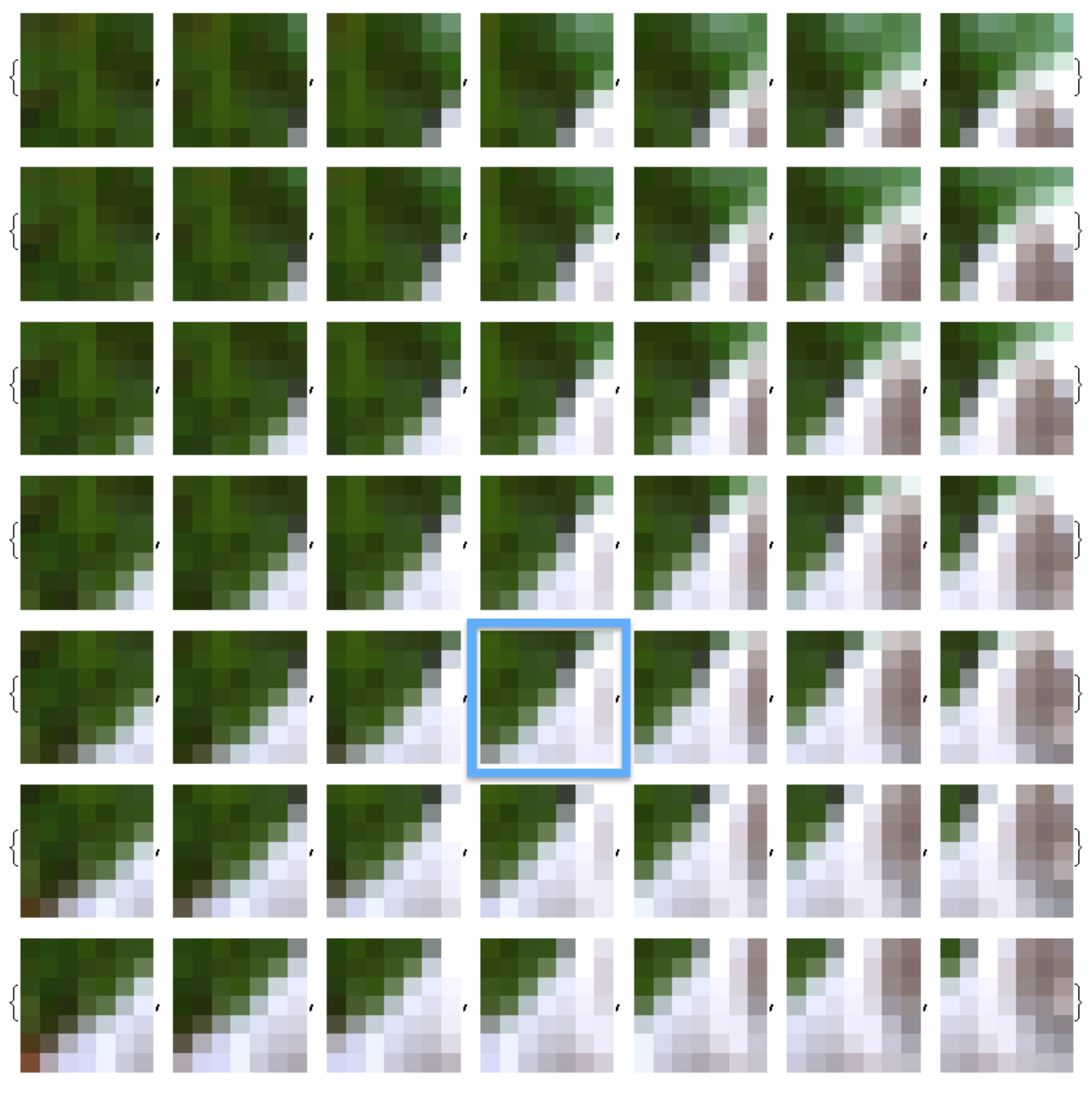}
\caption{ A 7$\times$7 \textsf{RGB} patch shifted three pixels up and down, and three pixels left and right.}
\label{ShiftedPatches}
\end{center}
 \end{figure}

Figure \ref{ShiftedPatches} illustrates how this works.  These are patches from a photograph of a white and gray cat sitting in front of a green vegetative background.  Assume that our sampling procedure has selected the patch in the center of this grid, and we are looking at all the patches shifted zero-to-three pixels up and down, and zero-to-three pixels left and right. (At the edges of the 28$\times$28 images, we are using the discarded two pixel border and an additional one pixel border set to the mean value of the image intensities in each color channel.)  We arrange the grid on a torus: The vertical shifts lie along the ``vertical'' axis of the torus, and the horizontal shifts are extended by a sequence of horizontal flips and lie along the ``horizontal'' axis of the torus. By our probabilistic hypothesis, every patch on this torus has equal probability, and thus the ``toroidal grid'' is the quotient space that we are looking for. How do we choose a representative of this quotient? What we want is an unambiguous procedure based on some salient property of the image.  For the translations, we choose the patch that has the maximum variance in the three color channels, which is the patch outlined in blue in Figure \ref{ShiftedPatches}.  For the horizontal flips, we choose the orientation that maximizes the mean intensity on the left-hand side, which means that the patch outlined in blue in Figure \ref{ShiftedPatches} needs to be flipped horizontally.  Note that our sampling procedure could initially select two distinct patches from a 28$\times$28 image and transform both of them into the same patch, but that is precisely what is required by our probabilistic model.  

Product manifolds are used to combine low-dimensional solutions into a higher dimensional problem space, so that our dimensionality reduction techniques can be applied recursively.  One example is shown in Figure \ref{ArchitectureDL} for the MNIST dataset with a single monochrome channel.  Looking at the lower-right corner, assume that we have mapped the 7$\times$7 patches into a set of 12-dimensional submanifolds.  Following the blue arrow, we assemble four adjacent 12-dimensional submanifolds into a product manifold and sample the images again using the 14$\times$14 patch.  This means that each 14$\times$14 sample is mapped into four nonlinear submanifolds to produce a point in a 48-dimensional probability space.  And, as indicated by the arrows pointing to the left in the middle of Figure \ref{ArchitectureDL}, the task now is to apply the theory of differential similarity again to reduce the dimensionality of this space down to 12. Product manifolds --- in particular, products of Grassmann manifolds --- have been used successfully in recent research on the recognition of human actions.  See, e.g., \citep{lui2012human}. They do not appear to have played a prominent role so far in work on the classification of static images. 

We will not study the 14$\times$14 sample space in the present paper (but see Section \ref{FutureWork} on ``Future Work'').  Instead, we will study a simpler example based on the \textsf{RED/GREEN/BLUE} color channels in the CIFAR-10 dataset.  The dimensionality of a 7$\times$7 \textsf{RGB} patch is $3 \times 49 = 147$, and we could either take this space as our starting point, or we could start with three separate 49-dimensional spaces.  Let's take the latter approach.  We will apply the theory of differential similarity, separately, in each color channel, to produce three mappings into a set of 12-dimensional submanifolds, and then combine these into a set of 36-dimensional product manifolds.  Several questions arise as we take the next step:  Assume that we want to construct a lower-dimensional submanifold embedded in one of these product manifolds.  How should we do this?  If there are several distinct 12-dimensional submanifolds in each color channel, is it necessary to consider all possible combinations when we define the product space?  How should we sample this space?  We will suggest answers to these questions in the present paper, with the expectation that this will give us some insights into the analogous questions for the 14$\times$14 sample space. 

\begin{figure}[tb]
\begin{center}
\includegraphics[width=5.5in]{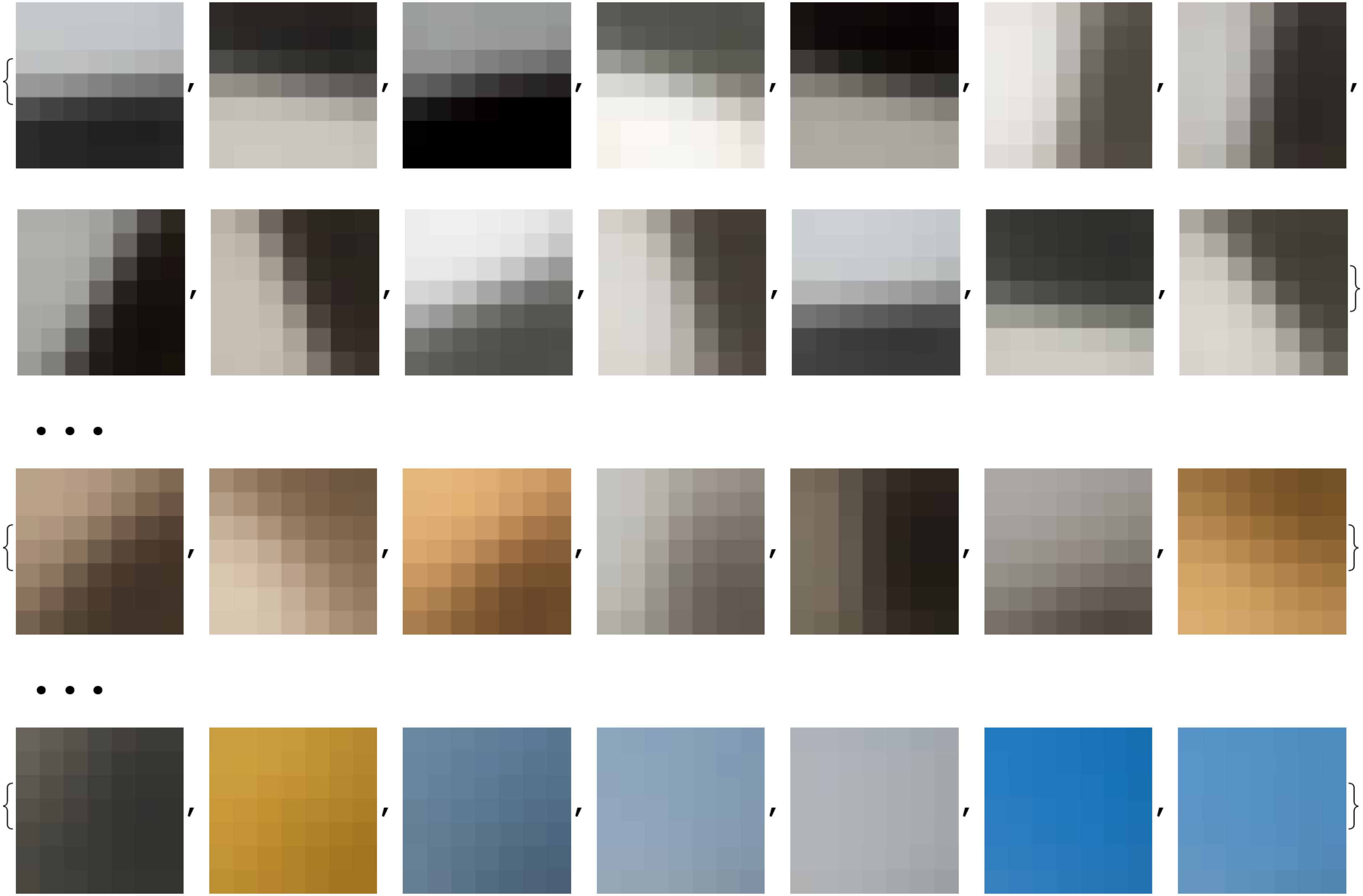}
\caption{A selection from 50 CIFAR-10 prototypes, sorted from high variance to low variance. Pictured here are Prototypes  $01$ to $14$, $32$ to $38$, and $44$ to $50$.}
\label{CIFARprototypes}
\end{center}
\end{figure}

For our initial selection of prototypes, however, we will start out with a set of data points in the full 147-dimensional space.  Recall that our selection of prototypes for the MNIST dataset included a manual post-processing step.  After some experimentation, we have developed a fully automated procedure for the CIFAR-10 dataset, with several adjustable parameters.  There are two stages:
\begin{enumerate}

\item The first stage applies \texttt{pgradascent} from Figure \ref{Kernel} to the full dataset.  We process the set of CIFAR-10 patches in five batches of 120,000 points each, generating 160 random choices of \texttt{xstart} in each batch, with $\beta = 1/128$ (or $\sigma = 8.0$) and with a 200 point \texttt{Sample} retrieved by the \texttt{NearestFunction} at each iteration.  With these parameters, \texttt{pgradascent} converges very quickly to a mode very close to \texttt{xstart}.

\item The second stage computes and follows the integral curve of $\nabla U$, starting at one of the outputs from the first stage, to arrive at a new mode, ${\bf x}$, where $\nabla U({\bf x}) = 0$.  In this computation, we set $\beta = 1$ and we set $\mathtt{SamplePoints}$ to an 800 point random sample drawn from a 4,000 point data sphere around the starting point.

\end{enumerate}
We also filter the output of each stage.  The outputs from the first stage are sorted by the average of the variance in each color channel, then processed from max variance to min variance, and a mode is filtered out if it lies within the radius of a 4,000 point data sphere around one of its sorted predecessors. This step reduces the size of the output in each batch to: 58, 64, 59, 52, 59.  The five batches are then merged, sorted, and filtered again, to produce a list of 94 modes as the input to the second stage.  Similarly, the output from the second stage is filtered to remove any mode that lies within the radius of a 1,000 point data sphere around one of its sorted predecessors. The final result is a sorted list of exactly 50 prototypes. 

\begin{figure}[tbp]
\begin{center}
\includegraphics[width=5.5in]{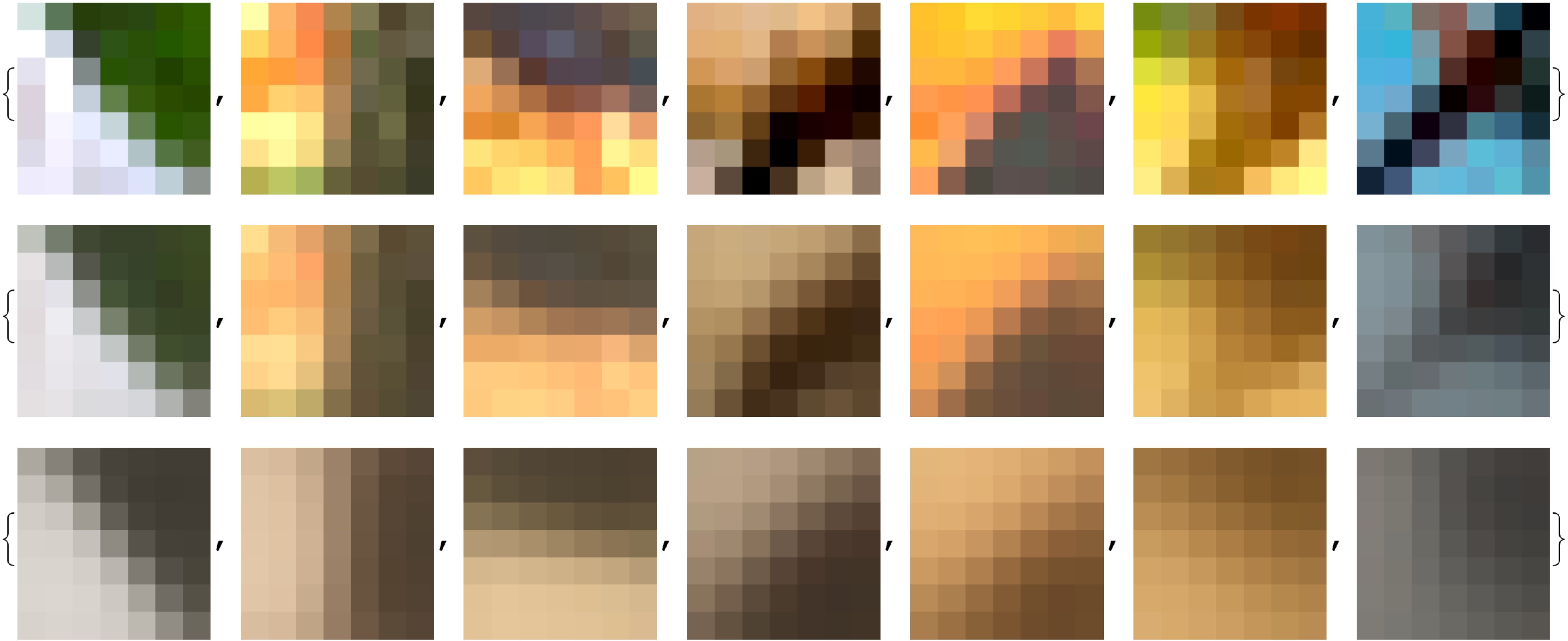}
\caption{Two stages in the computation of Prototypes 14, 20, 23, 32, 34, 38, 40.  Top Row: The initial patch selected as the value of \texttt{xstart}. Middle Row: The output of \texttt{pgradascent}.  Bottom Row: The point, ${\bf x}$, for which $\nabla U({\bf x}) = 0$.}
\label{PatchPgaProto}
\end{center}
\end{figure}

Figure \ref{CIFARprototypes} shows a selection of these 50 prototypes, in three groups: high variance, medium variance, low variance.  Seven prototypes of medium variance are shown in Figure \ref{PatchPgaProto}, along with the two stages of their computation.  Note that the value of \texttt{xstart} that led to the derivation of Prototype 14 is the patch outlined in blue in Figure \ref{ShiftedPatches}, flipped horizontally.  We will trace the analysis of these seven prototypes in the subsequent discussion, with particular attention to Prototypes 14 and 32.

\vspace{2ex}
\begin{table}[htbp]
\caption{Radius, in \texttt{3X49D} and \texttt{49D} Euclidean space, of the 8,000 point Data Sphere and the 2,000 point Coordinate Sphere, for seven prototypical clusters.}
\begin{center}
\begin{tabular}{r|c|c|c|c|}
        & \multicolumn{2}{c|}{ \hspace{2.0em} Radius of  \hspace{2.0em} } & \multicolumn{2}{c|}{\hspace{2.0em} Radius of \hspace{2.0em} }  \\ 
        & \multicolumn{2}{c|}{Data Sphere} & \multicolumn{2}{c|}{Coordinate Sphere}  \\ \cline{2-5}
                     & \hspace{1.0em}\texttt{3X49D}\hspace{1.0em} & \hspace{1.5em}\texttt{49D}\hspace{1.5em} & \hspace{1.0em}\texttt{3X49D}\hspace{1.0em} & \hspace{1.5em}\texttt{49D}\hspace{1.5em} \\ \hline
Prototype 14 & 2.11000 & 1.21821 & 1.42727 & 0.824038  \\ \hline
Prototype 20 & 1.91678 & 1.10665 & 1.31318 & 0.758166  \\ \hline
Prototype 23 & 1.97005 & 1.13741 & 1.34494 & 0.776500  \\ \hline
Prototype 32 & 1.78403 & 1.03001 & 1.31377 & 0.758503  \\ \hline
Prototype 34 & 2.06485 & 1.19214 & 1.47952 & 0.854199  \\ \hline
Prototype 38 & 1.91457 & 1.10538 & 1.42930 & 0.825207  \\ \hline
Prototype 40 & 1.76567 & 1.01941 & 1.19463 & 0.689722  \\ \hline
\end{tabular}
\end{center}
\label{CIFARSphereRadius}
\end{table}
\vspace{2ex}

Table \ref{CIFARSphereRadius} shows the size of an 8,000 point Data Sphere and a 2,000 point Coordinate Sphere for our seven prototypical clusters, and explicitly distinguishes between the 147-dimensional \textsf{RGB} space and the three 49-dimensional \textsf{R/G/B} spaces. In each case, we have computed the radius for \texttt{3X49D} and then divided by $\sqrt{3}$ to get the radius for \texttt{49D}, on the assumption that the spheres in each color channel should be the same size. Recall that the sizes of the Data Sphere and the Coordinate Sphere in the MNIST example were 32,000 and 8,000 points, respectively, and here we have chosen much smaller numbers, 8,000 and 2,000 points, respectively.  We will also generally use sample sizes that are one-half the size of the samples in the MNIST example.  This means that we will probably see clusters that are more coherent, and closer to a Gaussian, perhaps, but with more variability in the estimates. This is not to imply that these are better choices, just different choices.  We should think of this analysis as one data point in an exploration of the parameter space for applications of the theory of differential similarity to various image processing tasks.
 
\begin{figure}[tb]
\begin{center}
\includegraphics[width=4.5in]{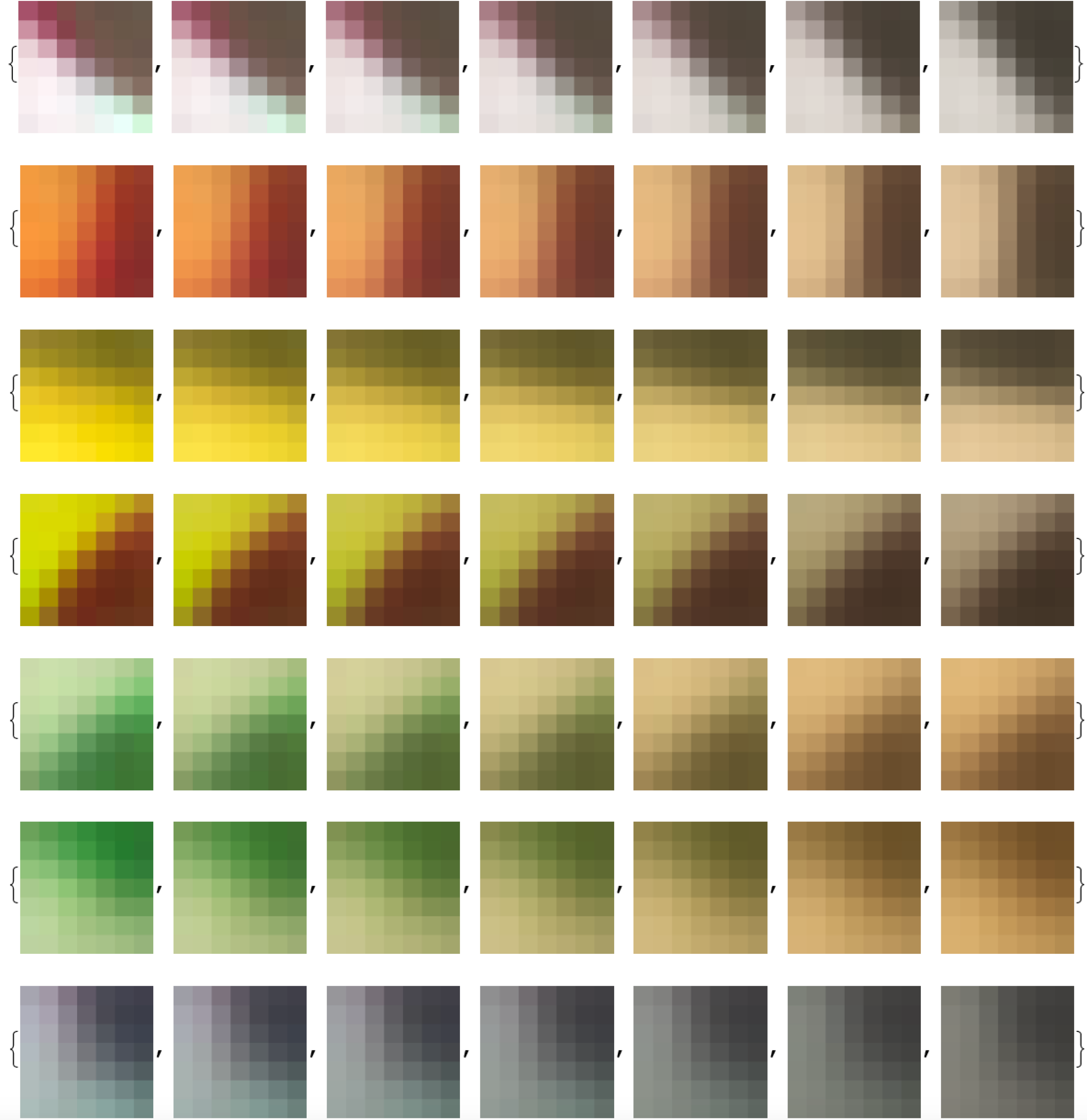}
\caption{The principal $\rho$ coordinate curves for Prototypes 14, 20, 23, 32, 34, 38, 40.  Each row shows a sequence of images along the principal axis, with the prototype displayed in the right column.}
\label{RhoCoordCurves}
\end{center}
\end{figure}

\begin{figure}[htbp]
\begin{center}
\includegraphics[width=4.5in]{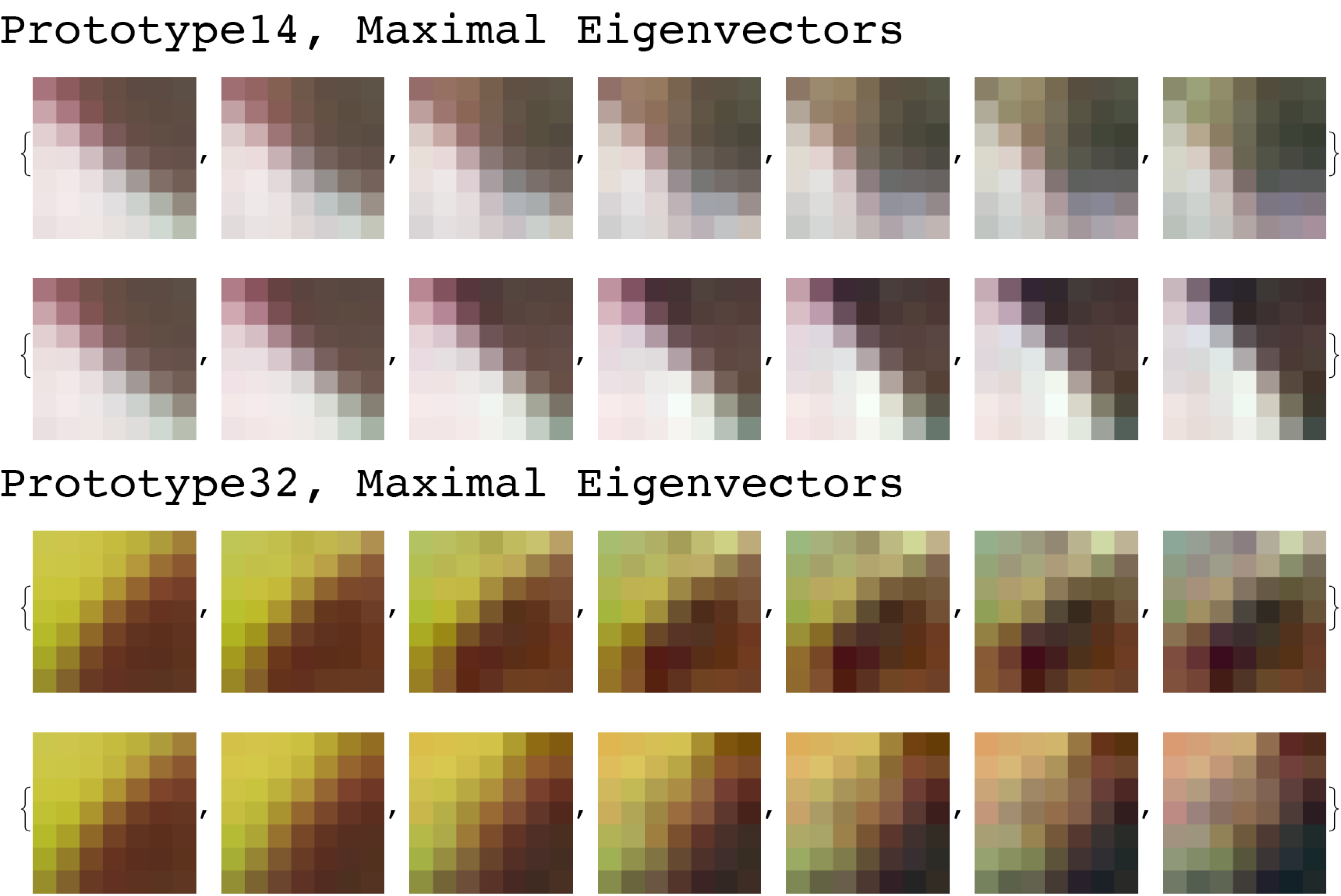}
\caption{For Prototypes 14 and 32, the $\Theta$ coordinate curves for the maximal infinitesimal eigenvectors.}
\label{ThetaCoordCurvesMaxEig}
\end{center}
\end{figure}
 
The next step is to compute the $\rho$ coordinate curves along the principal axis of each prototypical cluster.  From this point on, as explained previously, we will do all computations separately in each color channel, although we will then combine the three \textsf{R/G/B} channels to generate full-color images.  Figure \ref{RhoCoordCurves} is analogous to Figure \ref{rhoCurves3Proto} in Section \ref{AppMNIST}, showing the principal $\rho$ coordinate curves for Prototypes 14, 20, 23, 32, 34, 38, and 40.  (Note: To locate the principal axes in each color channel, we did two global searches using \texttt{NMinimize} in \emph{Mathematica}, with $\beta = 1/128$ and with a 400 point sample defining $\nabla U({\bf x})$ for each search, followed by a local search using \texttt{FindMinimum} in \emph{Mathematica}, which combined the two samples and set $\beta = 1$.) The curves here are actually extended to the left, beyond the 2,000 point Coordinate Sphere where the principal axis is computed, to the 8,000 point Data Sphere.  The color shifts along these curves are obvious, but there are also shifts in the shape of the images, as there were in the MNIST examples.  For instance, for Prototype 14, reading from right to left, we see a counter clockwise rotation of the global geometric shape. 

Figure \ref{ThetaCoordCurvesMaxEig} is analogous to Figure \ref{ndCurvesMaxEig} in Section \ref{AppMNIST}, for Prototypes 14 and 32.  In each case, the first row shows the coordinate curve in the direction of the maximal infinitesimal eigenvector, to the Euclidean angle $ \pi / 2 $, and the second row shows the coordinate curve in the opposite direction, to the Euclidean angle $ - \pi / 2 $.  (Note: To find a solution to the Euler-Lagrange equations in these examples, we set $\beta = 1$ and used a single 400 point sample defining $\nabla U({\bf x})$.) The starting points, the images on the left, are points on the $\rho$ coordinate curves from Figure \ref{RhoCoordCurves} at their intersections with the 2,000 point Coordinate Spheres.  We could draw similar curves, of course, for the minimal infinitesimal eigenvectors,\footnote{\label{fnComplex}For reasons that are not entirely clear, when we compute the eigenvectors of the matrix $\left( \,{g}_{i,j}({\bf x})\, \right)$ in \emph{Mathematica}, the function  \texttt{Eigensystem} sometimes returns a conjugate pair of complex eigenvalues.  We ignore the complex eigenvectors when we compute the coordinate curves. Thus, for Prototype 14, GREEN, we have only 45 coordinate curves for the minimal infinitesimal eigenvectors, and for Prototype 32, RED and BLUE, we have only 43.  This creates some minor problems when we try to measure the reconstruction error.  See footnote \ref{fnOptimal}, \textit{infra}.} analogous to Figure \ref{ndCurvesTop3} in Section \ref{AppMNIST}, but we first need to discuss our strategies for the computation of optimal lower-dimensional subspaces, a task to which we now turn.

\begin{figure}[tbp]
\vspace{4ex}
\begin{center}
\includegraphics[width=5.0in]{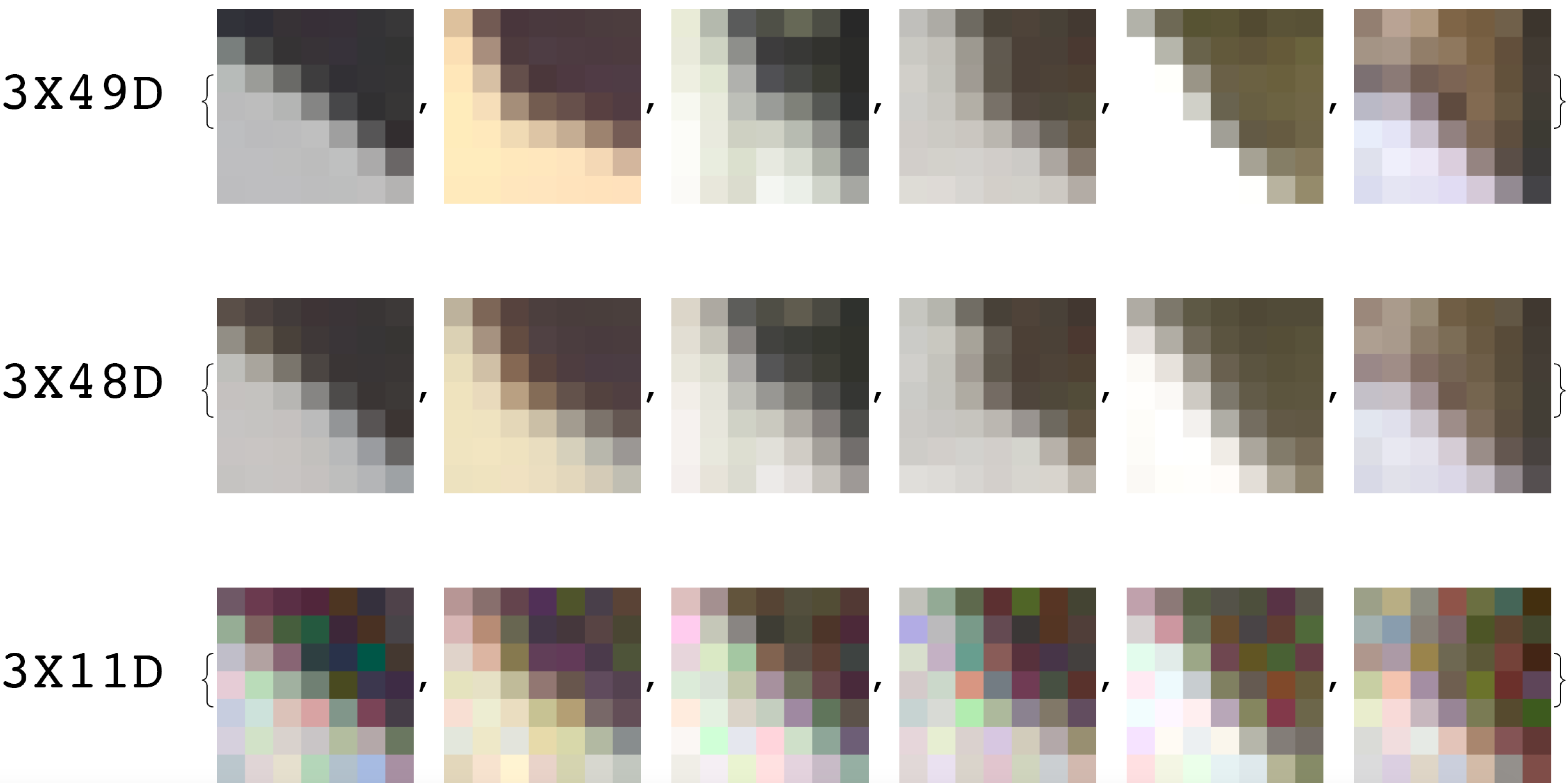}
\caption{Sample points that are the nearest to the $\Theta$-Coordinate Cluster Modes for Prototype 14.}
\label{ClusterModesP14}
\end{center}
\end{figure}

\begin{figure}[tbp]
\vspace{4ex}
\begin{center}
\includegraphics[width=5.0in]{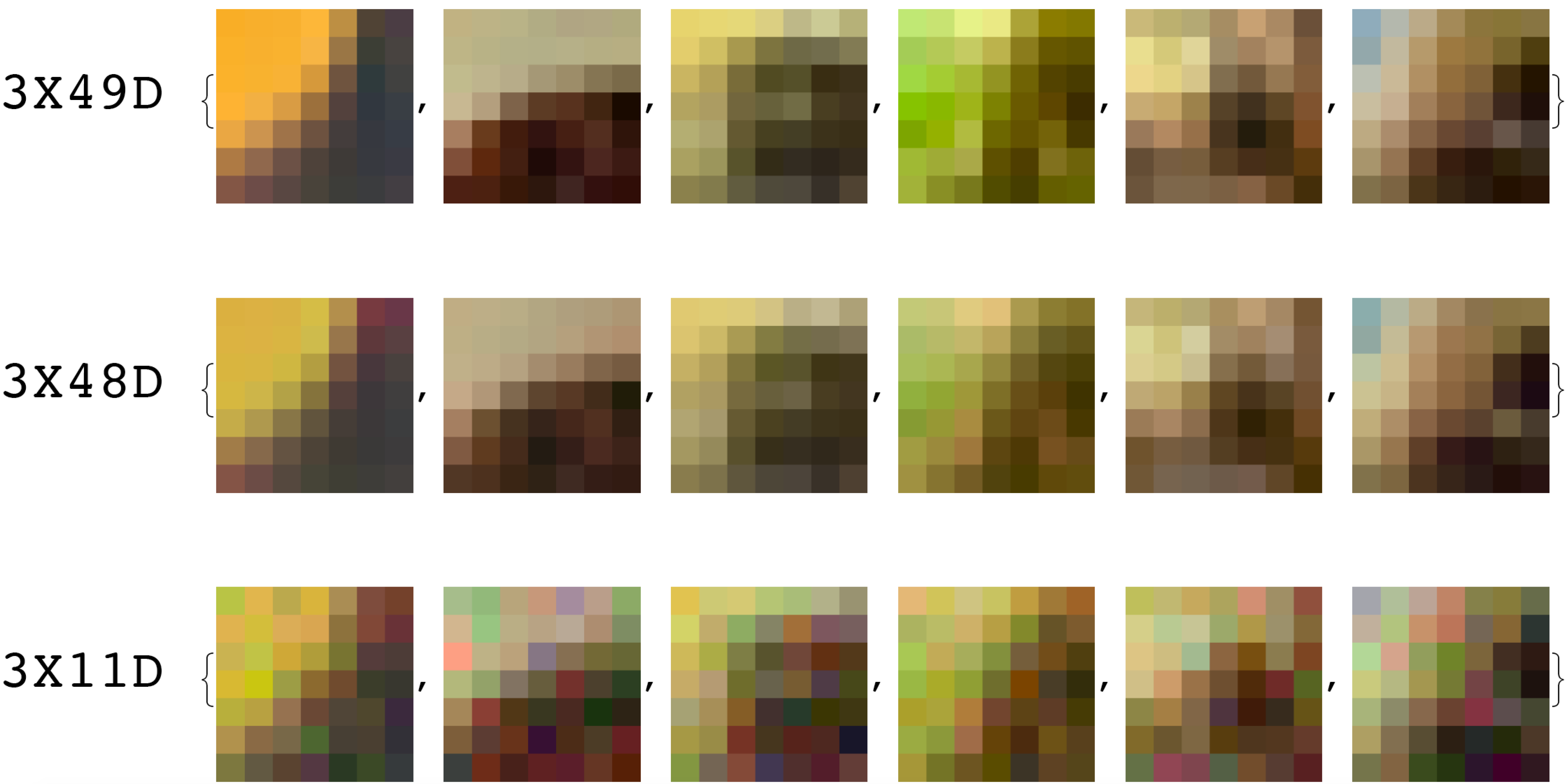}
\caption{Sample points that are the nearest to the $\Theta$-Coordinate Cluster Modes for Prototype 32.}
\label{ClusterModesP32}
\end{center}
\vspace{8ex}
\end{figure}

In our work on optimal lower-dimensional subspaces for the MNIST example, in Section \ref{AppMNIST}, we developed an \textit{ellipsoidal approximation} to the $\Theta$ coordinates on the Frobenius integral manifold, and we defined a measure of the ``variance'' and the  ``reconstruction error'' for this coordinate system in a higher dimensional space. We also generated a new sample (800 points, reduced to the 669 points in the positive hemisphere) to test these concepts.  The ten coordinate curves in Table \ref{tenOptimalThetaCurves} gave us a \emph{low baseline}, in which the ``variance'' was low and the ``reconstruction error'' was high, but we discovered that we could find an optimal set of ten coordinate curves for each individual sample point, a \emph{high baseline}.  Furthermore, these optimal coordinate curves tended to fall into distinct clusters, which were illustrated in Figure \ref{ClusterModes}.  The main challenge in extending these ideas to the CIFAR-10 dataset is to generalize from a single monochrome channel to three \textsf{R/G/B} color channels, i.e., to apply our techniques to the product manifold. 

Figure \ref{ClusterModesP14} for Prototype 14 and Figure \ref{ClusterModesP32} for Prototype 32 are analogous to Figure \ref{ClusterModes} in Section \ref{AppMNIST}, showing the end results of our calculations.  The six columns for each prototype represent six distinct $\Theta$ coordinate clusters. The first row in each figure shows six 147-dimensional sample points that are the nearest to the modes of each cluster, and the second row shows the projection of these points onto three 48-dimensional Frobenius integral manifolds.  The third row shows three 11-dimensional \textsf{R/G/B} approximations, printed as full-color images.  We will explain, step-by-step, how the sample points were selected and how the clusters were constructed, using as a running example a point from the upper left corner of Figure \ref{ClusterModesP14}:  Prototype 14, Cluster 01, Channel RED. 

First, as promised in footnote \ref{fnEllipsoid}, let's look more closely at the ellipsoidal approximation.  We start with an ellipsoid that roughly matches the Frobenius integral manifold along its semi-axes.  For the principal axis, this is just the radius of the Coordinate Sphere, as shown in Table \ref{CIFARSphereRadius}, which is 0.824038 for our running example.  Let $a$ equal the \textit{reciprocal} of the \textit{square} of the length of this axis.  For the coordinate curves aligned with the maximal infinitesimal eigenvectors, we saw in Table \ref{ThetaDistances} and Figure \ref{3DVisualization} for MNIST that the distances along the axes in the positive and the negative directions were substantially different, and the same is true for CIFAR-10.  In our running example, the matching semi-axis is 0.71319 in the positive direction and 0.589725 in the negative direction, so we split the space and use a different ellipsoid in each half. Let $b$ equal the \textit{reciprocal} of the \textit{square} of the length of the axis in each case. The equation for the coordinate flow, $\vec{\theta}^{\,1}_{t}({\bf x})$, along the surface of the ellipsoid can easily be generalized from the equations for the three-dimensional case in Section 5.1 of \citep{CCCS_AMAI}. Let ${\bf x} = (x, y, z_{1}, \ldots, z_{47})$.  Then
\vspace{1ex}
\begin{equation}
\label{intcurve.theta}
\vec{\theta}^{\,1}_{t}({\bf x}) \;=\;
\begin{pmatrix}
& x \; \cos{\sqrt {a b} \; t } \;+\; y \; \sqrt {b/a} \; \sin{\sqrt {a b} \; t} & \\
& y \; \cos{\sqrt {a b} \; t } \;-\; x \; \sqrt {a/b} \; \sin{\sqrt {a b} \; t} \\
& z_{1} \\
& \cdots \\
& z_{47}
\end{pmatrix} \notag
\vspace{1ex}
\end{equation}
For the coordinate curves aligned with the minimal infinitesimal eigenvectors, the matching semi-axes are very close together, in the range 0.610 to 0.625 for our running example, so we simply use the mean of the positive and negative values.  Let $c_{i}$ equal the \textit{reciprocal} of the \textit{square} of the mean length of axis $i$.  Then
\vspace{1ex}
\begin{equation}
\label{intcurve.phi}
\vec{\theta}^{\,1+\,i}_{t}({\bf x}) \;=\;
\begin{pmatrix}
& x \; \cos{\sqrt {a c_{i} } \; t } \;+\; z_{i} \; \sqrt {c_{i} /a} \; \sin{\sqrt {a c_{i} } \; t} & \\
& y \\
& z_{1} \\
& \cdots \\
& z_{i} \; \cos{\sqrt {a c_{i} } \; t } \;-\; x \; \sqrt {a/c_{i} } \; \sin{\sqrt {a c_{i} } \; t} \\
& \cdots \\
& z_{47}
\end{pmatrix}  \notag
\vspace{1ex}
\end{equation}
The most important property of these equations is the fact that they are closed-form \textit{analytical solutions} for the coordinate flows defined in \eqref{TransverseCurves}, and thus the optimization algorithms in \textit{Mathematica} can handle them efficiently.

\begin{figure}[tb]
\begin{center}
\includegraphics[width=4.5in]{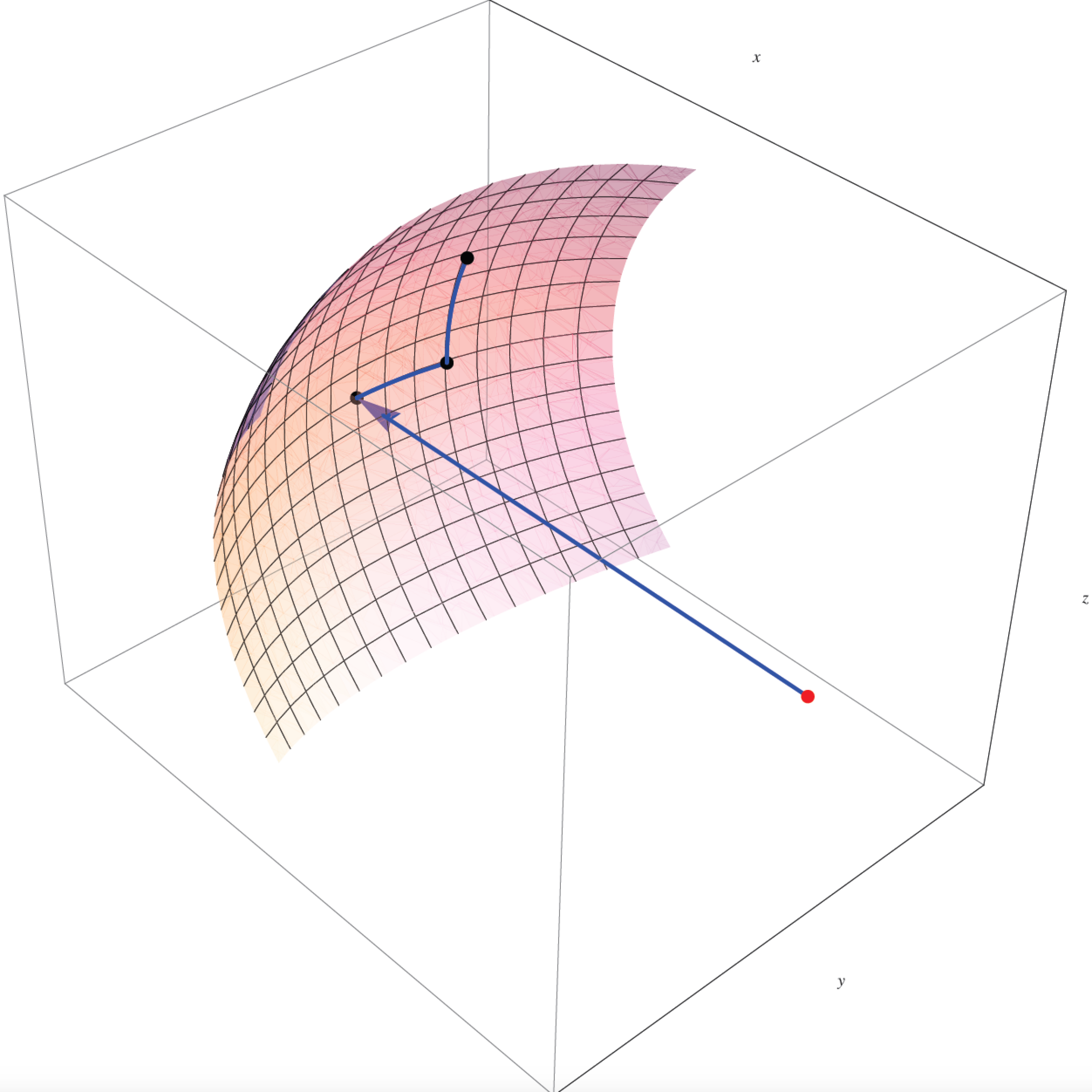}
\caption{The first three optimal coordinates in the ellipsoidal approximation to Prototype 14, Cluster 01, Channel RED, projected against the corresponding two-dimensional surface of the Frobenius Integral Manifold.}
\label{ECoordsOnFM}
\end{center}
\end{figure}

Figure \ref{ECoordsOnFM} shows the optimal coordinate flows for our running example: Prototype 14, Cluster 01, Channel RED. The goal is to approximate the coordinates of the target point on a 48-dimensional Frobenius integral manifold, which is the point in the RED channel of the image in the second row in Figure \ref{ClusterModesP14}. The first three coordinates are:  $\rho$, $\theta^{1}$, $\theta^{2}$.  It is difficult to see this in Figure \ref{ECoordsOnFM}, but the $\theta^{2}$ coordinate curve is tilted slightly inwards, and there is a small gap between the point at the end of this curve and the manifold itself.  As we proceed along the remaining 46 coordinate curves, the gap grows larger, and the coordinates get closer to the target point.  Let's analyze this approximation procedure, quantitatively.  When we first compute the coordinate flows, the best we can do is to use a random ordering of the $\theta$ coordinates for the minimal infinitesimal eigenvectors: $z_{1}, \ldots, z_{47}$.  Once we have a solution, though, we can measure the Riemannian/Euclidean distance along each coordinate curve and identify the 10 coordinates for which this distance is the greatest.  For our running example, these are:
\begin{equation*}
\left(
\begin{array}{cccccccccc}
18 & 45 & 47 & 24 & 11 & 10 & 37 & 34 & 29 & 08
\end{array}
\right)
\end{equation*}
We can then compute the coordinate flows for the minimal infinitesimal eigenvectors again, starting with these 10 coordinates.  The difference is apparent in the following table, where we are displaying the distance in the ambient Euclidean space between specific points on the coordinate curves and the target point on the manifold:
\vspace{1ex}
\begin{center}
\begin{tabular}{r|c|c|c|}
	& 1 Coordinate  & +10 Coordinates   & +37 Coordinates  \\ \hline
Random & 1.033511 &  0.943735 & 0.111027 \\ \hline
Optimal & 1.033556 &  0.608948 & 0.112818 \\ \hline
\end{tabular}
\end{center}
\vspace{1ex}
The distance at the end of coordinate curve $\theta^{1}$ does not change, and the distance at the end of coordinate curve $\theta^{48}$ is almost the same, but there is a big difference at the end of coordinate curve $\theta^{11}$.  Another way to present this result is to compute the total Riemannian/Euclidean distance along the various segments of the coordinate curves, as we did in the MNIST example:
\vspace{1ex}
\begin{center}
\begin{tabular}{r|c|c|c|}
	& 1 Coordinate  & +10 Coordinates   & +37 Coordinates  \\ \hline
Random & 0.148988 &  0.614531 & 2.887779 \\ \hline
Optimal & 0.148755 &  1.558550 & 1.876363 \\ \hline
\end{tabular}
\end{center}
\vspace{1ex}
We can then say that the ratio between the length of the first 10 and the last 37 coordinate segments is 0.212804 with a random ordering and 0.830623 with an optimal ordering of the coordinates. 
 
To continue, we need a new sample of \textsf{RGB} patches and the optimal coordinates for each one.  In the MNIST example, we restricted this new sample to the positive hemisphere, that is, to the half-space that extends from the principal axis to the Euclidean angles $\pi/2$ and $-\pi/2$.  Although we can compute $\Theta$ coordinate curves beyond these points, they become less accurate when the angles become much larger.  If we were developing a production model for an image processing task, the best approach would be to find the antipodal point in the negative hemisphere and construct additional $\Theta$ coordinate curves from there, as we did for the curvilinear Gaussian example in \citep{CCCS_AMAI}, Section 5.2.  For our present purposes, however, we can glean enough information from one hemisphere, and to do more would be duplicative.  We therefore generate 1200 new \textsf{RGB} samples apiece for Prototypes 14 and 32, and restrict these to their positive hemispheres, in two ways:  (i) we can either take the \textit{intersection} of the positive hemispheres for the three \textsf{R/G/B} channels, or (ii) we can take the \textit{union}, and it turns out to be advantageous to compute both and use each one for slightly different purposes.  For Prototype 14, the intersection contains 481 points and the union contains 899 points.   For Prototype 32, the intersection contains 185 points and the union contains 1065 points.  (Note: This comparison shows, incidentally, how the principal axes are splayed out much more in Prototype 32 than in Prototype 14.)
 
Let's now compute the $\Theta$ coordinate clusters, as we did in the MNIST example.  The first fact we discover is that \texttt{FindClusters} in \emph{Mathematica} does not work for the CIFAR-10 data.  The clusters are completely ``flat'' with modes of at most two points.  But it turns out, fortuitously, that we can use the theory of differential similarity to find clusters in the space of optimal coordinates.  Consider a point with a list of 10 optimal coordinates, such as the example above from Prototype 14, Cluster 01, Channel RED.  Write this as a 47-dimensional binary vector encoding the positions of the coordinates:
\begin{equation*}
\left(
\begin{array}{c}
00000001011000000100000100001000010010000000101
\end{array}
\right)
\end{equation*}
Do the same with the GREEN and the BLUE channels, and construct the 141-dimensional product space.  We can now define $\nabla U({\bf x})$ on this space, treating the \textsf{RGB} product as a real-valued vector space with the sample points situated at the corners of a 141-dimensional unit cube.  To find the cluster modes, we follow a variant of the strategy that we developed above to construct the CIFAR-10 prototypes.  To allocate \textsf{RGB} patches to clusters, we compute the Riemannian distance along the integral curve of $\nabla U$ towards each cluster mode, and select the mode that is the shortest distance away.

Within this framework, there are several alternatives.  Let's assume that we want the cluster modes to lie in the positive hemisphere in all three color channels. This means that we need to work with the intersection of our sample points, 481 points for Prototype 14 and 185 points for Prototype 32.  Let's assume that we want six cluster modes for each prototype.  After some experimentation, it is easy to find a set of parameters that produces these results.  For Prototype 14, we run \texttt{pgradascent} with 200 random choices out of 481 data points for \texttt{xstart}, with $\beta = 1/8$, and with a 40 point \texttt{Sample} retrieved by the \texttt{NearestFunction} at each iteration.  For Prototype 32, we run \texttt{pgradascent} with all 185 data points for \texttt{xstart}, with $\beta = 1/8$, and with a 30 point \texttt{Sample} retrieved by the \texttt{NearestFunction} at each iteration.  The filtering strategy is the same for both prototypes:  The outputs from \texttt{pgradascent} are sorted by their estimated probability values, then processed from max probability to min probability, and a mode is filtered out if it lies within one-half of the radius of a 20 point data sphere around one of its sorted predecessors.  Note that the final six modes will be sorted by probability, but they will be continuous-valued, not discrete.  If we want binary-valued outputs, we can set the top ten coordinates in each color channel to 1 and the rest to 0.  The highest probability mode for Prototype 14 is then:
\vspace{1ex}
\begin{equation*}
\left(
\begin{array}{cccccccccc}
08 & 11 & 18 & 23 & 24 & 29 & 34 & 37 & 45 & 47 \\
10 & 12 & 16 & 18 & 19 & 22 & 25 & 32 & 33 & 45 \\
03 & 04 & 05 & 12 & 13 & 23 & 29 & 30 & 40 & 47
\end{array}
\right)
\vspace{1ex}
\end{equation*}
which corresponds to Cluster 01 in Figure \ref{ClusterModesP14}.  Notice, though, that this is not quite the same as our running example, since the coordinates in the RED channel are slightly different. Instead, our running example is the sample point that is the \textit{nearest} to the mode for Cluster 01, and it has the following optimal coordinates: 
\vspace{1ex}
\begin{equation*}
\left(
\begin{array}{cccccccccc}
08 & 10 & 11 & 18 & 24 & 29 & 34 & 37 & 45 & 47 \\
09 & 10 & 16 & 17 & 18 & 19 & 22 & 32 & 33 & 45 \\
02 & 03 & 04 & 05 & 09 & 26 & 29 & 30 & 44 & 47
\end{array}
\right)
\vspace{1ex}
\end{equation*}
This is also the criterion for the selection of the other patches in the first row of Figures \ref{ClusterModesP14} and \ref{ClusterModesP32}. 

When we use the coordinates for Mode 01 to compute distances on the Frobenius integral manifold and its ellipsoidal approximation, they are slightly less than optimal.  For example, in this table:
\vspace{1ex}
\begin{center}
\begin{tabular}{r|c|c|c|}
	& 1 Coordinate  & +10 Coordinates   & +37 Coordinates  \\ \hline
Mode 01 & 1.033560 &  0.639119 & 0.113119 \\ \hline
\end{tabular}
\end{center}
\vspace{1ex}
the distance at the end of coordinate curve $\theta^{11}$ is 0.639119 instead of 0.608948, and in this table:
\vspace{1ex}
\begin{center}
\begin{tabular}{r|c|c|c|}
	& 1 Coordinate  & +10 Coordinates   & +37 Coordinates  \\ \hline
Mode 01 & 0.148732 &  1.506547 & 1.952874 \\ \hline
\end{tabular}
\end{center}
\vspace{1ex}
the ratio between the length of the first 10 and the last 37 coordinate segments is 0.771451 instead of 0.830623. But this is exactly what we want from cluster modes.  We can compute optimal coordinates, as we have seen, but that would give us a different subspace for each sample point, which is not very useful.
What we want is a small number of clusters that provide reasonable approximations to these optimal calculations. 

For the next step, allocating \textsf{RGB} patches to clusters, let's work with the union of our sample points: 899 points for Prototype 14 and 1065 points for Prototype 32.  Given any \textsf{RGB} patch, we need to compute the Riemannian distance of its optimal coordinates along the integral curve of $\nabla U$ towards each of the cluster modes.  For Prototype 14, we set $\beta = 1/16$ and we set $\mathtt{SamplePoints}$ to the 40 points that are the nearest to the cluster mode in the space of optimal coordinates. For Prototype 32, we set $\beta = 1/16$ and we set $\mathtt{SamplePoints}$ to the 30 points that are the nearest to the cluster mode in the space of optimal coordinates.  The patch is then assigned to the mode from which the Riemannian distance is the least.  Note that these calculations are using distances and probabilities \textit{in the space of optimal coordinates}, not in the original image space.  The sizes of the resulting clusters are shown in Table \ref{ClusterSizeP14P32}. 

\begin{table}[htbp]
\caption{Number of data points in the $\Theta$-Coordinate Clusters from Prototype 14 (see Figure \ref{ClusterModesP14}) and Prototype 32 (see Figure \ref{ClusterModesP32}).}
\begin{center}
\begin{tabular}{r|c|c|c|c|c|c|c}
        & \multicolumn{6}{c|}{$\Theta$-Coordinate Cluster} & \\ \cline{2-7}
	& 1 & 2 & 3  & 4 & 5 & 6  & Total \\ \hline
 Prototype 14 &  \, 149 \, & \, 180 \, & \, 127 \, & \, 108 \, & \, 121 \, & \, 214 \, & \, 899 \, \\ \hline
 Prototype 32 & 217 & 99 & 135 & 172 & 223 & 219 & 1065 \\ \hline                                                
\end{tabular}
\end{center}
\label{ClusterSizeP14P32}
\end{table}

Do these clusters, in fact, provide reasonable approximations to the optimal calculations in the original image space? Using standard statistical tests, we can show that, within each cluster, the Riemannian/Euclidean distance along the first 10 coordinates is greater (statistically significant in most cases\footnote{For Prototype 32, there are three exceptions with $0.1 > p > 0.001$ and one exception with $p > 0.1$.  Total cases: 36 $=$ 2 prototypes $\times$ 6 clusters $\times$ \textsf{R/G/B}.} at $p =0.001$) and the Riemannian/Euclidean distance along the remaining coordinates is less (statistically significant in most cases\footnote{For Prototype 14, there is one exception with $p > 0.1$. For Prototype 32, there are six exceptions with $0.1 > p > 0.01$ and four exceptions with $p > 0.1$.  Total cases: 36 $=$ 2 prototypes $\times$ 6 clusters $\times$ \textsf{R/G/B}.} at $p =0.01$) than the corresponding distances for points outside the cluster.  For an indication of the magnitude of these differences, see Table \ref{ErrorRatiosP14P32}.  The numbers in this table are the ratios between the mean values of the Riemannian/Euclidean distances along the first 10 coordinates and the mean values  of the Riemannian/Euclidean distances along the remaining coordinates, as developed in the MNIST example.  For the column labelled \textit{Optimal}, the distances are computed using the optimal coordinates for every data point in every cluster, but separately for each color channel.  This column thus sets the scale for the highest possible ratios in each case.\footnote{\label{fnOptimal}As explained in footnote \ref{fnComplex}, Prototype 14, GREEN, has only 45 coordinate curves for the minimal infinitesimal eigenvectors, and Prototype 32, RED and BLUE, has only 43. Thus the ratios in these cases are artificially high.} In the next two columns, the distances are computed using the coordinates for each cluster, in two different ways.  For the column labelled \textit{Within}, the mean values are computed for the data points within each cluster, using just the coordinates for that cluster mode.  For the column labelled \textit{Across}, the mean values are computed for all the data points using the coordinates for all the cluster modes.  Comparing the ratios in these two columns to the ratios for the optimal coordinates, there is a consistent pattern for the two prototypes and the three color channels.  The evidence therefore supports the claim that these are reasonable approximations. 

\begin{table}[htbp]
\caption{Riemannian distance ratios along the surface of the ellipsoidal approximations for the six $\Theta$-Coordinate Clusters from Prototype 14 and Prototype 32.}
\begin{center}
\begin{tabular}{r|c|c|c|c|c|}
        \multicolumn{4}{c}{ }   & \multicolumn{2}{|c|}{$\Theta$-Coordinate Clusters}  \\ \cline{5-6}

	& N &  & Optimal  & Within    & Across \\ \hline

                       &            & R & \; 0.800961 \; & \; 0.385669 \; & \; 0.303989 \; \\ 
 Prototype 14  &  899   & G & 0.863663 & 0.426305 & 0.324552 \\
                       &            & B & 0.802143 & 0.379088 & 0.294074 \\ \hline

                       &           & R & 0.954155 & 0.393014 & 0.331195 \\ 
 Prototype 32 & 1065  & G & 0.831086 & 0.339646 & 0.292296 \\
                      &           & B & 0.958359 & 0.388698 & 0.336805 \\ \hline      
                                           
\end{tabular}
\end{center}
\label{ErrorRatiosP14P32}
\end{table}

The ellipsoidal approximation played a central role in the preceding analysis, but it actually has two somewhat different roles to play in the theory of differential similarity. First, it could be a true approximation, because we are primarily interested in finding points on the Frobenius integral manifold that satisfy various conditions.  Second, it could be a target for the mapping to a lower-dimensional subspace, regardless of its relationship to the manifold. 
 
For an example of the first role, consider again our running example:  Prototype 14, Cluster 01, Channel RED.  Using the coordinates for Mode 01, we saw that the distance to the target point on the Frobenius integral manifold from the point on the ellipsoidal approximation at the end of coordinate curve $\theta^{11}$ was 0.639119.  We would like to transfer this result to the manifold itself.  The first step is to solve Equations \eqref{TransverseCurves}, the integral equations for the coordinate flows on the manifold, one segment at a time, matching the flows on the ellipsoid.  For example, if ${\bf x}_{0}$ is the initial point on the principal axis, we start by solving for $\vec{\theta}^{\,1}_{t_{1}}({\bf x}_{0})$ and then computing the value of the parameter,  $t_{1}$, that minimizes the distance from $\vec{\theta}^{\,1}_{t_{1}}({\bf x}_{0})$ to the end of coordinate curve $\theta^{1}$, calling this the \texttt{baseparameter}.  Continuing in the same manner, we compute the parameters for $\vec{\theta}^{\,2}_{t_{2}}({\bf x}_{1}), \vec{\theta}^{\,3}_{t_{3}}({\bf x}_{2}), \ldots , \vec{\theta}^{\,11}_{t_{11}}({\bf x}_{10})$, calling these the \texttt{flowparameters}.  These first steps are packaged into a function called \texttt{InitializeFlowParameters}.  We now have a sequence of coordinate points on the manifold, but the end result is actually further away from the target point, by a small amount: 0.673622.  The next step is to apply a function called \texttt{UpdateFlowParameters}, and to do so iteratively.  This function starts with solutions $\vec{\theta}^{\,1}_{t_{1}}({\bf x}_{0}), \vec{\theta}^{\,2}_{t_{2}}({\bf x}_{1}), \ldots , \vec{\theta}^{\,11}_{t_{11}}({\bf x}_{10})$, for a fixed sequence of starting points, ${\bf x}_{0}, {\bf x}_{1}, \ldots, {\bf x}_{10}$, and computes the \texttt{baseparameter} and the \texttt{flowparameters} for a modified formula that minimizes the distance to the target point, but takes the coordinate curves off the manifold.  This is not a problem, however, because we can now use the new \texttt{baseparameter} and \texttt{flowparameters} to solve the integral equations again for the coordinate flows.  After several iterations of \texttt{UpdateFlowParameters}, our algorithm converges to a point on the manifold at a distance of 0.442212 from the target point.\footnote{\label{fnConvergence}Conditions for the convergence of this algorithm are still open, and need to be investigated further.}  This is the point that produces the RED channel in Cluster 01, in the third row of Figure \ref{ClusterModesP14}.  Similar calculations produce the GREEN and the BLUE channels in Cluster 01, as well as the  other images in the third row of Figures \ref{ClusterModesP14} and \ref{ClusterModesP32}. 

The reader has probably noticed that there are high frequency variations in the color patterns in the third row of Figures \ref{ClusterModesP14} and \ref{ClusterModesP32}.  The explanation is simple:  We are looking at an 11-dimensional submanifold in each color channel, with the remaining coordinates truncated and their values set to 0.0.  Since the point of truncation is essentially random, and different for each color, we see random color variations.  It is natural to refer to this phenomenon as ``truncation noise.''  However, this is just a problem with the presentation of the submanifold when it is embedded in the original 147-dimensional space, and if we could smooth out the noise we would see that the global shape and the overall color balance of the \textsf{RGB} patches has been preserved.  There are several ways to do this.  For example, one cheap cosmetic trick is to replace the truncated coordinate values with the corresponding coordinate values from the sample point nearest to the cluster mode, that is, from the first row of Figures \ref{ClusterModesP14} and \ref{ClusterModesP32}.  


In the second role for the ellipsoidal approximation, however, we don't care about the mapping back to the Frobenius integral manifold because we are only interested in the mapping forward. At this point, we have three 12-dimensional manifolds defined by a $\rho$, $\Theta$, coordinate system, and we want to apply the theory of differential similarity again to construct a lower-dimensional submanifold of the 36-dimensional product manifold. But to do this, we need to work with sample data points in a Euclidean space, for two reasons: (i) the fundamental equation for Brownian motion with a drift term, Equation \eqref{BMwithDrift}, is only valid in a Euclidean space with a Cartesian coordinate system;  and (ii) the mean shift algorithm, Equation \eqref{EstimateGradU} and Figure \ref{Kernel}, only works in a Euclidean space with a Cartesian coordinate system.  The ellipsoidal approximation gives us exactly what we want, in the simplest possible form.  Although the coordinate flows are defined on the surface of an ellipsoid, they are expressed in Cartesian coordinates: $(x, y, z_{1}, \ldots, z_{47})$, initially, but then truncated to $(x, y, z_{1}, \ldots, z_{10})$. In addition, the product manifold is just the ordinary Cartesian product.

Let's see how this works for the data points in the $\Theta$ coordinate clusters in Table \ref{ClusterSizeP14P32}. We start with a 36-dimensional Cartesian coordinate system and our goal is to reduce it to 12 dimensions. For each of the 2 $\times$ 6 clusters, we proceed through the same steps we have seen several times before:  find a prototype, choose a coordinate sphere, locate the principal axis and define the $\rho$ coordinate curves, compute the maximal and minimal infinitesimal eigenvectors, compute the geodesics for the $\theta$ coordinate curves, define the $\theta$ coordinate flows, and compute the optimal ellipsoidal approximation for the coordinates on the Frobenius integral manifold that are determined by the minimal infinitesimal eigenvectors.  The results shown in Table \ref{Optimal10D} are analogous to the \textit{Optimal} results in Table \ref{ErrorRatiosP14P32}:  These are the ratios between the mean values of the Riemannian/Euclidean distances along the 10 optimal coordinates and the mean values of the Riemannian/Euclidean distances along the remaining coordinates.\footnote{These numbers are higher than the numbers in Table \ref{ErrorRatiosP14P32} because there are 36 coordinates in this case, in total, and we are computing the ratio between the optimal 10 coordinates for the minimal infinitesimal eigenvectors and the remaining 24 coordinates.  See also footnote \ref{fnOptimal}.  We can rescale the denominator in these ratios by $37/24$ to get an approximate comparison.  For example, in Prototype 14, Cluster 01, the ratio 1.29390 would be rescaled to 0.839284.}  

\begin{table}[htbp]
\caption{Riemannian distance ratios for the optimal 10-dimensional coordinates in the $\Theta$-Coordinate Clusters from Prototype 14 and Prototype 32.}
\begin{center}
\begin{tabular}{r|c|c|c|c|c|c|}
        & \multicolumn{6}{c|}{$\Theta$-Coordinate Cluster} \\ \cline{2-7}
	& 1 & 2 & 3  & 4 & 5 & 6  \\ \hline
 Prototype 14 &  1.29390 &  1.16251 &  1.31145  &  1.33407  &  1.29024 &  1.25100   \\ \hline
 Prototype 32 & 1.35475 & 1.16359 & 1.08401 & 1.57556 & 1.19799 & 1.31949  \\ \hline                                                
\end{tabular}
\end{center}
\label{Optimal10D}
\end{table}


We can now process any arbitrary 7$\times$7  \textsf{RGB} patches through these coordinate systems.  As an illustration, let's look at two more random points from Prototype 14 and Prototype 32.  See Figure \ref{FinalImagesP14P32}, first column. (Note: For each prototype, we generated 100 additional random patches from the 8,000 point Data Sphere, filtered out all patches with a color channel in the negative hemisphere, and selected the one located at the greatest distance from the prototype.) We have seen columns two and three before, in other examples.  For the fourth column, for each prototype, we have computed the Riemannian distance in the space of optimal coordinates to the modes of all six clusters, and selected the nearest one, which happens to be Mode 04 in each case.  Finally, for the fifth column, we computed the coordinates of the data point in the optimal 12-dimensional ellipsoidal approximation for Cluster 04, as described above.  Notice that there is a slight smoothing of the truncation noise in these final \texttt{12D} images.  

\begin{figure}[htbp]
\begin{center}
\includegraphics[width=5.0in]{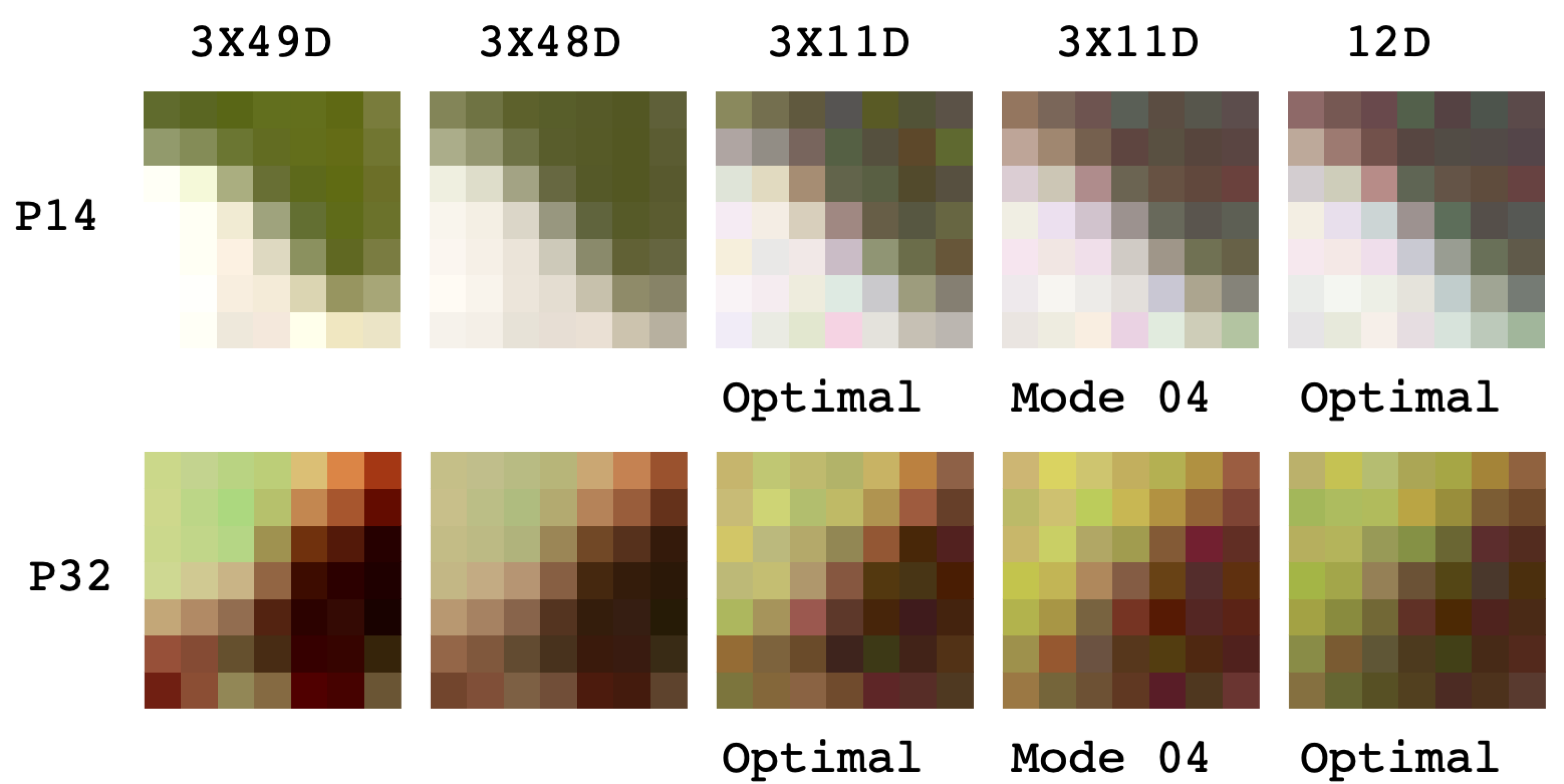}
\caption{Mapping patches within the Data Sphere of Prototypes 14 and 32 from the original 147-dimensional space to an optimal 12-dimensional submanifold.}
\label{FinalImagesP14P32}
\end{center}
\end{figure}

Figure \ref{FinalImagesP14P32} also illustrates how the theory of differential similarity can be implemented recursively.  The base step starts with a random sample of data points and constructs an optimal 12-dimensional submanifold in each \textsf{R/G/B} color channel.  The first recursive step for product manifolds starts with another random sample of data points, and sends it through the pipeline to construct an optimal 12-dimensional submanifold in each $\Theta$ coordinate cluster.  The next step would repeat this process for the 14$\times$14 sample space, and so on.  We will not pursue these subsequent steps in the present paper, but the reader should be able to anticipate roughly how they would operate.  


\section{Future Work.}
\label{FutureWork}
 
The motivating principle in the theory of differential similarity is simple:  We want to construct a geometric model that can be tuned to match the probabilistic model, in any given domain. 
 
This is easy to do for the three-dimensional examples in \citep{CCCS_AMAI}. The origin of the coordinate system is the mode of the probability density; the principal axis is a $\rho$ coordinate curve with a minimal Riemannian length, measured inwards from a fixed Euclidean distance; the Frobenius integral manifold is a surface of constant probability density, at a fixed Riemannian distance from the origin, parameterized by a pair of $\Theta$ coordinate curves. We saw in the present paper that the geometric models for the higher-dimensional MNIST and CIFAR-10 examples are exactly the same, but with many more $\Theta$ coordinate curves. Most significantly, as shown in Table \ref{rhoDistances} and Figure \ref{3DVisualization}, the data points on the Frobenius integral manifold for MNIST are all (approximately) the same Riemannian distance from the origin, even though their Euclidean distances vary substantially.  This is a consequence of Theorems \ref{ConstantRho} and \ref{ConstantRiemannian} in Section \ref{GeomM}, which hold for any $n$-dimensional Euclidean space.
   
When we proceed to the construction of a lower dimensional subspace, however, the three-dimensional cases and the higher dimensional cases begin to diverge. First, in the three-dimensional case, we rotated the maximal infinitesimal eigenvector so that it was aligned with the maximal geodesic over a finite distance, and used this geodesic as the first of our $\Theta$ coordinates.  In the higher dimensional case, this calculation was too complex, so we just used the infinitesimal eigenvector directly.  Second, trying to define an optimal lower dimensional subspace in the three-dimensional case, we observed a conflict between maximizing the ``variance'' and minimizing the ``reconstruction error,'' and we opted for the former.  In the higher dimensional case, our measures of ``variance'' and ``reconstruction error'' were consistent, and we were able to show in Figure \ref{EigMaxProjections} that the projection of data points onto the $\Theta$ coordinate curve for the maximal infinitesimal eigenvector was the optimal choice for the first $(\rho, \theta^{1})$ coordinates.  For the remaining coordinates, $\{ \phi^{j} \mid j = 1, 47 \}$, we noted that our data points fell into empirical clusters, shown in Figure \ref{ClusterModes} for MNIST and in Figures \ref{ClusterModesP14} and \ref{ClusterModesP32} for CIFAR-10.  These clusters are encoded using small subsets of the $\Theta$ coordinates, fixed arbitrarily at ten dimensions, so that each cluster is represented by a distinct direction from the prototype in the high dimensional image space. 
  
To compute a representation of the $\Theta$ coordinate clusters, it was necessary to develop an extension of the basic theory.  First, we showed that it is possible to compute the optimal coordinates for each data point.  All we need to do is to measure the Riemannian/Euclidean distance along each segment of the coordinate flows to the target point on the Frobenius integral manifold, and then choose the ten segments with the maximal length.  Second, we showed how to search for clusters in the space of optimal coordinates.  In the MNIST case, we could do this using \texttt{FindClusters} in \emph{Mathematica}, but in the CIFAR-10 case, \texttt{FindClusters} did not work, and we computed clusters using a novel adaptation of the theory of differential similarity.  The basic idea is to encode the optimal coordinates in a binary vector, and then define $\nabla U({\bf x})$ on a real-valued vector space, with the binary sample points situated at the corners of a unit cube in that space.  It is remarkable that this idea works, since we are mapping a discrete search problem into a continuous space, using $\nabla U({\bf x})$ to compute the modes of the probability density in that space, which is a continuous computation, and then truncating the solutions back to a set of binary vectors.  It would be interesting to investigate this idea further, in different but related contexts. 

There were two more additions to the basic theory in the discussion of CIFAR-10 in Section \ref{AppCIFAR10}: quotient manifolds and product manifolds. Quotient manifolds were used to build invariance into the geometric model, consistent with known properties of the probabilistic model.   The basic idea is to form a quotient space over the space of images, and then work with a representative of this quotient in our probabilistic calculations.  This approach should be studied further, and compared to similar approaches in papers such as \citep{ANSELMI2016112, pmlr-v48-cohenc16}.  Product manifolds were used to combine low-dimensional solutions into a higher dimensional problem space, so that our dimensionality reduction techniques could be applied recursively.  The main example in Section \ref{AppCIFAR10} was the 147-dimensional \textsf{RGB} manifold constructed out of three 49-dimensional \textsf{R/G/B} manifolds.  It is important to understand how our dimensionality reduction techniques, such as the search for $\Theta$ coordinate clusters, as discussed in the previous paragraph, are configured to work with product manifolds.  Recall that we computed the $\Theta$ coordinate clusters and the Riemannian distances within them in a 141-dimensional product space constructed from the optimal coordinates for the \textsf{RED}, \textsf{GREEN}, \textsf{BLUE}, channels \textit{in combination}, not separately.  If we had instead computed $6$ clusters in each channel, we would have to consider $6^{3}$ clusters in the product manifold, and perhaps estimate the probabilities of each combination in order to decide which ones to retain and which ones to discard.  The point of any dimensionality reduction technique is that we should expect to lose information, but we want (i) the simplest coding scheme that (ii) preserves as much information as possible.  For these reasons, in a coding scheme that replaces optimal coordinates with cluster modes, it is better to compute the cluster modes in the product manifold than in the separate factor manifolds. 

The same strategy can be applied to the 14$\times$14 sample space in Figure \ref{ArchitectureDL}.  If we computed, say, 40 prototypes in the 7$\times$7 sample space, we would have to consider $40^{4} = 2,560,000$ prototypical clusters in the product manifold.  Instead, we can compute 40, 50, 60, $\ldots$, prototypes directly in the 2$\times$2 product space.  We can then apply the same strategies that we used for the \textsf{R/G/B} product manifolds in the CIFAR-10 example to construct a small number of $\Theta$ coordinate clusters for each prototype.  Moreover, any prototype from the 7$\times$7 sample space that does not appear in the 2$\times$2 product space can be pruned from our construction, and this process of proliferating and pruning product manifolds can be continued, recursively, up through the hierarchy in Figure \ref{ArchitectureDL} and similar architectures.  Our initial results on the MNIST dataset show that this idea works, but we cannot yet say how well it works.  There are several hyperparameters that have to be set: the constant $\beta$, the size of $\mathtt{SamplePoints}$, the size of the Data Sphere and the Coordinate Sphere, etc., and these values can vary as we move through the architecture.  The architecture itself is a variable:  the size of the patches, the configuration of the product manifolds, the number of prototypes, the number of optimal $\Theta$ coordinates, etc., all have an impact on the outcomes.  These are empirical questions, which are currently under investigation.  When we have fully analyzed the experimental results, we will publish them in a paper with the working title: ``Deep Learning with a Riemannian Dissimilarity Metric.'' 
 
There are several open mathematical questions about the Riemannian manifolds that we have been studying in this paper.  We have seen two such questions so far:   (1) At the end of Section \ref{GeomM}, we discussed the complexity of rotating and optimizing the maximal infinitesimal eigenvectors in a high dimensional space.  In our naive algorithm, each step in the search for an optimal rotation requires the numerical solution of the Euler-Lagrange equations for the geodesic coordinate curves.  Is it possible to solve these equations just once and perturb the solution to find the optimal rotation?  If not, can we show that the rotation is unnecessary, because the maximal infinitesimal eigenvector is already a good approximation to the optimal value?  Recall that this latter alternative was the pragmatic choice that we made in the present paper.  (2) In Sections \ref{AppMNIST} and \ref{AppCIFAR10}, we introduced an ellipsoidal approximation to reduce the complexity of finding an optimal solution to the  flow equations in \eqref{TransverseCurves}.  One open question, mentioned in footnote \ref{fnConvergence}, is to find conditions for the convergence of the algorithm that we developed to map the optimal sequence of transverse coordinate curves from the ellipsoidal surface back to the Frobenius integral manifold.  But another possibility would be to incorporate the ellipsoidal approximation into the calculations on the manifold itself, so that we can solve this problem in one step rather than two. There is now a substantial literature on optimization techniques for Riemannian manifolds, see, e.g., \citep{boumal2023intromanifolds}, and a rapidly growing literature on data analysis in high dimensional spaces, see, e.g., \citep{vershynin2018}.To advance the theory of differential similarity, we need to merge these two topics and develop techniques for the optimal analysis of data on high dimensional manifolds.  

The MNIST dataset has often been used as a testbed for experiments on unsupervised learning.  Much of the recent work on this problem has been based on deep generative models implemented with neural networks, in particular, either Variational Auto-Encoders (VAEs) \citep{Kingma2014, pmlr-v32-rezende14}, Generative Adversarial Networks (GANs) \citep{GoodfellowGANs2014}, or a combination of the two \citep{MakhzaniAAE2016, MeschederAVB2017}.  In the VAE framework, for example, if $\mathbf{z}$ is a latent variable (also called a ``code'') and $\mathbf{x}$ is a data point, then the \textit{generator} (also called the ``decoder'') is the probability $p( \mathbf{x} | \mathbf{z} ; \theta )$ with prior $p( \mathbf{z} ; \theta )$, which is usually parameterized by a neural network.  The true posterior, $p( \mathbf{z} | \mathbf{x} ; \theta )$, is assumed to be intractable, and so it is approximated by the \textit{recognizer} $q( \mathbf{z} | \mathbf{x} ; \phi )$ (also called the ``encoder''), which is itself parameterized by a neural network. The parameters $\phi$ and $\theta$ are estimated by maximizing a lower bound on the log-likelihood of the data, a process called \textit{variational inference}.  If we can learn these conditional probabilities, then there are several ways to build a \textit{semi}-supervised classifier.  For an example of how this works on the MNIST dataset with only 100 labelled examples, see \citep{Kingma&RezendeNIPS2014}.  The pure GAN framework does not have a recognizer, but the generator, $p( \mathbf{x} | \mathbf{z} ; \theta )$, is trained with an auxiliary discriminative network acting as an adversary.  One way to use a GAN for unsupervised learning is to add a term to the loss function for the auxiliary discriminative network that maximizes the mutual information between the data distribution and the predicted class distribution, in an attempt to \textit{disentangle} the code for the classes. See, e.g., \citep{springenberg2016iclr, ChenInfoGAN_NIPS2016}.  Although deep generative models have achieved good results for unsupervised learning on MNIST, they do not seem to work as well on natural images.  On CIFAR-10, for example, the best current results for unsupervised learning are instead based on a variety of ad hoc methods: \citep{HanPPKCeccv2020, VanGansbekeECCV2020, Park0KKPHCcvpr2021}. For a general critique of almost all recent work on the use of deep generative models to disentangle the \textit{factors of variation} in a dataset of images, without supervision, see \citep{pmlr-v97-locatello19a}. 
  
In Section 1, \textit{supra}, we mentioned an earlier class of auto-encoders, known generically as Regularized Auto-Encoders.  In particular, we noted that Denoising Auto-Encoders (DAEs) \citep{Vincent:2011} and a specialized form of Contractive Auto-Encoders (CAEs) \citep{Alain&Bengio:2012} can be shown to compute the gradient of the log of the input probability density \citep{Bengio_etal:2013}.  In other words, these are network architectures that can be trained to compute the vector field  $\nabla U({\bf x})$.  This correspondence suggests several questions for further investigation:
\begin{enumerate}
\vspace{1ex}
\item The earliest papers on Denoising Auto-Encoders, \citep{Vincent_etal-ICML2008} and  \citep{Vincent_etal-JMLR2010}, offer an intuitive justification of the DAE algorithm in terms of the manifold hypothesis.  Consider Figure 2 in these papers, and the following informal explanation:

\begin{quotation}
\vspace{1ex}
\noindent
During denoising training, we learn a stochastic operator $p( X | \tilde{X} )$ that maps a corrupted $\tilde{X}$ back to its uncorrupted $X$ \ldots .  Corrupted examples are much more likely to be outside and farther from the manifold than the uncorrupted ones. Thus stochastic operator $p( X | \tilde{X} )$ learns a map that tends to go from lower probability points $\tilde{X}$ to nearby high probability points $X$, on or near the manifold. \citep{Vincent_etal-JMLR2010}, at 3380.
\end{quotation} 
\vspace{1ex}
Similar justifications are offered in \citep{Rifai2012AGP} to explain how Contractive Auto-Encoders tend to generate samples along a data manifold.  Can these explanations be formalized within the theory of differential similarity, using Brownian motion with the drift term $ \nabla U(\mathbf{x}) \cdot \nabla $ as a model?  See Section 2.3 in \citep{CCCS_AMAI}, and see the derivations in Section 6 of \citep{CCCS_AMAI}. 
\vspace{1ex}

\item In another theoretical perspective on Denoising Auto-Encoders, \citep{Vincent_etal-ICML2008} shows how to derive the DAE training criterion by maximizing a variational bound on a generative model that includes the corruption of the data: $ X \rightarrow \tilde{X} $.  This paper thus anticipates some of the later work on Variational Auto-Encoders \citep{Kingma2014, pmlr-v32-rezende14}. The theory of differential similarity does not use a neural network, of course, but we are computing in our geometric model a low-dimensional code $\mathbf{z} = r(\mathbf{x})$ for the data point $\mathbf{x}$, with some probability density $q( \mathbf{z} | \mathbf{x} )$.  It would be interesting to compare these architectures, layer by layer.  
\vspace{1ex}

\item \citet{Vincent:2011} reported that there was an ``unsuspected link'' between the training of a Denoising Auto-Encoder and the \textit{score matching} technique \citep{ScoreMatchingJMLR2005, ExtensionsofScoreMatchingCSDA2007} for learning the parameters of an unnormalized density model over continuous-valued data.  This result was later generalized in \citep{Alain&Bengio:2012} to a specialized form of Contractive Auto-Encoder. What Hyv\"{a}rinen refers to as the ``score'' is just the vector field $\nabla U({\bf x})$.  Using his score matching technique, Hyv\"{a}rinen avoids estimating $\nabla U({\bf x})$ directly. We have taken the opposite approach here.  We use the Frobenius integral manifold as a model of the data distribution, and this gives us an interesting compromise between traditional parametric and traditional nonparametric statistical estimation.  Note that the Frobenius integral manifold can represent almost any probability density, subject to mild regularity conditions, but its shape is strictly determined once we specify an orthonormal set of infinitesimal eigenvectors at a point along the principal axis.  This appears to be a novel statistical estimation technique, but it needs to be studied more carefully.  
\vspace{0.1ex}

\item We have used very small sample sizes in our work on the MNIST and CIFAR-10 datasets, as compared to the sample sizes needed to train neural networks.  See, for example, the size of the $\Theta$ coordinate clusters in Table \ref{ClusterSizeP14P32}.  However, empirically, these small samples seem to yield consistent and stable estimates for the $\rho$  and $\Theta$ coordinate curves, and it is important to understand, theoretically, why this is the case.  A significant step in this direction is the work of \citet{Arias-Castro_etal:2016} on the consistency and stability of the mean shift algorithm, which answers some of our questions about the $\rho$ coordinate curves.  Similar results for the $\Theta$ coordinate curves would be very useful. 

\end{enumerate}
\vspace{1ex}
Note that the $\rho$, $\Theta$, coordinate system imposes a strong inductive bias on both the model and the data, as advocated in \citep{pmlr-v97-locatello19a}, and this bias can be manipulated systematically by choosing different quotient manifolds and product manifolds in different domains. 
 
There are two popular dimensionality reduction algorithms that compute and optimize probability distributions on low-dimensional manifolds: (i) SNE or t-SNE, and (ii) UMAP.  Since these algorithms are used primarily for data visualization, the embedding spaces are usually only two- or three-dimensional.  Stochastic Neighbor Embedding (SNE) \citep{HintonRoweis2002} centers a Gaussian at each data point $\mathbf{x}_{i}$ in the original high dimensional space, and also at each data point $\mathbf{y}_{i}$ in the low dimensional space, and uses the conditional probabilities derived from these Gaussians to measure the similarities of $\mathbf{x}_{i}$ and $\mathbf{y}_{i}$ to each of their $k$ neighbors. The goal is to adjust the mapping $\{\mathbf{x}_{i}\} \mapsto \{\mathbf{y}_{i}\}$ so that the mismatch between the probability distributions in the high and low dimensional spaces is as small as possible, and this is achieved by minimizing the sums of the Kullback-Leibler divergences over all data points. An improvement of this algorithm, called t-SNE \citep{van2008visualizing}, replaces the Gaussian in the low dimensional space by a Student t-distribution with one degree of freedom, because it has much heavier tails.  This is all somewhat ad hoc, and the Uniform Manifold Approximation and Projection (UMAP) algorithm \citep{mcinnes2020umap} achieves similar and arguably better results with a rigorous mathematical foundation. The basic idea of UMAP is to rescale the metric at each point $\mathbf{x}_{i}$ in the high dimensional space, in order to approximate a uniform probability density in that space, and then to glue these incompatible metric spaces together, using a clever construction from algebraic topology and category theory. The optimization algorithm can then minimize the cross-entropy between the topological representation of the data in the high dimensional space and the corresponding topological representation in its low dimensional projection.  Here is a comment by the authors on their methodology:
\begin{quotation}
\vspace{1ex}
\noindent
\ldots [W]e feel that strong theory and mathematically justified algorithmic decisions are of particular importance in the field of unsupervised learning. This is, at least partially, due to a plethora of proposed objective functions within the area.  \ldots  UMAP's design decisions were all grounded in a solid theoretic foundation and not derived through experimentation with any particular task focused objective function. \citep{mcinnes2020umap} at 2--3.
\end{quotation}
\vspace{1ex}
Although the goals are quite different, the theory of differential similarity also has a solid mathematical foundation, drawn from the fields of stochastic processes and differential geometry.  See \citep{CCCS_AMAI}. 
   
There is one operation that appears repeatedly in the diagram in Figure \ref{ArchitectureDL}:  We construct a product manifold consisting of four prototypical clusters, and then construct a submanifold which is itself a prototypical cluster.  We can now take a big step:  We can use this construction to define the semantics of an \textit{atomic formula} in a logical language, that is, to define a \textit{predicate} with four \textit{arguments}.  The general idea is to replace the standard semantics of classical logic, based on sets and their elements, with a semantics based on manifolds and their points.  For a detailed discussion of how this works, see \citep{mccarty15ground}.  (These ideas emerge naturally from earlier work on the applications of AI to Law.)  The natural setting for these developments is a logical language based on category  theory,  or  what  is  known  as  a  \textit{categorical  logic.}  Thus, in \citep{mccarty15ground}, we analyze a categorical logic based on the category of differential manifolds ({\bf Man}), which is weaker than a logic based on the category of sets ({\bf Set}) or the category of topological spaces ({\bf Top}). Let's call this a \textit{manifold logic.}  In a manifold logic, as explained above, the prototypical clusters provide an interpretation of the atomic formulas, and the proof theory extends the differential manifold structure throughout the entire language.  A technical paper addressed to the computational logic community on these ideas is currently in preparation, with the working title: ``Manifold Logic and the Theory of Differential Similarity.''

 

\bibliography{HiDimSpaces.jmlr}
 
\end{document}